\documentclass{article}

\usepackage{arxiv}
\usepackage[numbers, compress]{natbib}
\usepackage[utf8]{inputenc} % allow utf-8 input
\usepackage[T1]{fontenc}    % use 8-bit T1 fonts
\usepackage{hyperref}       % hyperlinks
\usepackage{url}            % simple URL typesetting
\usepackage{booktabs}       % professional-quality tables
\usepackage{amsfonts}       % blackboard math symbols
\usepackage{nicefrac}       % compact symbols for 1/2, etc.
\usepackage{microtype}      % microtypography
\usepackage{xcolor}         % colors
\usepackage{wrapfig}

\usepackage{sparklines}
% next two are for placing 3 figures in one row
\usepackage[export]{adjustbox}
\usepackage{xspace}
% BEGIN commands for REVIEW
\usepackage{soul}
\usepackage{marginnote}
\usepackage{enumitem}
\usepackage{setspace}
\usepackage{multirow}
\usepackage{multicol}
\usepackage{circledsteps}
\usepackage{amsmath,amsthm} 
\usepackage{latexsym}
\usepackage{epic}
\usepackage{epsfig}
\usepackage{verbatim}
\usepackage[justification=centering]{caption}
\usepackage{enumitem}
\usepackage{color}
\allowdisplaybreaks[1]
\usepackage{algorithm}
\usepackage{algorithmic}
\usepackage{setspace}
\usepackage{booktabs}
\usepackage{tikz} 
\usepackage{bbm}
\usepackage{mathrsfs}
\usepackage{caption}
\usepackage{subcaption}

\newcommand{\tinyskip}{\vspace{3pt}}
\newcommand{\mypar}[1]{\tinyskip\noindent\textbf{#1.}\xspace}

\newenvironment{myitemize}{%
\begin{itemize}[leftmargin=1em, itemsep=.1em, parsep=.1em, topsep=.1em,
    partopsep=.1em]}
{\end{itemize}}

\newenvironment{myenumerate}{%
\begin{enumerate}[leftmargin=1em, itemsep=.1em, parsep=.1em, topsep=.1em,
    partopsep=.1em]}
{\end{enumerate}}

\newenvironment{structure*}{\color{blue}\begin{myenumerate}}{\end{myenumerate}}

% REVIEW COMMANDS 
% Switch the alternatives to activate de-activate the review highlighting
\sethlcolor{yellow}
%\renewcommand*{\raggedrightmarginnote}{\centering}
%\renewcommand*{\raggedleftmarginnote}{\centering}

%\newcommand{\update}[3][0em]{\todo{\textbf{#2}}[#1]\hl{#3}}
%\newcommand{\update}[3][0em]{#3}

%\newcommand{\updateL}[3][0em]{#3}

% to fix the huge spacing of this stupid template
% \setlength{\dbltextfloatsep}{1pt}  % for double column tables
% \setlength{\textfloatsep}{1pt}
% \setlength{\intextsep}{1pt}
% \setlength{\dblfloatsep}{1pt}

% split lines and format within cells of tables

\hyphenation{da-ta-sets}
\hyphenation{un-cer-ta-in-ty}
\hyphenation{un-cer-ta-in-ty--shi-eld}
\hyphenation{me-thod}
\hyphenation{cons-traint}
\hyphenation{cons-traints}
\hyphenation{attribute-cons-traints}
\hyphenation{attribute-constra-ints}
\hyphenation{ple-tho-ra}

\setlength{\dbltextfloatsep}{2pt}  % for double column tables
\setlength{\textfloatsep}{2pt}
\setlength{\intextsep}{2pt}
\setlength{\dblfloatsep}{2pt}

\DeclareMathOperator*{\argmax}{argmax}

\newcommand\x[0]{\boldsymbol{x}}
\newcommand\y[0]{\boldsymbol{y}}
\newcommand\z[0]{\boldsymbol{z}}

\usepackage{sidecap}
\usepackage{wrapfig}

\newtheorem{theorem}{Theorem}[section]

\newtheorem{lemma}[theorem]{Lemma}
\newtheorem{proposition}[theorem]{Proposition}

\newtheorem{definition}[theorem]{Definition}
\newtheorem{remark}[theorem]{Remark}

\newtheorem*{conjecture*}{Conjecture}
\newtheoremstyle{nonindented}{1ex}{1ex}{}{}{\bfseries}{.}{.5em}{}
\newtheoremstyle{indented}{1ex}{1ex}{\itshape\addtolength{\leftskip}{0.6cm}\addtolength{\rightskip}{0.6cm}}{}{\bfseries}{.}{.5em}{}
\theoremstyle{nonindented}
\theoremstyle{indented}
\theoremstyle{plain}

\renewcommand{\hat}{\widehat}

\renewcommand{\bar}{\overline}

%Operators: These operators are such that a subscript appears below
%in \[ \] math mode, and to the bottom right in regular $ $ math mode

%regular version

\def\max{\qopname\relax n{max}}

\def\argmax{\qopname\relax n{argmax}}

%bold version
\def\Pr{\qopname\relax n{\mathbf{Pr}}}

\newcommand{\RR}{\mathbb{R}}

\def\bx{\boldsymbol{x}}

\def\O{\mathcal{O}}

%\newcommand{\hl}[1]{{\bf \color{red}#1}}

%Plain eps or pdf figure. Use IPE to embed tex in it.

%Combined PS/Latex figure. This is option of choice for including tex
%code from xfig. Remember to export from xfig using "combined ps/latex" option

%Combined PDF/Latex figure. This is option of choice for including tex
%code from xfig. Remember to export from xfig using "combined pdf/latex" option

%Algorithmic Environment stuff

%LP environment stuff

%\newcommand{\st}{\mbox{subject to} }

\newenvironment{lp*}{\begin{equation*}  \begin{array}{lll}}{\end{array}\end{equation*}}

%% MISC

\date{}
% \title{Pricing The Instrumental Value of Data in a Data-Centric Machine Learning Market}
\title{Optimal Pricing for Data-Augmented AutoML Marketplaces} 
%\title{Market Design for Data-Augmented AutoML: A Computational Solution}

% The \author macro works with any number of authors. There are two commands
% used to separate the names and addresses of multiple authors: \And and \AND.
%
% Using \And between authors leaves it to LaTeX to determine where to break the
% lines. Using \AND forces a line break at that point. So, if LaTeX puts 3 of 4
% authors names on the first line, and the last on the second line, try using
% \AND instead of \And before the third author name.

% \author{
% \begin{tabular}{ccc}
%   Minbiao Han\thanks{Equal Contribution, the author names are in alphabetical order.} & Jonathan Light$^*$ & Steven Xia$^*$ \\
%   \small University of Chicago & \small Rensselaer Polytechnic Institute & \small University of Chicago \\
%   \texttt{minbiaohan@uchicago.edu} & \texttt{lij54@rpi.edu} & \texttt{stevenxia@uchicago.edu} \\[2ex] 
%   Sainyam Galhotra & Raul Castro Fernandez & Haifeng Xu \\
%   \small Cornell University & \small University of Chicago & \small University of Chicago \\
%   \texttt{sg@cs.cornell.edu} & \texttt{raulcf@uchicago.edu} & \texttt{haifengxu@uchicago.edu}
% \end{tabular}
% }

\author{
\begin{tabular}{ccc}
  Minbiao Han\thanks{Equal contribution. Author order is alphabetical.} & 
  Jonathan Light$^*$ & 
  Steven Xia$^*$ \\
  \small University of Chicago & \small Rensselaer Polytechnic Institute & \small University of Chicago \\
  \texttt{minbiaohan@uchicago.edu} & \texttt{lij54@rpi.edu} & \texttt{stevenxia@uchicago.edu} \\[2ex]
  Sainyam Galhotra & 
  Raul Castro Fernandez\thanks{Equal advising.} & 
  Haifeng Xu$^\dagger$ \\
  \small Cornell University & \small University of Chicago & \small University of Chicago \\
  \texttt{sg@cs.cornell.edu} & \texttt{raulcf@uchicago.edu} & \texttt{haifengxu@uchicago.edu}
\end{tabular}
}

\begin{document}

\maketitle

\begin{abstract}

Organizations often lack sufficient data to effectively train machine learning (ML) models, while others possess valuable data that remains underutilized. Data markets promise to unlock substantial value by matching data suppliers with demand from ML consumers. However, market design involves addressing intricate challenges, including data pricing, fairness, robustness, and strategic behavior. In this paper, we propose a pragmatic data-augmented AutoML market that seamlessly integrates with existing cloud-based AutoML platforms such as Google’s Vertex AI and Amazon’s SageMaker. Unlike standard AutoML solutions, our design automatically augments buyer-submitted training data with valuable external datasets, pricing the resulting models based on their measurable performance improvements rather than computational costs as the status quo. Our key innovation is a pricing mechanism grounded in the instrumental value—the marginal model quality improvement—of externally sourced data. This approach bypasses direct dataset pricing complexities, mitigates strategic buyer behavior, and accommodates diverse buyer valuations through menu-based options. By integrating automated data and model discovery, our solution not only enhances ML outcomes but also establishes an economically sustainable framework for monetizing external data.

\end{abstract}

% \steven{remove the discovery related contribution from the rest of the paper, since we are moving it to experiments}

\section{Introduction}

Many organizations that would benefit from machine learning lack access to the necessary training data, which is costly to obtain \cite{laffont2009theory, najafabadi2015deep, miotto2018deep}. At the same time, other organizations sit on top of large volumes of data that could be helpful to the former. Data markets that match the supply of datasets with demand would unlock tremendous value. Buyers would submit their ML tasks, the market would automatically find training data suitable to that task and train a model that satisfies the buyers' demand that buyers would access after paying a price.

%Designing any market requires solving many problems, from pricing to revenue allocation, all while ensuring robustness, fairness, privacy, and more \cite{roth2018marketplaces,milgrom2018artificial,10.1145/3178876.3186042,10.1145/3178876.3186044}. Complicating things even further, the implementation of the market introduces additional challenges, including economic sustainability, efficiency, and scalability. 

Some recent proposals such as \cite{roth2018marketplaces,milgrom2018artificial,10.1145/3178876.3186042,10.1145/3178876.3186044} have explored mechanisms for pricing data, focusing primarily on maximizing revenue for sellers. However, these approaches face significant challenges for real-world deployment: (i) they typically price data based on the raw dataset itself, rather than on the actual utility it provides through a trained machine learning model; and (ii) they often lack transparency in how prices are determined. Moreover, none of these proposals considers data augmentation to improve model accuracy. To build a fair and sustainable data marketplace, we need a principled methodology that prices datasets based on their demonstrated utility to buyers while accommodating a diverse willingness to pay. In this work, we address the following question: What is the best way to price datasets that align incentives and maximize overall market performance? 

We take a pragmatic approach to designing the market: rather than contributing a clean-slate model or mechanism, we design a data-augmented AutoML market that functions as a drop-in replacement of existing cloud-based AutoML platforms, such as Google's Vertex AI \cite{vertex} and Amazon's SageMaker \cite{joshi2020amazon}. In AutoML platforms, buyers send training data to the platform. The platform performs the model search and returns a high-performing one, and buyers pay for the search cost. The market we design uses similar interfaces, but differs fundamentally in two aspects: (1) our platform (which implements our market design) \emph{augments} buyer's initial training data with additional features on the platform; (2) buyers primarily pay for the resulting model (and, implicitly, for the identified data) according to its estimated quality improvement, rather than the cost of computation or model search. This shift in emphasis toward pricing the model quality reduces buyer participation risk and enables a new market-based mechanism for monetizing external data. 

A central challenge is developing a mechanism to price discovered data and models. We define the instrumental value of data-augmented autoML~\cite{vod,ai2025instrumental} as the marginal improvement in model quality attributable to both model development and external data. Our market presents buyers with a menu of options, each associated with a different instrumental value described by model performance metrics and corresponding price. This design allows buyers to select what is appropriate for them. There are a few key properties of this pricing mechanism:
i) By pricing based on instrumental value, we avoid pricing computing resources and reduces individual buyer's risk of paying  much computing cost to get a unuseful model. This helps to encourage more participation. ii) Since we do not price individual datasets directly, properties such as volume or completeness are implicitly accounted for, removing the need for difficult-to-specify modeling assumptions.
iii) The menu includes multiple options, targeting buyers with different willingness-to-pay (i.e., types).
iv) Because pricing does not rely on buyer input, the mechanism is robust to strategic behavior and adversarial manipulation. 

To price instrumental value of  data-augmented autoML, our market design needs to identify useful external data and models for a buyer’s task. In our design (Section 3), buyers submit a training dataset, and the market system searches a large underlying data pool to enhance performance. We build upon existing techniques for the data and model discovery component, whose output will be served as a basis for our pricing algorithm.

Overall, our solution is API-compatible with AutoML platforms. However, unlike standard AutoML systems, our platform integrates external data into the learning process -- hence coined \emph{data-augmented autoML} --  and prices outcomes based on their measured improvement, paving the way for the monetization of external data. Our evaluation  shows that our pricing algorithm effectively compensates market participants by maximizing the revenue generated compared to other baselines, and our market is effective in leveraging external data to find high-quality models for a wide range of buyer tasks.

%\vspace{-3mm}

\section{Related Work}

%\vspace{-3mm}

\mypar{Pricing Problem} The contract design literature \cite{myerson1982optimal,hart2016oliver} studies the problem where a platform sets a contract of payments to maximize the expected revenue. Contract theory has played a significant role in economics \cite{grossman1992analysis, smith2004contract, laffont2009theory}, and there has been a growing interest in the computer science community regarding the problem of contract design driven by the increasing prevalence of contract-based markets in web-based applications \cite{alon2021contracts, castiglioni2021bayesian, castiglioni2022designing,gan2022optimal}. These markets have significant economic value, with data and computation playing a central role. In contrast to traditional contract design challenges, our design is on crafting contracts, specifically pricing curves in a data-augmented autoML market setting. Online pricing against strategic buyers is also widely studied in the economic literature \cite{10.1145/1963405.1963427,10.1145/1963405.1963429,dawkins2021limits,10.1145/2566486.2568004,10.1145/3038912.3052700}. Prior research primarily concentrated on analyzing various aspects of equilibria, with optimization efforts dedicated towards maximizing revenue for data owners. Our work takes a broader perspective by addressing the optimization of both buyer utility and overall market welfare. 

\mypar{ML and Data Markets} Today's commercial online data marketplaces, such as Snowflake's, AWS's, sell raw data \cite{snowflakemarket, awsmarket}; we concentrate on selling the instrumental value of data, identifying relevant data by its impact on the buyers' tasks, avoiding the main challenge of helping buyers decide how raw data helps their purpose. On the other hand, existing AutoML markets misses on the opportunity to augment buyer-submitted training data with external data to train higher-quality models. Lastly, many ML markets have been proposed in previous work~\cite{agarwal2019marketplace, chen2019towards, sun2022profit, liu2021dealer, song2021try}. While they have considered selling the instrumental value of data, they assume a model is built over all data in the market, and study how to sell versioned models to buyers with different preferences. Yet, in a market with a large data collection, only a small subset may be relevant for any given task. Our market takes the data-model search problem into design, which is directly reflected in our pricing mechanism.

\section{Data-Augmented AutoML Market Architecture}
\label{sec:market}

%\vspace{-2mm}

We now present our design of a data-augmented AutoML market. 
At a high level, \emph{buyers} bring their tasks and preliminary training data to a \emph{platform} (the market). The platform has a collection of datasets, possibly contributed by various data owners. The platform will conduct both data augmentation search and model search via automated machine learning (autoML), where an augmentation refers to additional features from the data collection that are joined with the buyers' training data. Compared to the status quo of autoML market such as Vertex AI and Sage Maker, the core novelty in our design lies on (a) the transparency about the evolution of model performance based on which each buyer can decide when the data and model search should stop to avoid paying cost for not-so-useful searches; and (b) a pricing mechanism that is \emph{primarily} based on revealed model performance rather than the total amount of consumed computing regardless of what model performance the buyer receives. This design gives more control to buyers and better quality assurance, hence could potentially attract many more buyers. Next, we illustrate our design in detail. 

%identifies data that improves the tasks' performance and assigns a price to such improvement. We present the problem statement (Section~\ref{subsec:problemstatement}, followed by the market architecture (Section~\ref{subsec:architecture}).

\textbf{Basic Setup.} 
Building off an autoML platform, we study how to design a market for an autoML service that helps users find both good data and ML models. 
%\begin{definition}[Performance Metric]
%For example, for a classifier, a performance metric may be an accuracy of 90\%.
%\end{definition}
Each buyer arrives at the platform with a machine learning task. The platform then searches for a model $C$, as well as additional features to enhance the original training dataset $D$. Let $q(C\leftarrow D)$  denote the \emph{performance metric}  of the obtained model (e.g., accuracy on some test data). Notably, the buyer may come with some initial training data and/or an initial version of the ML model, which shall not affect our design.

 To let buyers reduce risk and control stopping time, our designed market will reveal to buyers the current model metric $q_t \in Q$ at pre-specified time periods $t = 1, 2, \cdots, T$ (e.g., every $5$ minutes). We assume $q_t$ has finite resolution and is drawn from a discrete set $Q$ of all possible (rounded) metrics.  
  Notably, $q_{t+1}$ may be smaller than $q_t$ since it is the \emph{latest} discovered model metric, which may become worse due to searching over worse datasets during the search. Revealing the latest model metric,  while not the best metric so far, is a tailored design choice since we would like to provide users with a variety of choices, because some users may prefer lower-quality model at a lower price.   

Since most ML algorithms are based on gradient descent \cite{haji2021comparison,zhang2019gradient}, given its current performance determined by current parameters, its future performance is often independent of previous parameters. This motivates our modeling of the evaluation of model performance   $q_t$ as a Markov chain. This model is particularly natural for modern optimization-driven model training like gradient descent, where updates are driven by the immediate surroundings of current parameter values \cite{ruder2016overview}. We thus denote the accurate evaluation of the data discovery procedure as a Markov chain $\langle Q, T, \{ \boldsymbol{P}_t \}_{t\in [T]} \rangle $, in which $\boldsymbol{P}_t(q_{t}|q_{t-1})$ is the transition probability from previous state $q_{t-1}$ to the current state $q_t$. It is important to allow $\boldsymbol{P}_t$ to be time-dependent because the probability of transitioning from metric $q$ to $q'$ typically differs a lot at different times (e.g., the later, the less likely to have a big increase). We assume this transition matrix $\boldsymbol{P}$ is public information. In practice, one feasible solution is for the platform to disclose the transition matrix as a "suggested guidance" (
% (
% some guiding information is common to see in practice; 
for instance, Google's ad exchange platform reveals guidance and data to teach advertisers about how to bid \cite{google}).

% {\color{cyan} We assume this transition matrix $\boldsymbol{P}$ is public information. In practice, one feasible solution is that the platform can disclose the transition matrix as a "suggested guidance" (some guiding information is common to see in practice; for instance, Google's ad exchange platform reveals guidance and data to teach advertisers about how to bid \cite{google}.}

Data discovery is costly. However, in contrast to the uncertainty of model metric evolution,  we assume discovery cost is a fixed constant $c$ for each unit of time since this cost is often determined by the computation time, and thus each unit of computing time experiences a similar cost.

\textbf{Modeling buyer preferences.}  Following standard assumptions in mechanism design \cite{maskin1984monopoly,myerson1979incentive,myerson1982optimal}, we assume there is a population of buyers who are interested in this autoML service. Each buyer is modeled by a private  \emph{type} $\theta \in \Theta$. Formally, each $\theta$ determines a  \emph{vector} $v^\theta(\cdot) \in \RR_+^{Q}$, in which $v^{\theta}(q)$ describes her values for  model metric $q \in Q$. While the buyer knows his type $\theta$, the platform only knows a prior distribution $\mu$ over the buyers. We will start by assuming that the platform knows the distribution $\mu$, then discuss how to relax this assumption using a learning-theoretic approach. 

%which can be viewed as the platform's estimation of the market's distributional interests in this segment.  

%In reality, the market is often  divided into different \emph{segments} for different categories of ML tasks (e.g., financial prediction, agriculture product prediction,  location-based marketing, etc.).  Each segment is relatively independent with a different data pool and buyer demand, and thus will be treated separately. The buyers' demand and preferences within a segment are expected to have less variance.

\textbf{Data Discovery Process and the Interaction protocol.} Our mechanism relies on a data and model discovery procedure that supplies model performance metrics in real-time. During this discovery process, the user sees current performance metrics thus far, can choose to stop the process, purchase the model, or leave the market at any time. Our designed mechanism for pricing data-augmented ML models is described simply by a price curve (PC)   $x \in \mathbb{R}_+^Q$, which prescribes a price $x(q)$ for each possible performance metric $q\in Q$. Notably, this PC $x$ will be posted in advance so that it is public information to every potential buyer. Naturally, the price is the same for every buyer.

\textbf{Buyer Payment and the Optimal Pricing Problem. } The payment of any buyer consists of two components: (1) \emph{the data discovery cost} $c \tau$ in which $\tau \in [T]$ is the user's stopping time and $c$ is the \emph{true} computation cost per unit of time; (2) \emph{payment $x(q)$} per inference if the buyer chooses to purchase a data-augmented ML model with performance metric $q$ for inference. Hence,  buyer $\theta$'s utility can be expressed as $u^\theta(q, x, c) = v^\theta(q) - x(q) - c \tau$, where $v^\theta(q)$ denotes the buyer's value of performance metric $q$, $x(q)$ denotes the price charged for this performance metric, and $c$ denotes the search cost for each unit of time. 
The payment of data discovery costs is mandatory to sustain and compensate the system operation's costs. However, we expect this cost to be very low and can be viewed as a form of entry fee. The payment of $x(q)$  per inference is voluntary and represents what our platform truly sells.  The platform's pricing problem is hence, given a prior distribution $\mu$ regarding the types of buyers $\Theta = \{\theta_1,\cdots, \theta_n\}$, and the performance dynamics $\{ \boldsymbol{P}_t \}_{t\in [T]} $ of performance metrics, finding the price curve $\{x(q)\}_{q \in Q}$  that maximizes the expected revenue of the market.

%The buyer may have  which is characterized by training data $D_{tr}$,   test data $D_{te}$, and the buyer's choice of a measure of the performance metric. Buyers exhibit different types, leading to varying levels of value assigned to different performance metrics.

%We assume the platform collects and uses buyers' data, similar to existing AutoML platforms. While some buyers might express concerns about potential data leaks on the platform, many existing users of AutoML platforms do not share those same apprehensions. Therefore, we incorporate this perspective into our model as well.

% \hf{I saw that we are referring to this as a "data market" in many places, which are not accurate. This should be a market for data-augmented autML. Please make sure to make changes to be consistent with this. }

% \begin{definition}[Platform]
% The market platform solves the model discovery problem.
% \end{definition}

% \begin{figure*}[tbh]
%     \centering
%     \includegraphics[width=0.9\textwidth]{}
%     \caption{Overview of the data market architecture}
%     \label{fig:architecture}
% \end{figure*}

% \begin{figure*}[tbh]
%     \centering
%     \includegraphics[width=0.9\textwidth]{graphics/DataMarketArch.png}
%     \caption{Overview of the data market architecture}
%     \label{fig:architecture}
% \end{figure*}

% Note that as the cost of computation is cheap, and is likely to get cheaper ~\cite{kushida2015cloud}, we expect the search cost for most buyers of our market to be dominated by the cost they pay for the discovered data and model.

\textbf{Key Challenges to Address.} For the designed market above to work, we face three major technical challenges: (1) how to find the optimal pricing mechanism efficiently, (2) how can the market place obtain the prior distribution $\mu$ about buyers, (3) to have a truly functional market, we also need a discovery process to effectively search for models and data, and reveal performance metrics in real time. Next, we address these challenges in order, all through computational approaches, including an implementation of our designed market for evaluation and measuring its effectiveness.

 \section{Finding the Optimal Pricing Mechanism}
\label{sec:economic}

In this section, we focus on the first key challenge: designing a pricing mechanism to compute the optimal price that aligns incentives and maximizes overall market performance.

% \hf{may need to describe that we sell inferences, not the model itself here, or in Section 2. }
% \sg{can we also explain why we dont price based on number of labels? (or more inferences means more compute cost but the model price is only dependent on its quality.)}

\begin{wrapfigure}{r}{0.5\linewidth}
    \centering
    %\vspace{-15pt}
    \includegraphics[width=0.8\linewidth]{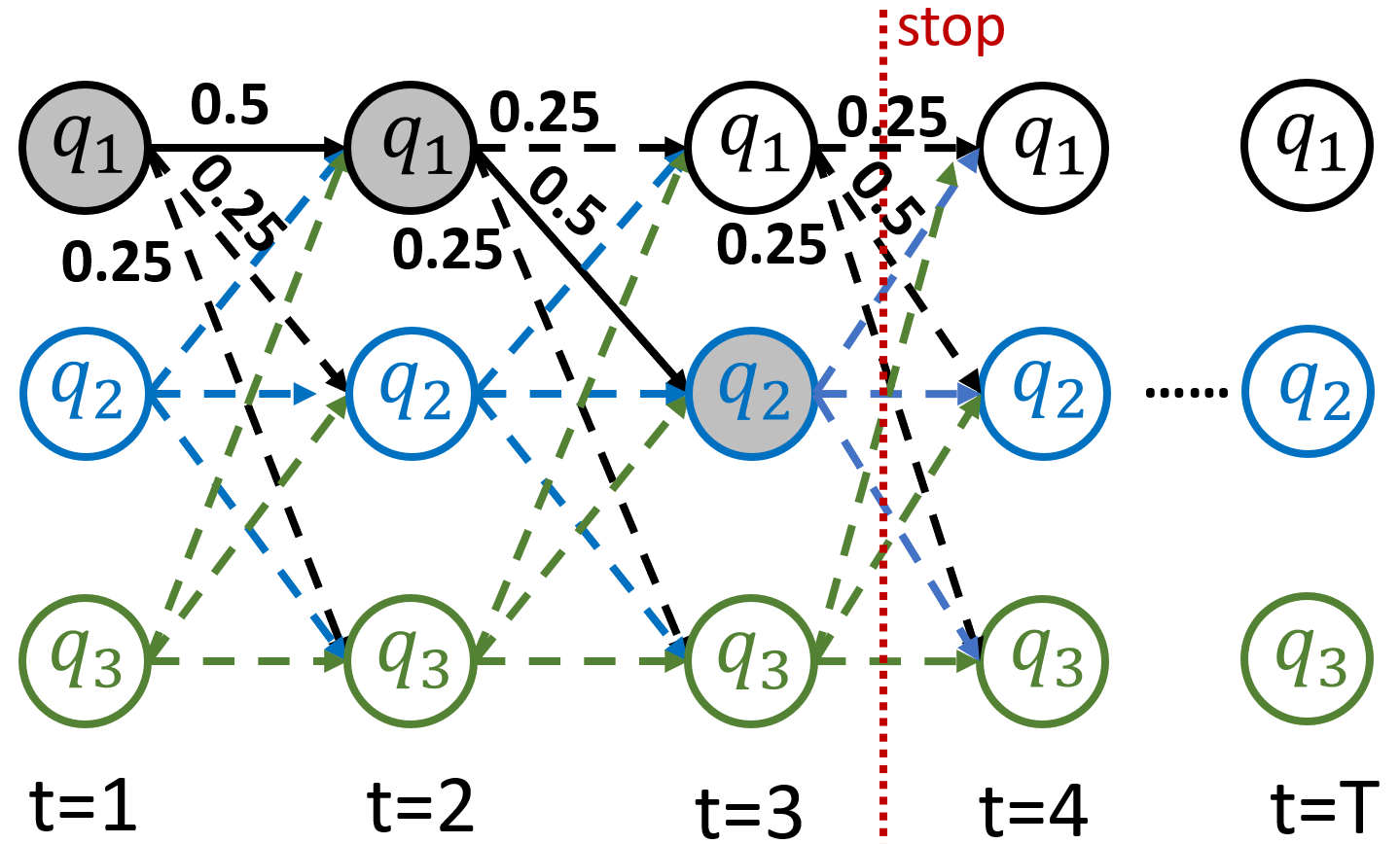}
    \caption{A Markov chain model where the buyer makes a decision to continue or stop at every node.}
    \label{fig:markov}
    %%\vspace{-1em}
\end{wrapfigure}

\mypar{Buyer's best response as optimal stopping} Since the buyer knows the price and search cost before starting data and model discovery, she faces an optimal stopping problem when participating in our market. More precisely, the buyer's search for model performance metric is precisely a Pandora's Box problem, a widely studied online optimization problem to balance the cost and benefit of search \cite{weitzman1978optimal,boodaghians2020pandora, ding2023competitive}. 
Here, an agent is presented with various alternatives modeled by a set of boxes $B=\{b_1, \cdots, b_n\}$, where each box $b_i$ needs cost $c_i$ to be opened, and has a random payoff $v_i$ 
whose distribution is known. The agent's goal is to find a strategy that adaptively decides whether to stop or continue the search after opening each box. Similarly, in our problem, the buyer also needs to make the decision to continue or stop at every time step $t \in [T]$. A major difference between our problem and Pandora's box problem is that they assume all boxes' payoff distributions are \textit{independent}, while in our problem, the buyer's random payoff of the next time step (i.e., box) depends on the state of the current time step. This dependence is modeled through the Markov chain $\{\boldsymbol{P}_t\}_{t \in [T]}$. See Figure \ref{fig:markov} for an illustration.

For any buyer type $\theta$, let us consider the case where the buyer faces a price curve $x \in \mathbb{R}^{Q}_{+}$. At time step $t$, suppose the \textit{maximal surplus} performance metric the buyer has seen is denoted as $q = \max_{t'=1}^{t} v^\theta(q_{t'}) - x(q_{t'}) - c\cdot t'$, the realized performance matric at time step $t$ is denoted as $q_{t}$, we denote $\tau(q, q_{t}, t)$ as the (random) stopping time conditioned on the events $\tau \ge t$. Given any $\tau$, we can define its corresponding expected future reward starting in state $(q, q_{t}, t)$ for the buyer as $\phi^\tau(q, q_{t}, t) = \mathbb{E}[\max ( v^\theta(q) - x(q) - c\cdot t, \max_{t'=t+1}^{\tau(q,q_{t},t)} v^\theta(Q_{t'}) - x(Q_{t'}) - c\cdot t' ) ],$ where $Q_t$ denotes the random variable and $\max_{t'=t+1}^{\tau(q,q_{t},t)}$ denotes the best metric the buyer can find until the stopping time $\tau(q, q_{t}, t)$.
We define $\Phi(q,q_{t},t) = \max_\tau \phi^\tau(q,q_{t},t)$ to represent the expected optimal future reward the buyer can achieve in state $(q,q_{t},t)$ under optimal policy $\tau^* = \arg \max_\tau  \phi^\tau(q,q_{t},t)$. Thus, the buyer's goal is to compute a $\tau^*$ that maximizes $\phi^\tau(q_0,q_{0},1)$, i.e., the buyer's expected future reward starting at the initial metric $q_0$ at the starting time $t=1$. We show the buyer can efficiently compute the optimal policy via well-designed dynamic programming (DP).
%%\vspace{-1.5em}
% \begin{small}
% \begin{equation*}\label{eq:dp_state}
% \begin{split}
%     \phi^\tau(q, q_{t}, t) = \mathbb{E}\Big[ v^\theta\Big(\max\big(q, \max_{t'=t+1}^{\tau(q,q_{t},t)} Q_{t'}\big)\Big) \\ - x\Big(\max\big(q, \max_{t'=t+1}^{\tau(q,q_{t},t)} Q_{t'}\big)\Big)\Big]
%     - \sum_{t'=t+1}^{\tau(q,q_{t},t)} c.
% \end{split}
% \end{equation*}
% \end{small}
%\begin{small}
%%\vspace{-10pt}
% \begin{small}
% \begin{equation*}\label{eq:dp_state}
% \begin{split}
%     \phi^\tau(q, q_{t}, t) = \mathbb{E}\Big[\max \Big( v^\theta(q) - x(q) - c\cdot t, \max_{t'=t+1}^{\tau(q,q_{t},t)} v^\theta(Q_{t'}) - x(Q_{t'}) - c\cdot t' \Big) \Big]
% \end{split}
%% \end{equation*}
%\end{small}
%%\vspace{-1.5em}
% where $Q_t$ denotes the random variable and $\max_{t'=t+1}^{\tau(q,q_{t},t)}$ denotes the best metric the buyer can find until the stopping time $\tau(q, q_{t}, t)$.
% We define $\Phi(q,q_{t},t) = \max_\tau \phi^\tau(q,q_{t},t)$ to represent the expected optimal future reward the buyer can achieve in state $(q,q_{t},t)$ under optimal policy $\tau^* = \arg \max_\tau  \phi^\tau(q,q_{t},t)$. Thus, the buyer's goal is to compute a $\tau^*$ that maximizes $\phi^\tau(q_0,q_{0},1)$, i.e., the buyer's expected future reward starting at the initial metric $q_0$ at the starting time $t=1$. We show the buyer can efficiently compute the optimal policy via well designed dynamic programming (DP).

\begin{proposition} The optimal buyer policy can be computed via DP in $O(Q^2T)$ time. 
\end{proposition}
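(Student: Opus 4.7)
The plan is to solve the buyer's optimal stopping problem by backward induction, leveraging the Markov structure of $\{\boldsymbol{P}_t\}_{t\in[T]}$. The natural state is the triple $(q, q_t, t)$ appearing in the value $\phi^\tau$: $q$ is the best-surplus metric observed so far, $q_t$ is the current realized metric, and $t$ is the current time. By the Markov property, the distribution of future metrics conditional on the full history depends only on $(q_t, t)$ through the transition kernel $\boldsymbol{P}_{t+1}(\cdot\mid q_t)$, while $q$ suffices to describe the buyer's fall-back surplus upon stopping. I would therefore write the Bellman recursion
\[
\Phi(q,q_t,t) \;=\; \max\Bigl\{v^\theta(q)-x(q)-ct,\ \sum_{q_{t+1}\in Q}\boldsymbol{P}_{t+1}(q_{t+1}\mid q_t)\,\Phi\bigl(q\oplus q_{t+1},\, q_{t+1},\, t+1\bigr)\Bigr\},
\]
with terminal condition $\Phi(q,q_T,T)=v^\theta(q)-x(q)-cT$, where $q\oplus q_{t+1}$ denotes whichever of $q,q_{t+1}$ achieves the larger current surplus. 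The optimal stopping rule $\tau^*$ is read off by selecting, at each state, the action (stop or continue) that attains the outer $\max$.

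I would verify correctness by induction on $t$ from $T$ downward, invoking the principle of optimality. At every state $(q,q_t,t)$ the buyer picks whichever is larger between (a) stopping immediately and collecting $v^\theta(q)-x(q)-ct$, or (b) paying for one more step and facing the same problem from the updated state $(q\oplus q_{t+1},q_{t+1},t+1)$ drawn under $\boldsymbol{P}_{t+1}(\cdot\mid q_t)$. The Markov property ensures this continuation sub-problem carries no residual history dependence, so the DP value agrees with $\max_\tau \phi^\tau(q,q_t,t)$ at every state, and the argmax delivers $\tau^\star$.

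For complexity, the state space has $|Q|\cdot|Q|\cdot T = O(|Q|^2 T)$ entries since both $q$ and $q_t$ range over the discrete metric set $Q$. To attain the claimed $O(|Q|^2 T)$ runtime I would evaluate each Bellman update in amortized constant time per $q$: for each fixed $(q_t,t)$, sort $Q$ by surplus and sweep the $|Q|$ values of $q$ in a single pass, maintaining a running suffix sum of the transition-weighted values $\boldsymbol{P}_{t+1}(q_{t+1}\mid q_t)\,\Phi(q_{t+1},q_{t+1},t+1)$ over those $q_{t+1}$ that overtake $q$ under $\oplus$, together with the complementary ``$q$ retained'' contribution. The main obstacle I expect is precisely this amortization: a naive per-state evaluation of the expectation costs $O(|Q|)$ and would inflate the bound to $O(|Q|^3 T)$. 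The key observation making $O(|Q|^2 T)$ feasible is the single-breakpoint structure of the $\oplus$-operator as $q$ sweeps upward through the sorted surplus list, which enables incremental prefix/suffix updates instead of recomputing the expectation from scratch at every state.
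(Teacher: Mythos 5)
Your construction is essentially the paper's: the same three-dimensional state $(q_1,q_2,t)$ recording the best-surplus metric so far, the current realized metric, and the time; the same terminal condition at $t=T$; and the same Bellman recursion comparing the stop value $v^\theta(q_1)-x(q_1)-ct$ against the expected continuation value under $\boldsymbol{P}_{t+1}(\cdot\mid q_2)$. The backward-induction correctness argument is standard and matches what the paper does.

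The divergence is in the complexity accounting, and here your proposal has a genuine gap --- although you are in fact more careful than the paper, which simply asserts that filling each state takes constant time without justification. You correctly observe that a naive evaluation of the expectation costs $O(|Q|)$ per state, giving $O(|Q|^3T)$ overall, and you try to recover $O(|Q|^2T)$ by sorting $Q$ by surplus and sweeping $q_1$ while maintaining running sums. This works for the terms in which the new draw $q'$ overtakes $q_1$: those contribute $\boldsymbol{P}_{t+1}(q'\mid q_2)\,\Phi(q',q',t+1)$, which is independent of $q_1$ and can be kept as a suffix sum over the sorted surplus order. But the complementary ``retained'' terms contribute $\boldsymbol{P}_{t+1}(q'\mid q_2)\,\Phi(q_1,q',t+1)$, in which $q_1$ appears inside every summand; when $q_1$ advances in the sweep, all of these values change, not only the boundary term, so no prefix/suffix bookkeeping of the kind you describe yields amortized constant time per state. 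As written, your argument (like the paper's) rigorously establishes only $O(|Q|^3T)$. Closing the gap to $O(|Q|^2T)$ would require exploiting additional structure --- for instance, showing that the optimal policy is a reservation-value threshold in the best-so-far surplus that depends only on $(q_2,t)$, as the paper's later learning analysis assumes via the thresholds $z_t$ --- so that the $q_1$-dependence of the continuation value collapses and the expectation need only be computed once per $(q_2,t)$ pair.
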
 
Our DP algorithm constructs one DP table $\Phi$ with shape $Q \times Q \times T$. Each entry $\Phi(q_1, q_2, t)$ in the table records the buyer's optimal expected future reward the buyer can achieve at state $(q_1, q_2, t)$, i.e., at time step $t$, the performance metric with the maximal surplus the buyer has seen is $q_1$, while the realized performance metric at time step $t$ is $q_2$. Since the underlying process is a known MDP, the buyer's future expectations depend only on the current performance metric $q_2$. Using this table, we can then calculate the buyer's optimal stopping policy $\tau^*$ by comparing the difference between the buyer's utility if they stop right now and the expected future utility of continuing.

We defer the proof to Appendix \ref{appendix:proof:econ}, and only note here that the core to the proof is to identify a recursive relation between current reward and expected future reward, summarized as follows:
\begin{small}
\begin{equation*}
\begin{split}
 \Phi(q_1, q_2, t) = \max\Big(&v^\theta(q_1) - x(q_1) - c\cdot t, \\ &\mathbb{E}_{\boldsymbol{P}(q|q_2)} \big[ \Phi(\argmax_{\{q_1, q\}}(v^\theta(q_1) - x(q_1) - c\cdot t, v^\theta(q) - x(q) -(t+1)\cdot c), q, t+1)\big] \Big). 
\end{split}
\end{equation*}
\end{small}
\mypar{Finding the optimal pricing curve from empirical data} Next, we show how to approximately compute the optimal pricing curve from the empirical data. To do so, we formulate a mixed integer linear program to approximately compute the optimal pricing curve with input from the market. 

Given a common prior distribution $\mu$ over the buyer population and sampled metric trajectories from the empirical data discovery process. Specifically, the Markov chain model $\{\boldsymbol{P}_t\}_{t \in [T]}$ can be estimated through the realized metric trajectories set $S$, where $s\in S$ represents a sequence of model metrics discovered (i.e., $s = [q_1, \cdots, q_T]$). The following theorem guarantees the computation of a solution from a Mixed Integer Linear Programming (MILP) that provably approximates the optimal pricing curve under mild assumptions. The MILP can be solved efficiently by industry-standard solvers such as Gurobi \cite{gurobi} or Cplex \cite{cplex2009v12}.

\begin{theorem}\label{thm:sample_approx_main}
Suppose the discovery cost $c$ is negligible compared to the buyer's value. Then using $m = |\hat{S}| = \frac{b^2 \ln{1/\delta}}{2 \epsilon^2} $ samples  in $S$,  the following MILP with variable $x$ computes a pricing curve that is an additively $\epsilon$-approximation to the optimal curve  with probability at least $1-2\delta$. Here, $b$ is the maximum valuation from any buyer, and $\epsilon > 0$ is the error term.  

% %\vspace{-4mm}

%     \begin{equation} \label{eq:opt_markovian_main}
%     \begin{split}
%         \max \quad  &\sum_\theta \mu(\theta) \sum_{s\in S} \frac{1}{m} \sum_{q^s \in s}  y(\theta, s, q)  \\
%         \text{s.t.} \quad &0 \le a_{\theta, s} - \big[ v^\theta(q) - x(q)\big] \le M\big(1-z
%         (\theta, s, q)\big), \,\, \forall \theta, s, q \\
%         &\textstyle \sum_{q \in s} z(\theta, s, q) = 1, \,\, \forall \theta, s, \\
%         &y(\theta, s, q) \le x(q); \quad y(\theta, s, q) \le Mz(\theta, s, q), \,\, \forall \theta, s, q \\
%         & y(\theta, s, q) \ge x(q) - \big(1-z(\theta, s, q)\big)M, \,\, \forall \theta, s, q \\
%         &\boldsymbol{x} \ge 0; \quad \boldsymbol{z} \in \{0, 1\}; \quad \boldsymbol{y} \ge 0.
%     \end{split}
% \end{equation}
\end{theorem}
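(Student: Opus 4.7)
My plan is to proceed in two stages. First, I would formulate empirical revenue maximization over the sampled trajectory set $\hat S$ as a mixed integer linear program whose variables include the pricing curve $\{x(q)\}_{q \in Q}$ together with, for each $(s, \theta)$, binary variables indicating which time step the buyer's best response selects (if any). Second, I would use Hoeffding's inequality to bridge the empirical-revenue maximizer $\hat x^*$ returned by the MILP and the population-revenue maximizer $x^*$, yielding the claimed additive $\epsilon$-approximation with probability at least $1-2\delta$.

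\textbf{MILP construction.} The assumption that $c$ is negligible is what makes the encoding tractable: it collapses the optimal stopping problem of Proposition~4.1, since with $c \approx 0$ the buyer has no incentive to stop early and simply picks the time step $t$ on any trajectory $s = (q_1^s, \ldots, q_T^s)$ that maximizes her surplus $v^\theta(q_t^s) - x(q_t^s)$, provided it is nonnegative. I would encode this with binary variables $y_{s,\theta,t}$ satisfying $\sum_t y_{s,\theta,t} \leq 1$, and big-$M$ constraints enforcing that $y_{s,\theta,t} = 1$ implies $v^\theta(q_t^s) - x(q_t^s) \geq v^\theta(q_{t'}^s) - x(q_{t'}^s)$ for every $t'$ and $v^\theta(q_t^s) - x(q_t^s) \geq 0$. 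The objective $\hat R(x) := \frac{1}{|\hat S|}\sum_{s \in \hat S}\sum_{\theta}\mu(\theta)\sum_t x(q_t^s)\,y_{s,\theta,t}$ is bilinear in $(x,y)$, but becomes linear after a standard McCormick/big-$M$ linearization of the product $x(q_t^s)\,y_{s,\theta,t}$, yielding a bona fide MILP whose optimum is $\hat x^*$.

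\textbf{Concentration step.} Let $R(x)$ denote expected revenue under the true trajectory distribution and define $x^* = \argmax_x R(x)$, $\hat x^* = \argmax_x \hat R(x)$. The per-sample revenue from any $(s,\theta)$ lies in $[0,b]$, so Hoeffding's inequality applied to $m = \frac{b^2 \ln(1/\delta)}{2\epsilon^2}$ i.i.d.\ trajectory samples gives $|\hat R(x) - R(x)| \leq \epsilon$ with probability at least $1-\delta$ for any \emph{fixed} $x$. Applying this pointwise to both $x^*$ and $\hat x^*$ and taking a union bound, with probability at least $1-2\delta$ we obtain
\[
R(\hat x^*) \geq \hat R(\hat x^*) - \epsilon \geq \hat R(x^*) - \epsilon \geq R(x^*) - 2\epsilon,
\]
where the middle inequality is the empirical optimality of $\hat x^*$. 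Rescaling $\epsilon$ by a constant factor recovers the theorem's stated additive $\epsilon$-approximation guarantee.

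\textbf{Main obstacle.} The delicate step is the bound $R(\hat x^*) \geq \hat R(\hat x^*) - \epsilon$, because $\hat x^*$ depends on the sample $\hat S$, so a naive pointwise Hoeffding is invalid. The standard remedy is to restrict attention to the finite class of \emph{essential} pricing curves in which each $x(q)$ takes a value in $\{v^\theta(q) : \theta \in \Theta\}$; this is without loss of revenue because $R(x)$ is piecewise constant in each coordinate with breakpoints exactly at buyer valuations, so the MILP optimum can be taken in this finite set. A union bound over this (exponentially large but finite) set absorbs an additive $O(|Q|\log n)$ contribution into the logarithmic term of the sample complexity. The other, more conceptual, direction left open is the negligibility assumption on $c$: dropping it would require encoding the full Markov decision process of Proposition~4.1 inside the program, which yields a path-dependent optimal policy and a substantially more intricate MILP.
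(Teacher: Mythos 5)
Your proposal follows essentially the same route as the paper: the paper's Lemma in Appendix B likewise encodes the buyer's (cost-free) best response on each sampled trajectory with binary selection variables and big-$M$ optimality constraints, linearizes the bilinear revenue term $x(q)z(\theta,s,q)$ exactly as you describe, and then closes with the identical three-inequality Hoeffding/union-bound chain $\bar g(\hat x, S) \ge \bar g(\hat x,\hat S)-\epsilon \ge \bar g(x^*,\hat S)-\epsilon \ge \bar g(x^*,S)-2\epsilon$. One remark: the "main obstacle" you flag is real, and the paper's own proof does not resolve it --- it applies Hoeffding pointwise to the sample-dependent maximizer $\hat x$ as if it were a fixed pricing curve, which is exactly the invalid step you identify. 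Your proposed fix (uniform convergence over the finite class of essential pricing curves with $x(q)\in\{v^\theta(q):\theta\in\Theta\}$) is the right repair, though as you note it inflates the sample complexity to $m = O\bigl(b^2(|Q|\log n + \log(1/\delta))/\epsilon^2\bigr)$ rather than the theorem's stated $\frac{b^2\ln(1/\delta)}{2\epsilon^2}$; in this respect your write-up is more careful than the paper's.
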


\begin{wrapfigure}{r}{0.55\textwidth}  % 'r' for right, 0.6\textwidth for width
%\vspace{-1em} % adjust as needed
\begin{minipage}{0.55\textwidth}
\small
\begin{align} %\label{eq:opt_markovian_main}
%\begin{split}
\max \quad &\sum_\theta \mu(\theta) \sum_{s\in S} \frac{1}{m} \sum_{q^s \in s}  y(\theta, s, q)  \label{eq:opt_markovian_main}\\
\text{s.t.} \quad &0 \le a_{\theta, s} - \big[ v^\theta(q) - x(q)\big] \le M\big(1-z(\theta, s, q)\big) \nonumber\\
&\textstyle \sum_{q \in s} z(\theta, s, q) = 1 \nonumber\\
&y(\theta, s, q) \le x(q),\; y(\theta, s, q) \le Mz(\theta, s, q) \nonumber\\
&y(\theta, s, q) \ge x(q) - \big(1-z(\theta, s, q)\big)M \nonumber\\
&\boldsymbol{x} \ge 0,\; \boldsymbol{z} \in \{0, 1\},\; \boldsymbol{y} \ge 0 \nonumber
%\end{split}
\end{align}
\end{minipage}
%%\vspace{-1em} % adjust vertical spacing
\end{wrapfigure}

%\hf{explain why the reader should be impressed by this result. Just showing that it is a MILP does not amaze people. Say more about PAC, not about MILP. }
This theorem demonstrates that a near-optimal pricing curve can be learned purely from sample trajectories, backed by strong PAC-style guarantees. It highlights the approach’s sample efficiency and robust generalization, key advantages of PAC learning, making it a practical method for data-driven pricing.
A detailed analysis of the MILP can be found in Appendix \ref{appendix_sec:proof_Section_pricing}. We note that the decision variables $x(q)$ in the MILP are precisely the pricing curve itself, which maps every performance metric $q$ to a price $x(q)$. The buyer’s optimal stopping behavior is characterized by the decision variable z’s in the MILP. The constraint about $z$’s in the MILP is capturing that $z$ has to represent an optimal buyer response. Program \eqref{eq:opt_markovian_main} is a bi-level optimization program that maximizes the platform’s payoff, assuming that the buyers will optimally respond. We briefly discuss the core assumption of the above theorem, i.e., the assumption that the data discovery cost $c \tau$ is negligible compared to the model price and buyer valuations. Note that the discovery cost $c \tau$ in our pricing scheme  is different from the payment in the existing AutoML market, which charges buyers $\alpha \tau$ for running on their cloud for $\tau$ time.\footnote{For example, at the time of this paper's writing, Vertex AI has $\alpha = \$3.465$ per node hour for training a classifier using Google's AutoML service \cite{vertex}.} The $\alpha$ in their pricing model is the price per unit of computation, whereas $c$ in our scheme is the \emph{true} computation cost (electricity and server amortization cost). Generally,  $\alpha >> c$.  % In essence, the buyer only pays for the model performance that he is happy with, and can walk away otherwise. In case the buyer walks away, he only pays the true operation cost $c\tau$ of the system for his task, which is usually very small.  Such a design  sustains the operation of the system and meanwhile  reduces the buyer's risk of getting a bad model to the minimum possible. As a result, we assume the buyer will stay in the interaction process until the end of the pre-fixed period $T$. 

\section{Relaxing Market's Prior Knowledge through Learning}\label{sec:learning}

While assuming knowledge of the prior distribution over buyer types is standard in economic models, real-world markets often operate under uncertainty and incomplete information. However, we expect the market to perform well even without access to the true prior. We now study how the seller can gradually learn the prior distribution $\mu^*$ through repeated interactions with buyers. We further quantify the impact of using an approximately optimal prior by bounding the resulting revenue loss.

% The problem setting is as follows. Suppose there are $k$ possible types for a buyer, denoted as $\Theta=\{\theta_1, \cdots, \theta_k\}$. The true unknown distribution over these types is $\mu^*$. At each round $t$, 
% \begin{enumerate}
% \item a random buyer type $\theta \sim \mu^*$ is randomly drawn from the distribution, denoted as $\theta^{(t)}$, 
% \item the seller adopts some pricing strategy $\bx^{(t)}$,
% \item the follower then responds with an optimal (random) stopping time $\by^{(t)}$.
% \end{enumerate}

Our goal is to learn the true distribution $\mu^*$ by iteratively updating our belief based on observed stopping times, beginning with a uniform prior $\mu^{(0)}$. In each round $t$, we observe a stopping time $y^{(t)}$ in response to the offered pricing curve $\bx^{(t)}$. For any type $\theta_i \in \Theta$, we define $\Pr \big(y^{(t)} \mid \theta_i, \bx^{(t)} \big)$ as the probability that a buyer of type $\theta_i$ stops at $y^{(t)}$ when presented with $\bx^{(t)}$. To simplify notation, we fix the buyer type $\theta$ in the following discussion and omit it from the notation. Let $\Pr_t(q)$ denote the probability that the buyer has not stopped before time $t$ and that the maximum observed utility (up to and including $t$) has a metric value of $q$.
According to the buyer’s optimal policy, computed via Algorithm~\ref{alg:dp_buyer}, each time step $t \in [T]$ is associated with a reservation value $z_t$. This value represents the threshold at which the buyer is indifferent between continuing and stopping. That is, the buyer proceeds if the difference between the highest observed metric value and its price is below $z_t$, and stops otherwise. The reservation value is the smallest solution to the following equation:
% \begin{align*}
% \mathbb{E}\Big[\Big(\max_{j=t}^{\tau^*(z_t, t)}X_j - z_t\Big)_+ - \sum_{j=t}^{\tau^*(z_t, t)} c_j\Big] = 0.    
% \end{align*}
\[
\mathbb{E}\Big[\Big(\max_{j=t}^{\tau^*(z_t,q_{t-1}, t)}X_j - z_t\Big)_+ -  c \cdot (\tau^*(z_t,q_{t-1}, t) - t)_+ \Big] = 0,   
\]
where $X_j$ denotes the distribution over buyer utility $v(q_j) - x(q_j)$ according to transition matrix $\boldsymbol{P}(\cdot | q_{t-1})$ for all $q_i \in Q$, and $\tau^*(z_t, q_{t-1}, t)$ is the optimal random stopping time given that the max buyer utility sampled in the past has been $z_t$ and the buyer has just passed $t-1$, or nothing if $t=1$.  
\begin{small}
\begin{algorithm}[tbh]
\caption{Bayesian Learning of Market's Prior Knowledge}
\label{alg:bayesian_learning}
\begin{algorithmic}[1]
\STATE \textbf{Input:} Initial prior $\mu^{(0)}$, learning rates $\{\eta_t\}$,  reservation thresholds $\{z_t\}$
%%\vspace{0.5em}
%%\vspace{2mm}
\STATE \textbf{Phase 1: Precompute stopping behavior for each buyer type}
\STATE \hspace{0.5em}  \textbf{for} each buyer type $\theta_i$ and round $t \in [T]$: 
% \begin{quote}[leftmargin=.5cm]
%     Compute $\Pr(q, q_{t-1}, z_t)$ (see equation \eqref{eq:prob_not_stopping}) which is the probability that the buyer stops at time $t$, the maximum observed metric value is $q$, and the metric value at the previous step is $q_{t-1}$.
% \end{quote}
\STATE \begin{list}{}{\leftmargin=1em \topsep=0pt \parsep=0pt \itemsep=0pt}
%\item For each possible metric value $q$ and previous value $q_{t-1}$
\item Compute $\Pr(q, q_{t-1}, z_t)$ (see Eq.~\eqref{eq:prob_not_stopping}), the probability that the buyer stops at round $t$, the max observed utility's metric value is $q$, and the previous metric is $q_{t-1}$.
%\item Sum over all $(q, q_{t-1})$ to get the probability that type-$\theta_i$ stops at round $t$
\end{list}
\STATE
\begin{list}{}{\leftmargin=1em \topsep=0pt \parsep=0pt \itemsep=0pt}
%\item For each possible metric value $q$ and previous value $q_{t-1}$
%\item Compute $\Pr(q, q_{t-1}, z_t)$ (see Eq.~\eqref{eq:prob_not_stopping}), the probability that the buyer stops at round $t$, the max observed utility's metric value is $q$, and the previous metric is $q_{t-1}$.
\item Sum over all $(q, q_{t-1})$ to get the probability that type-$\theta_i$ stops at round $t$.
\end{list}
\STATE \textbf{Phase 2: Run Bayesian learning over $N$ trials}
\STATE \hspace{0.5em} \textbf{for} trial $t = 1$ to $N$:
\STATE \hspace{1em} Sample a buyer type $\theta^{(t)}$ and simulate the buyer’s stopping round $y^{(t)}$.
%\STATE \hspace{1em} \textbf{for} each type $\theta_i$
\STATE \hspace{1em} For each type $\theta_i$, retrieve the precomputed probability that type-$\theta_i$ would stop at round $y^{(t)}$.
\STATE \hspace{1em} Use Bayes’ rule (Eq.~\eqref{eq:posterior}) to update belief: $w^{(t)}(\theta_i) \propto$ (stopping probability) $\times$ (prior belief).
\STATE \hspace{1em} Smooth the posterior into the new prior: $\mu^{(t+1)} = (1 - \eta_t)\mu^{(t)} + \eta_t w^{(t)}$.
\end{algorithmic}
\end{algorithm}
\end{small}
Given the reservation values, we compute the joint probability $\Pr(q, q_{t-1}, z_t)$ that the buyer stops at time $t$, the maximum observed metric value is $q$, and the metric value at the previous step is $q_{t-1}$ (for $t > 1$). This probability is given by
$
\Pr(q, q_{t-1}, z_t) = \Pr_t(q \mid q_{t-1}) \cdot \mathbbm{1}_{q > z_t},
$
where $\Pr_t(q \mid q_{t-1})$ denotes the probability that the maximum utility metric reaches value $q$ at time $t$, conditioned on having value $q_{t-1}$ at the previous time step. This probability is defined recursively as follows:
\begin{small}
\begin{equation}\label{eq:prob_not_stopping}
\textstyle
\Pr_t(q \mid q_{t-1}) = 
\begin{cases}
    \boldsymbol{P}_1(q), & \text{if } t = 1; \\[5pt]
    \boldsymbol{P}_t(q \mid q_{t-1}) \cdot \sum_{q' < q} \Pr_{t-1}(q') \cdot \mathbbm{1}_{q' \le z_{t-1}} \\[3pt]
    \quad + \Pr_{t-1}(q) \cdot \mathbbm{1}_{q \le z_{t-1}} \cdot \sum_{q' \le q} \boldsymbol{P}_t(q' \mid q_{t-1}), & \text{otherwise.}
\end{cases}
\end{equation}
\end{small}

As a result, we can update the posterior belief about the buyer's type, denoted as $w^{(t)}$, by the Bayesian update rule as follows,
\begin{small}
\begin{align}\label{eq:posterior}
    w^{(t)}(\theta_i) = \Pr\Big(\theta^{(t)}=\theta_i| y^{(t)}, \bx^{(t)}, \mu^{(t)}\Big) = \frac{\Pr \Big(y^{(t)} | \theta_i, \bx^{(t)}\Big) \cdot \mu^{(t)}\big(\theta_i\big)}{\sum_{\theta_j} \Pr \Big(y^{(t)} | \theta_j, \bx^{(t)}\Big) \cdot \mu^{(t)}\big(\theta_j\big)}.
\end{align}\end{small}

Then we can update $\mu^{(t+1)}$ as $\mu^{(t+1)} =(1-\eta_t) \cdot \mu^{(t)} + \eta_t \cdot w^{(t)},$
where $\eta_t$ is the learning rate, which can be set, for example, to $\frac{1}{t+1}$. The detailed algorithm can be found in Algorithm \ref{alg:bayesian_learning}. More results on how the choice of learning rate affects convergence are provided in Section~\ref{sec:evaluation}. The Bayesian update process is guaranteed to converge to the true distribution $\mu^*$ under standard posterior concentration results, assuming identifiability, i.e., no two distinct distributions $\mu_1$ and $\mu_2$ induce identical buyer responses under all seller learning strategies \citep{doob1949application,schwartz1965bayes}. We remark that this assumption is without loss of generality, since if two buyer types are indistinguishable to the seller’s learning algorithm, they can be treated as a single effective type.

Building on the convergence guarantee of the Bayesian update process, where the estimated prior distribution approaches the true distribution over time, we now quantify the impact of estimation errors on pricing performance. Specifically, we show that as long as the learned prior and transition matrices are sufficiently accurate, the resulting revenue loss, measured by the Cost of Estimation Error (CEE), remains small. This shows that our learning-based approach is not only consistent but also robust in terms of revenue performance.

\begin{definition}[Cost of Estimation Error  (\textsc{CEE})]
Given the true buyer distribution $\mu^*$ and true transition matrices $\boldsymbol{P}^* = \{ \boldsymbol{P}^*_t \}_{t \in [T]}$, consider any prior distribution $\mu$ and transition matrices $\boldsymbol{P}$. Let $x^*[\mu, \boldsymbol{P}]$ denote the optimal pricing curve computed under $\mu$ and $\boldsymbol{P}$, and let $Rev_{\mu, \boldsymbol{P}}(x)$ denote the expected revenue of pricing curve $x$ under buyer distribution $\mu$ and transition matrices $\boldsymbol{P}$. The \textsc{CEE} is then defined as:
\begin{equation}
    \textsc{CEE} (\mu, \boldsymbol{P}) =  Rev_{\mu^*, \boldsymbol{P}^*} (x^*[\mu^*, \boldsymbol{P}^*]) - Rev_{\mu^*, \boldsymbol{P}^*} (x^*[\mu, \boldsymbol{P}]) 
\end{equation}
%where  $\mu^*$ is the real/true prior distribution over user populations.       
\end{definition}
Note that $\textsc{CEE}(\mu, \boldsymbol{P}) \geq 0$ holds for any prior $\mu$ and transition matrix $\boldsymbol{P}$. We next show that $\textsc{CEE}$ remains bounded when using an approximately optimal prior and transition matrix.  %. A similar bound on revenue loss can also be established when the transition matrix $\boldsymbol{P}$ is only approximately optimal; see Appendix \ref{appendix_subsec:approximate_transition_matrix} for details.
\begin{theorem}[The Cost of Estimation Error]\label{thm:cost_prior_error}
  Suppose learned $\mu$ satisfies $||\mu - \mu^*||_{2} \leq \epsilon$ and transition matrices $\boldsymbol{P}$ satisfies $||\boldsymbol{P} - \boldsymbol{P}^*||_{2} \leq \epsilon$, then $\textsc{CEE}(\mu, \boldsymbol{P}) = O(\epsilon)$.   
    %get an additively $\epsilon$-approximation  solution with probability $1-3\delta$
\end{theorem}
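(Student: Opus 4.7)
The plan follows a standard two-step perturbation argument. First, I would use the optimality of $x^*[\mu,\boldsymbol{P}]$ under the estimated parameters to reduce the theorem to a uniform Lipschitz bound on the revenue functional. Writing $R^{*}(\cdot) := Rev_{\mu^*,\boldsymbol{P}^*}(\cdot)$ and $R(\cdot) := Rev_{\mu,\boldsymbol{P}}(\cdot)$, with $x_1 = x^*[\mu^*,\boldsymbol{P}^*]$ and $x_2 = x^*[\mu,\boldsymbol{P}]$, adding and subtracting $R(x_1)$ and $R(x_2)$ gives
\[
\textsc{CEE}(\mu, \boldsymbol{P}) = [R^*(x_1) - R(x_1)] + [R(x_1) - R(x_2)] + [R(x_2) - R^*(x_2)].
\]
The middle bracket is non-positive because $x_2$ maximizes $R$, so $\textsc{CEE}(\mu, \boldsymbol{P}) \leq 2\sup_x |R^*(x) - R(x)|$. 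It therefore suffices to prove a uniform $O(\epsilon)$ bound on $|R^*(x) - R(x)|$ over all pricing curves $x$.

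Second, I would apply the triangle inequality to split this revenue perturbation into a prior piece and a transition piece. The prior piece is immediate: $R(x) = \sum_\theta \mu(\theta)\, R^\theta_{\boldsymbol{P}}(x)$ is linear in $\mu$ with per-type revenue bounded by the maximum valuation $b$, so $|Rev_{\mu^*,\boldsymbol{P}}(x) - Rev_{\mu,\boldsymbol{P}}(x)| \leq b\|\mu-\mu^*\|_1 \leq b\sqrt{|\Theta|}\,\epsilon$ by norm equivalence in finite dimensions. It remains to bound, for each buyer type $\theta$, the quantity $|R^\theta_{\boldsymbol{P}^*}(x) - R^\theta_{\boldsymbol{P}}(x)|$, where changing $\boldsymbol{P}$ alters both the trajectory distribution and the buyer's optimal stopping policy.

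For the transition piece I would combine two ingredients. (i) By backward induction on the Bellman recursion of Proposition~4.1, the buyer's value function $\Phi_{\boldsymbol{P}}$ and the induced reservation thresholds $\{z_t^{\boldsymbol{P}}\}$ from Section~\ref{sec:learning} are $O(T)$-Lipschitz in $\boldsymbol{P}$: each DP layer propagates an error proportional to the immediate reward range times $\|\boldsymbol{P}-\boldsymbol{P}^*\|$. (ii) For any fixed stopping policy $\tau$, the seller revenue $\mathbb{E}_{s\sim \boldsymbol{P}}[x(q_{\tau(s)})]$ is a linear functional of the trajectory distribution, whose total-variation distance from the $\boldsymbol{P}^*$ version is $O(T\epsilon)$ by the chain rule for TV distance. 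Inserting $\pm R_{\boldsymbol{P}}(x;\tau^*(\boldsymbol{P}^*))$ then decomposes $R^\theta_{\boldsymbol{P}^*}(x) - R^\theta_{\boldsymbol{P}}(x)$ into (a) a fixed-policy, changing-distribution piece handled by (ii), and (b) a fixed-distribution, changing-policy piece.

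The main obstacle is piece (b): seller revenue is not \emph{a priori} continuous in the buyer's policy, since at states of near-indifference a tiny shift in a reservation threshold can toggle the buyer between stopping and continuing and cause $x(\cdot)$ to jump even though buyer utility barely moves. I plan to handle this through the identity revenue $=$ welfare $-$ utility. The expected optimal buyer utility equals $\Phi_{\boldsymbol{P}}$ and is $O(\epsilon)$-Lipschitz by ingredient (i), as an envelope-type consequence of $\Phi$ being the buyer's own optimum. For welfare continuity, I would argue that $\tau^*(\boldsymbol{P})$ and $\tau^*(\boldsymbol{P}^*)$ disagree only on trajectories whose running maximum surplus lies in an $O(\epsilon)$ band around some $z_t^{\boldsymbol{P}}$; together with a deterministic tie-breaking convention and the mild genericity that reservation values avoid exact ties on the finitely many utility atoms $\{v^\theta(q)-x(q)\}_{q\in Q}$, this boundary mass is itself $O(\epsilon)$. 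Combining everything yields $\textsc{CEE}(\mu,\boldsymbol{P}) = O(\epsilon)$ with the hidden constant polynomial in $T, b, |\Theta|$, and $|Q|$.
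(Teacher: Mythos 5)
Your plan is sound and, on two of its three ingredients, coincides with what the paper does: the prior piece is handled identically (linearity of revenue in $\mu$, a bound by the maximum price/valuation, and $\ell_1$--$\ell_2$ norm equivalence giving a $\sqrt{|\Theta|}\,\epsilon$ factor), and your ingredient (i) --- backward induction on the Bellman recursion showing the buyer's value function $\Phi$ accumulates at most an $O(\epsilon \sqrt{|Q|}\,\overline{V})$ error per DP layer, hence $O(T\epsilon)$ overall --- is exactly the paper's induction in Appendix~\ref{app_subsec:bound_revenue_with_prior}. Where you genuinely diverge is in what you do with that Lipschitz bound. The paper stops at buyer-side quantities: it bounds $|\Phi^{\tau^*(\boldsymbol{P})}-\Phi^{\tau^*(\hat{\boldsymbol{P}})}|$ and the suboptimality of the misspecified policy $\pi^*(\hat{\boldsymbol{P}})$ evaluated under the true dynamics, i.e.\ the buyer's \emph{utility} loss, and leaves the passage from buyer utility to seller \emph{revenue} (which is what \textsc{CEE} measures) implicit. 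You explicitly identify this as the main obstacle --- seller revenue can jump discontinuously when a reservation threshold crosses one of the finitely many utility atoms $v^\theta(q)-x(q)$ --- and you supply the missing bridge via the clean three-term decomposition $\textsc{CEE}\le 2\sup_x|R^*(x)-R(x)|$, the identity revenue $=$ welfare $-$ utility, and a boundary-mass argument. This makes your argument more complete than the paper's on the step that actually delivers the stated conclusion.

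The one place your plan still needs care is the boundary-mass step itself: with a discrete metric set $Q$, the trajectories on which $\tau^*(\boldsymbol{P})$ and $\tau^*(\boldsymbol{P}^*)$ disagree need not carry $O(\epsilon)$ probability unless the reservation values are separated from the utility atoms by a margin, so the "mild genericity" you invoke is a genuine additional hypothesis (or must be replaced by a margin/tie-breaking condition). Since the paper's own proof never confronts revenue continuity at all, this is not a defect relative to the paper --- but you should state the genericity assumption explicitly if you write the argument out in full.
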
 
The formal proof is provided in Appendix \ref{app_subsec:bound_revenue_with_prior}. This result strictly generalizes our learning result in Theorem \ref{thm:sample_approx_main}, which assumes negligible discovery cost, an assumption we now remove in this theorem.
%\hf{Add 3 sentences to explain why this is challenging to prove. }
%\vspace{-1em}
%\vspace{-10pt}
\section{Evaluation}
\label{sec:evaluation}
%\vspace{-1em}

In this section, we present our experimental contribution of an implementation of the proposed data-augmented autoML market, then evaluate the effectiveness of our implementation along the following dimensions: its ability to find good augmentations-model pairs over a range of buyers' tasks, the effectiveness of the pricing scheme, and how well we can learn the prior distribution.

\begin{figure*}[tbh]
    \centering
    \includegraphics[width=0.8\linewidth]{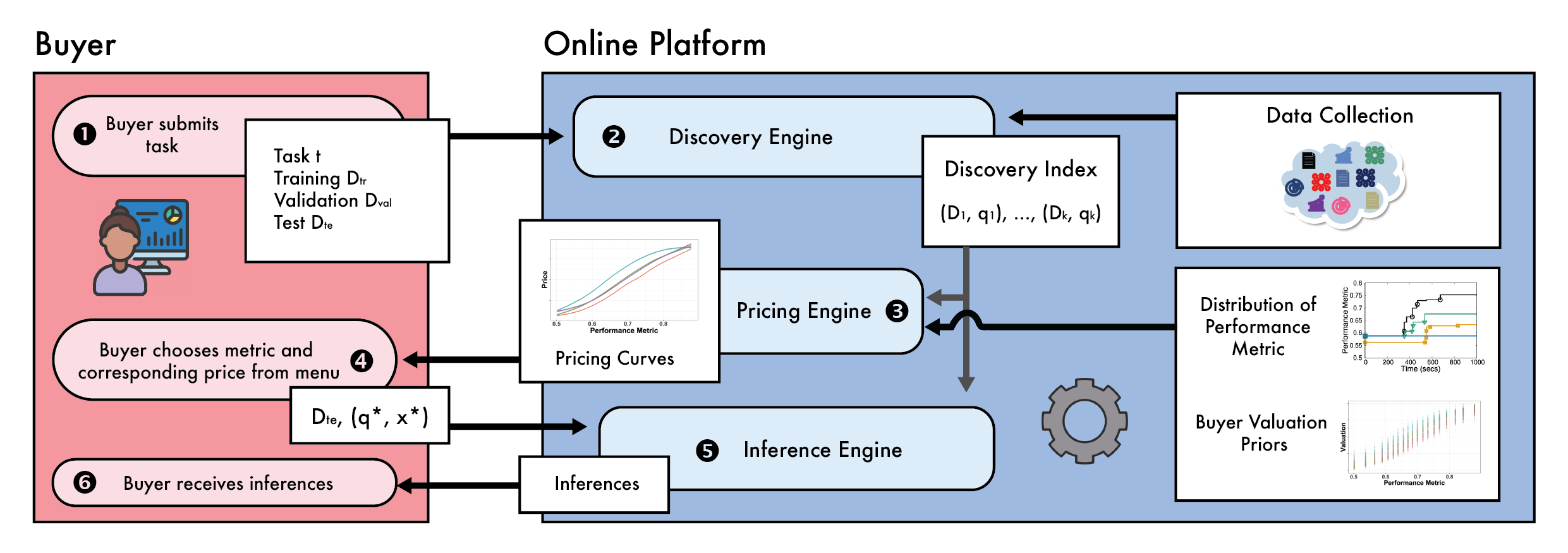}
    %%\vspace{-1em}
    \caption{Architecture of our market implementation. The discovery engine performs augmentation-model discovery to provide the sequence of performance metrics to the pricing engine.
    }
    \label{fig:architecture}
    %%\vspace{-1em}
\end{figure*}
%%\vspace{-.5em}
\paragraph{An Implementation of the Market   and Experimental Setup}
%%\vspace{-.5em}
%\mypar{Market Implementation} 
We implemented a version of the proposed market as shown in Figure \ref{fig:architecture}. Recall that our pricing mechanism requires a sequence of model performance metrics to create the menu of metric-price pairs. %Yet, the augmentation-model discovery step is highly non-trivial, especially when the data collection is large. 
A brute-force approach that considers every combination of sets of augmentations $\mathbf{P}$ and ML models $\mathbf{M}$ gives the optimal solution, but it is intractable to train $O(2^{|\mathbf{P}|} |\mathbf{M}|)$ models where $|\mathbf{P}|$ is often in the order of millions. To implement the discovery engine for our evaluation, we generalize a well-known existing data discovery engine called \emph{Metam} \cite{galhotra2023metam} that prunes the search space. Specifically, Metam leverages properties of the data (datasets that are similar often yield similar performance metrics on ML models) to prune irrelevant augmentations and prioritizes them in the order of their likelihood to benefit the downstream task. A straightforward adaptation of \cite{galhotra2023metam}'s technique would rank the different augmentations, train $\mathbf{M}$ model to test the top-ranked augmentation, then use the realized performance to update the rankings for later iterations. This approach can be shown to converge in $O(\log{\mathbf{P}})$ rounds, leading to $O( |\mathbf{M}| \times \log |\mathbf{P}|)$ model trainings, which unfortunately is too slow to run. 

\begin{small}
\begin{algorithm}[tbh]
\small
    \caption{\textsc{Augmentation-Model Discovery Algorithm}}
    \label{alg:discovery}
     \textbf{Input}: Training Data $D_{tr}, D_{val}$, List of Models $\mathbf{M}$, Augmentations $\mathbf{P}$
     %\noindent \textbf{Output}: ML Model $M$ \quad \quad \quad\quad\quad\,
    \begin{algorithmic}[1] %[1] enables line numbers
        \STATE Initialize the solution set of augmentations $\mathbf{T}\leftarrow \phi$; Initialize  $w(M) = 1,  \forall M\in \mathbf{M}$, $\gamma=0.1$\\
        %\WHILE{$i\leq \textsc{StoppingCriterion}$} 
        \STATE \textbf{WHILE} $i\leq \textsc{StoppingCriterion}$
        \STATE\hspace{0.5em}  Choose a candidate augmentation $T_i\subseteq \mathbf{P}$, and set $\texttt{Pr}(M) =  (1-\gamma ) \frac{w(M)}{\sum_{M\in \mathbf{M}} w(M)} + \frac{\gamma}{|\mathbf{M}|}$
        %\STATE\hspace{0.5em}  Set $\texttt{Pr}(M) =  (1-\gamma ) \frac{w(M)}{\sum_{M\in \mathbf{M}} w(M)} + \frac{\gamma}{|\mathbf{M}|}$
        \STATE \hspace{0.5em} $M_i\leftarrow $Sample a model $M$ according to the  probability distribution $\texttt{Pr}$.
        \STATE\hspace{0.5em}  Update $w(M_i) = w(M_i) e^{\gamma \hat{r}(M)}/|\mathbf{M}|$, where $\hat{r}(M) = m(M_i\leftarrow (D_{in}\Join T_i))*1.0/Pr(M_i)$
        \STATE\hspace{0.5em}  $u(T_i)\leftarrow m(M_i\leftarrow (D_{in}\Join T_i))$
        %\ENDWHILE
    \STATE \textsc{Return} $arg \max_{M} m(M\leftarrow D_{in},\hat{T})$ \textsc{For} $\hat{T}\leftarrow \arg \max \{u(T), ~ \forall T\in \mathbf{P}\}$
    \end{algorithmic}
    \end{algorithm}
\end{small}

To make our implementation practical, we developed an efficient mechanism to choose the best augmentation and model jointly. Specifically, we model each ML model as an arm and simulate a bandit-based learning problem to choose the best model. Note that our problem is similar to pure exploration framework \cite{chen2014combinatorial}, as the goal is to choose the best model and stop exploration after that.
% \footnote{Even though the goal is to stop after choosing the best arm, the best arm is relative to the input datasets which evolves over time. In our implementation, we model this variation as an adversarial bandit. Theoretically analyzing the effect of the evolution of the ML model could be an interesting question for future work.}.
Algorithm~\ref{alg:discovery} presents the pseudocode of our augmentation-model discovery mechanism. It uses Exp3 \cite{auer2002nonstochastic} 
% (Exponential-weight algorithm for Exploration and Exploitation, \cite{auer2002nonstochastic}) 
to choose the best arm, while searching for the augmentations.
% We initialize each model with an equal weight of $1$. 
In each iteration, a model is chosen based on a probability distribution as specified in line 5, and trained on the input dataset with augmentation $T_i$.
% estimate the probability of choosing a model as 
% $\texttt{Pr}(M) =  (1-\gamma ) \frac{w(M)}{\sum_{M\in \mathbf{M}} w(M)} + \frac{\gamma}{|\mathbf{M}|}$, where $\gamma$ is the exploration parameter. 
% Setting $\gamma = 1$ is equivalent to choosing a random model each time.
% A model is chosen based on this probability distribution and trained on the initial dataset augmented with augmentation $T_i$. 
$T_i$ is chosen according to a sampling-based approach from~\cite{galhotra2023metam} that clusters the augmentations in $\mathbf{P}$ and greedily chooses the augmentation that achieves the maximum gain in the performance metric.
% , while diversifying the search process across clusters. 
% This greedy algorithm iteratively picks the best augmentation which achieves the maximum marginal gain in the metric. 
The metric $m$ evaluated on the trained model is then used as reward to update $w(M)$. The main advantages of the Exp3 algorithm are that the adversarial bandit-based formulation helps deal with the changing input dataset in each iteration, and it has shown superior empirical performance in practice~\cite{bubeck2009pure}. The \textsc{StoppingCriterion} implements the \emph{anytime} property, letting buyers stop searching whenever they want.
Notably, Algorithm~\ref{alg:discovery} trains a single ML model in each iteration as opposed to $|\mathbf{M}|$ trainings by prior techniques. We implement this as the discovery building block for our market, and will measure its effectiveness empirically in the following subsection. More details and convergence results on the discovery algorithm are given in the Appendix.

\paragraph{RQ1: How effective is our market in finding high-quality augmentations and models?}
We implement the proposed market, including the pricing mechanism and the above augmentation-model discovery component, on a market with a data collection containing $69K$ datasets (with $\approx 30$M columns and $3$B rows) from the NYC open data. The market searches over these datasets to identify the best combination of augmentations and models (random forest, XGBoost, neural network, etc.) that help improve buyers' input tasks. We present the average results on a sample of 1000 different classification and regression tasks for different input datasets. Example tasks include predicting the performance of different schools on standardized tests in NYC (2000 records and 6 features), and predicting the number of collisions using information about daily trips (400 records and 4 features). All initial datasets contain $>$ 500 records.
% %%\vspace{2mm}
% \begin{figure}[tbh]
% \centering
% \includegraphics[width=0.65\linewidth]
% {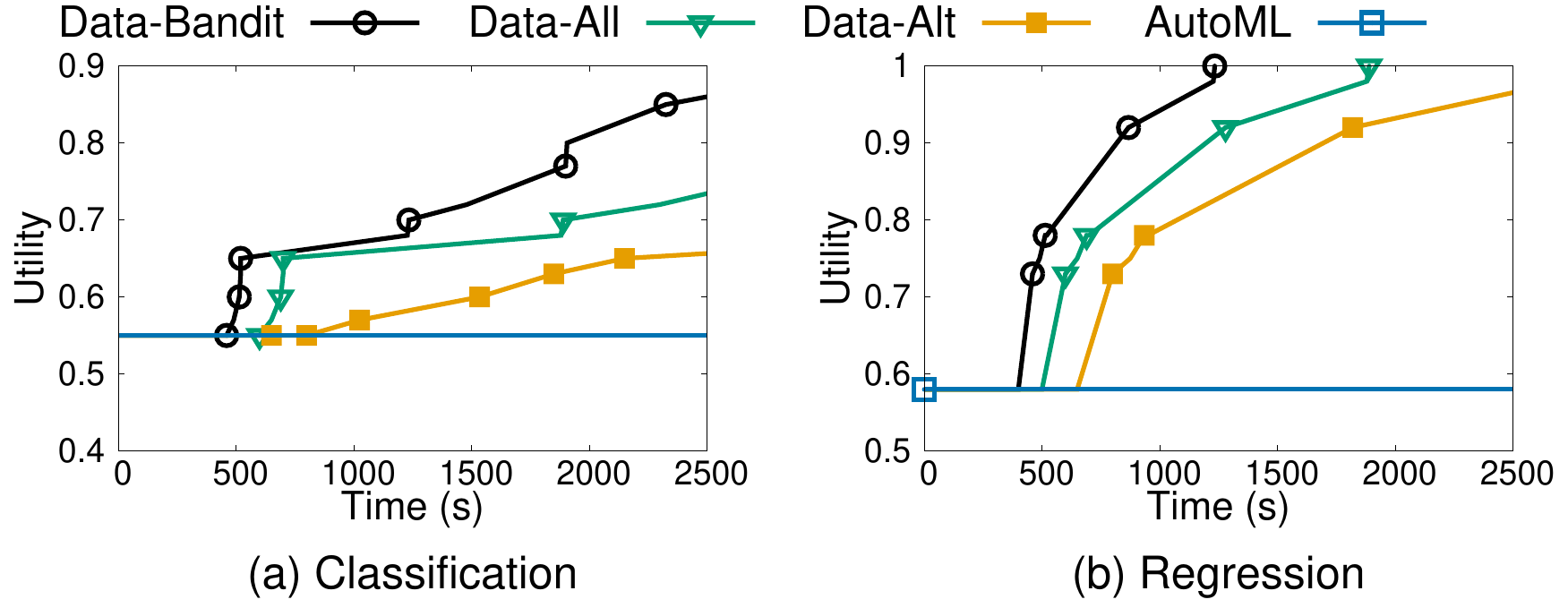}
%     \caption{Comparing discovery techniques on 1K Input Tasks}
%     \label{fig:automl_add}
% \end{figure}
We consider three competitive baselines to compare the effectiveness of our discovery algorithm. (i) \textsc{Data-All} uses~\cite{galhotra2023metam} to search for the best augmentations and trains all ML models in each iteration to choose the best model.
(ii) \textsc{Data-Alt} fixes a high-efficiency, low-cost ML model (random forest classifier in this case) to test the chosen augmentation in each round and then trains an AutoML model (using the TPOT library) on the best augmentation every minute. (iii) \textsc{AutoML} approach does not search over datasets, only models using TPOT. We represent our approach as \textsc{Data-Bandit}.

% Figure~\ref{fig:automl} compares the evaluation metric of the trained model for \textsc{Data-Bandit} with other baselines. The x-axis represents search time in seconds, and the y-axis represents utility. The initial performance metric denotes the accuracy of the ML model without any additional data. The metric remains constant for $6$ minutes for schools and $3$ minutes for the taxi dataset, as our discovery approach spends the initial few minutes searching for potential augmentations and ranks them. We observe that the model trained by our discovery approach has an F-score of more than $0.70$ in less than $7$ minutes for the school's task, as opposed to other techniques that require more than $15$ minutes for a similar F-score. We observe similar differences for the taxi dataset. We manually inspected the results and identified some interesting augmentations e.g. crime information, income of people staying in a neighborhood, presence of grocery stores, etc. After augmentation, the school and taxi dataset contain 9 and 8 additional features, respectively.

\begin{wrapfigure}{r}{0.6\linewidth}
    \centering
    %%\vspace{-2pt}
    \includegraphics[width=\linewidth]{graphics/automl_add.pdf}
    \caption{Compare discovery techniques on 1K input tasks.}
    \label{fig:automl_add}
    %\vspace{-5pt}
\end{wrapfigure}

% Zooming away from the two tasks above, 
Figure~\ref{fig:automl_add} reports the average results on the 1000 different ML tasks. We observe that the bandit-based approach used by our discovery engine outperforms all baselines consistently across the considered tasks by achieving a higher utility for the buyers in a shorter amount of time. For classification tasks, the average utility found by our platform is higher than 0.85, whereas the second-highest benchmark is just above 0.7. AutoML does not search for data, and its average utility for classification is around 0.55. For regression tasks, our platform also performs better: fix any point on the x-axis (search time), and our platform achieves higher utility than all baselines. For most tasks, the number of augmentations identified is less than 10.  Notice that a higher running time of the discovery algorithm translates to higher waiting times and computing costs for the buyer. Although the additional computation cost incurred by higher running time is small for buyers with high valuations, it can become significant for buyers with low valuations of tasks. This experiment shows that our approach can identify the best model consistently across different tasks, and is effective in generating a suite of options for the buyer to choose from based on their budget and performance metric requirements.

\paragraph{RQ2: How do different pricing schemes perform?}%\label{exp:pricing}
%\hf{target 0.8 page}
We now study how much profit our pricing algorithms generate, as a fraction of the total possible welfare, for both in sample (IS) and out of sample (OOS) performance metric trajectories. We train our pricing algorithms on the sample, produce pricing curves, and evaluate how much profit we earn using these pricing curves. 

\begin{wrapfigure}{r}{0.45\linewidth}
    \centering
    %\vspace{-1em}
    \includegraphics[width=\linewidth]{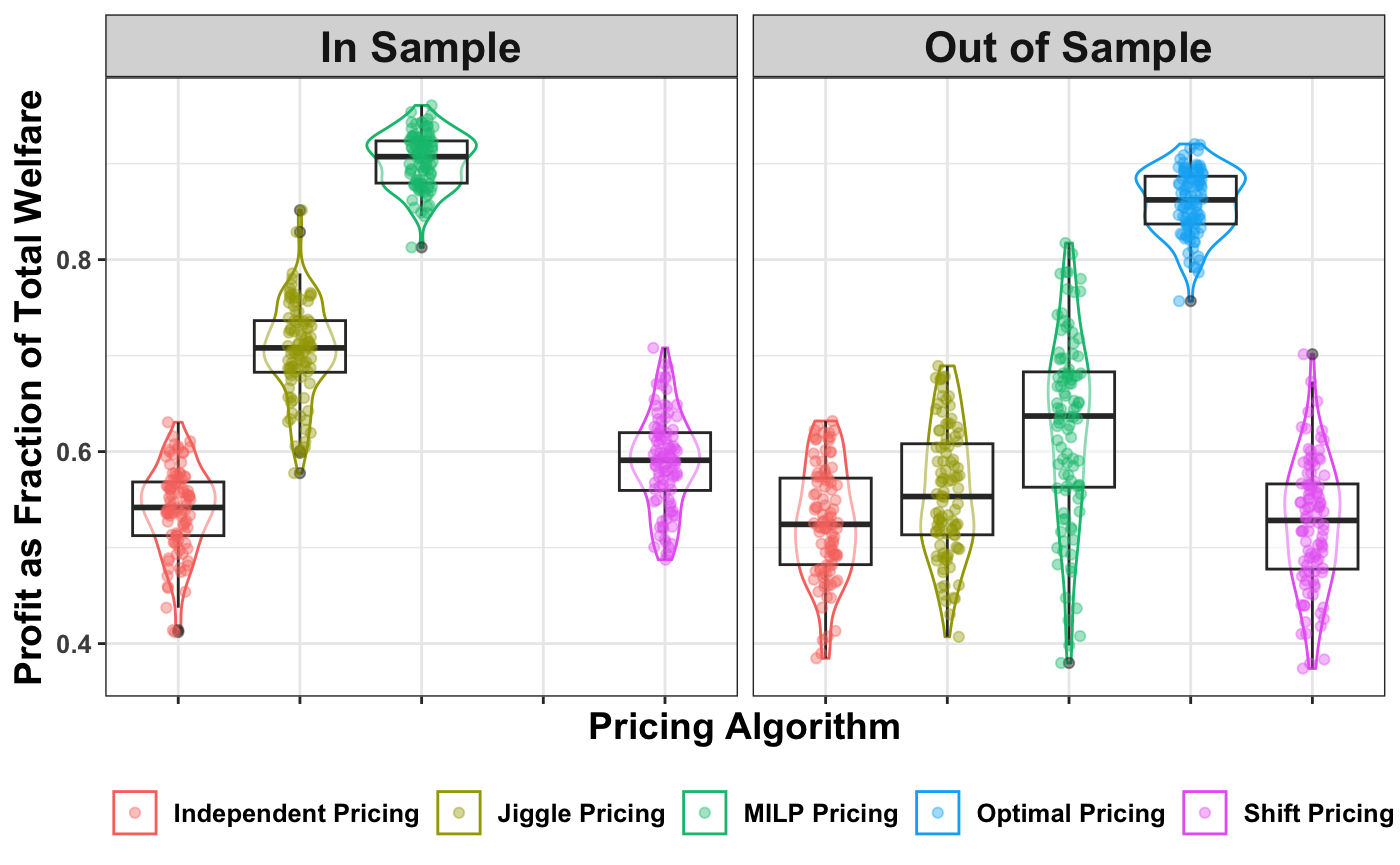}
    \caption{Profit Benchmark for School Data.}
    \label{fig:bm}
    % %\vspace{-5pt}
\end{wrapfigure}

We generate the buyer valuations for both IS and OOS using the same random process and draw a new independent sample every time. In IS training, all performance metrics are available to the buyer, while in OOS, a random subset of performance metrics is drawn each time from real performance metrics that buyers might encounter using real data. We evaluate the performance metrics for different tasks to identify a plausible set of performance metrics and limit the buyer to only choosing from a randomly sampled limited pool of performance metrics OOS. This places a significant limitation on our pricing algorithm while OOS, which creates a good test of robustness. In addition to the nearly optimal pricing curve computed by our MILP formulation in Equation~\eqref{eq:opt_markovian_main}, we introduce three benchmark pricing schemes: (1) the Independent Pricing scheme, based on Myerson’s seminal optimal mechanism for single-item pricing \cite{myerson1981optimal}, though our setting involves multiple interdependent performance metrics/items; and two more flexible approaches, (2) the Shift Pricing scheme and (3) the Jiggle Pricing scheme, which increase in complexity and adaptability to the sample data. These baselines illustrate the trade-off between bias and variance in pricing. See Appendix~\ref{appendix_sec:exp} for further details on each scheme. Furthermore, we also included the optimal pricing scheme for OOS if all performance metrics were available to the buyer.

Figure \ref{fig:bm} displays the performance of our algorithms in the IS and OOS setting, respectively \footnote{We present the results when $|\Theta| = 20, Q= 20$, and we sample $m=100$ trajectories, evaluated across $n=100$ problems on school data in the figures here}. In both figures, the y-axis represents the normalized, expected profit as a fraction of the total welfare (i.e. how much total surplus the buyers can get if all prices are set to $0$). The total welfare represents an upper bound on how well any mechanism can do. The higher a pricing scheme is, the more profit it is able to capture and the better it is. In the IS setting, the MILP outperforms our other algorithms, capturing around 85\% of the total welfare, whereas all others capture less than 80\%. In the adverse OOS setting, MILP also outperforms the other baselines, only trailing the optimal pricing scheme. It is still able to capture a majority of the welfare.
% We see similar results for the taxi data. 

% Given a prior distribution $\mu$ and metric trajectories, we demonstrate that our pricing scheme computed by our MILP \eqref{eq:opt_markovian} can receive higher expected payment compared to the linear pricing scheme and independent pricing scheme introduced above. Specifically, we conduct the following  evaluations.

% \paragraph{Evalution I: Robustness.} Given the price curves computed by  different pricing schemes  under the same prior distribution over buyer types and metric trajectories. We test the profit of these different price curves under a possible slightly different distribution over buyer types and another set of metric trajectories. 

% \paragraph{Evaluation II: Information Revelation.} Compare the revenue in the following two situations: (1) we only show the best performance metric; (2) we show all available performance metrics to give more options to the buyer. This can show the strength of our pricing scheme by revealing more performance metrics.

%With different pricing schemes, we will get different lists of price pairs being sent to the buyer, for them to make a selection. We can show that given the prior distribution and the metric trajectories, the nearly optimal pricing scheme receives a higher expected payment. 

\paragraph{RQ3: How to learn the prior distribution $\mu$?}%\label{exp:learning}
%\hf{target 0.5 page}
In this section, we present the results on learning the prior distribution. Specifically, we assume the buyer's types are distributed according to a true underlying distribution $\mu^*$. We validate the learning method by running it over various distributions, including a distribution that's generated randomly, a uniform distribution, a slightly skewed distribution, a highly skewed distribution, and an extremely skewed distribution. We also vary the learning rate $\eta_t$ from $\frac{1}{t+1}$ to $\frac{1}{\sqrt{t}}$ and $\frac{1}{2}$, representing more conservation to aggressive learning rates. In addition, instead of updating the posterior distribution according to $\mu^{(t+1)} =(1-\eta_t) \cdot \mu^{(t)} + \eta_t \cdot w^{(t)}$ with a single posterior distribution $w^{(t)}$ \eqref{eq:posterior}, we smooth the learning process by proposing a batch size $b$ and do Bayesian update with an averaged posterior $\hat{w}^{(t)} = \frac{\sum_{i \in [b]} w^{(t)}_i}{b}$. Due to the space constraint, we present the results on the random distribution as follows. Results on other distributions reveal the same conclusion.

% \begin{figure}[htbp]
%     \centering
%     \begin{subfigure}[b]{0.32\textwidth}
%         \includegraphics[width=\textwidth]{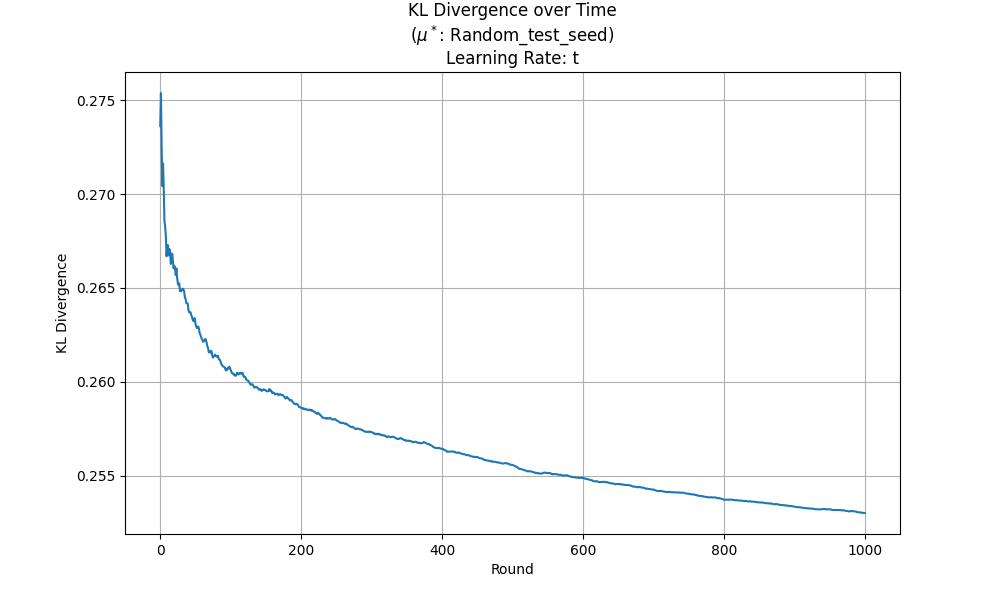}
%         \caption{Batchsize is 100, number of rounds is 1000.}
%     \end{subfigure}
%     \hfill
%     \begin{subfigure}[b]{0.32\textwidth}
%         \includegraphics[width=\textwidth]{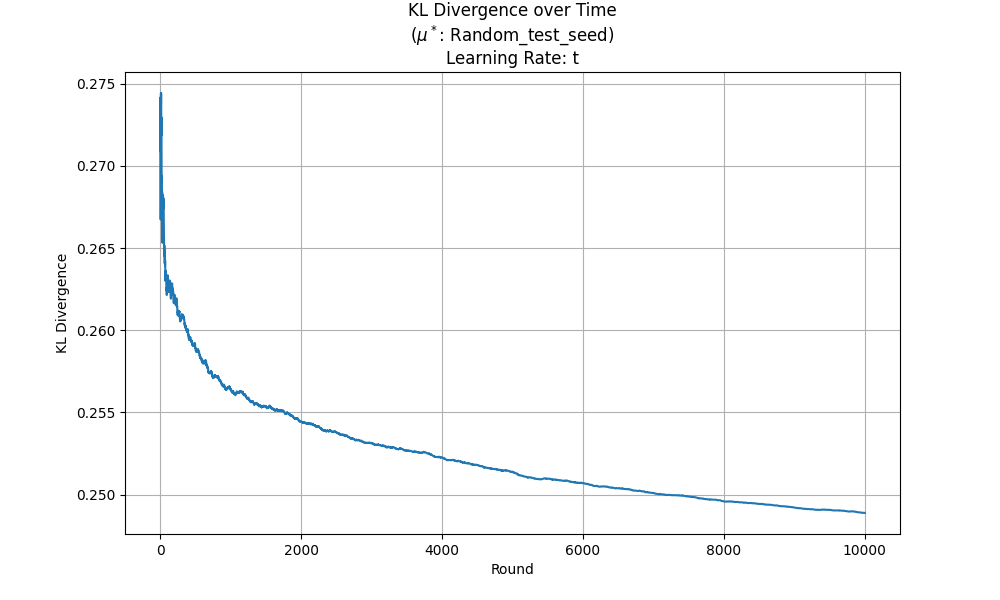}
%         \caption{Batchsize is 10, number of rounds is 10000.}
%     \end{subfigure}
%     \hfill
%     \begin{subfigure}[b]{0.32\textwidth}
%         \includegraphics[width=\textwidth]{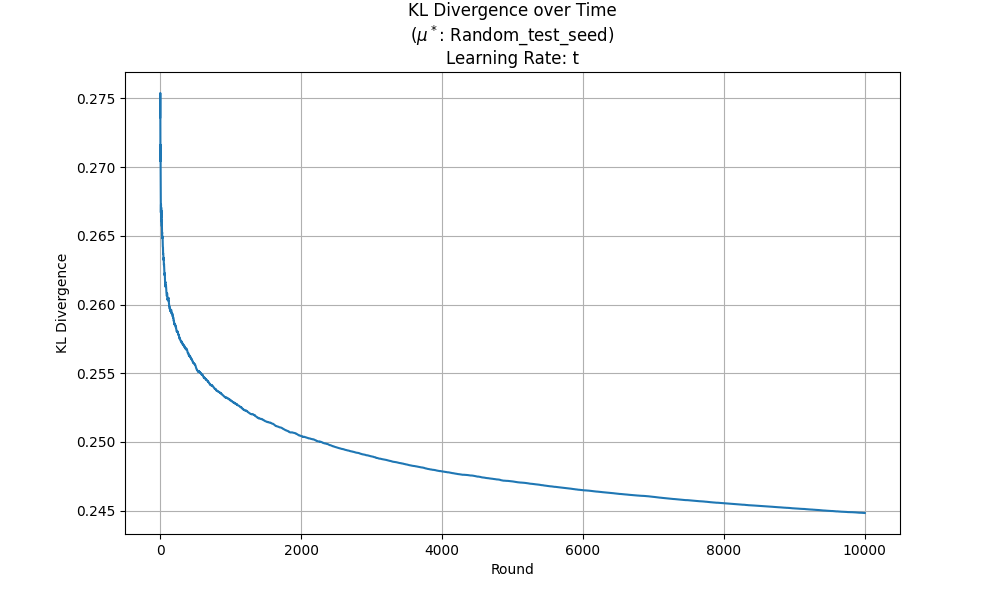}
%         \caption{Batchsize is 100, number of rounds is 10000.}
%     \end{subfigure}
%     \caption{Number of performance levels is 10, number of buyer types is 5, number of time steps is 15, learning rate $\eta = \frac{1}{t+1}$}
%     \label{fig:three_images}
% \end{figure}

\begin{figure}[htbp]
    \centering
    \begin{subfigure}[b]{0.32\textwidth}
        \includegraphics[width=\textwidth]{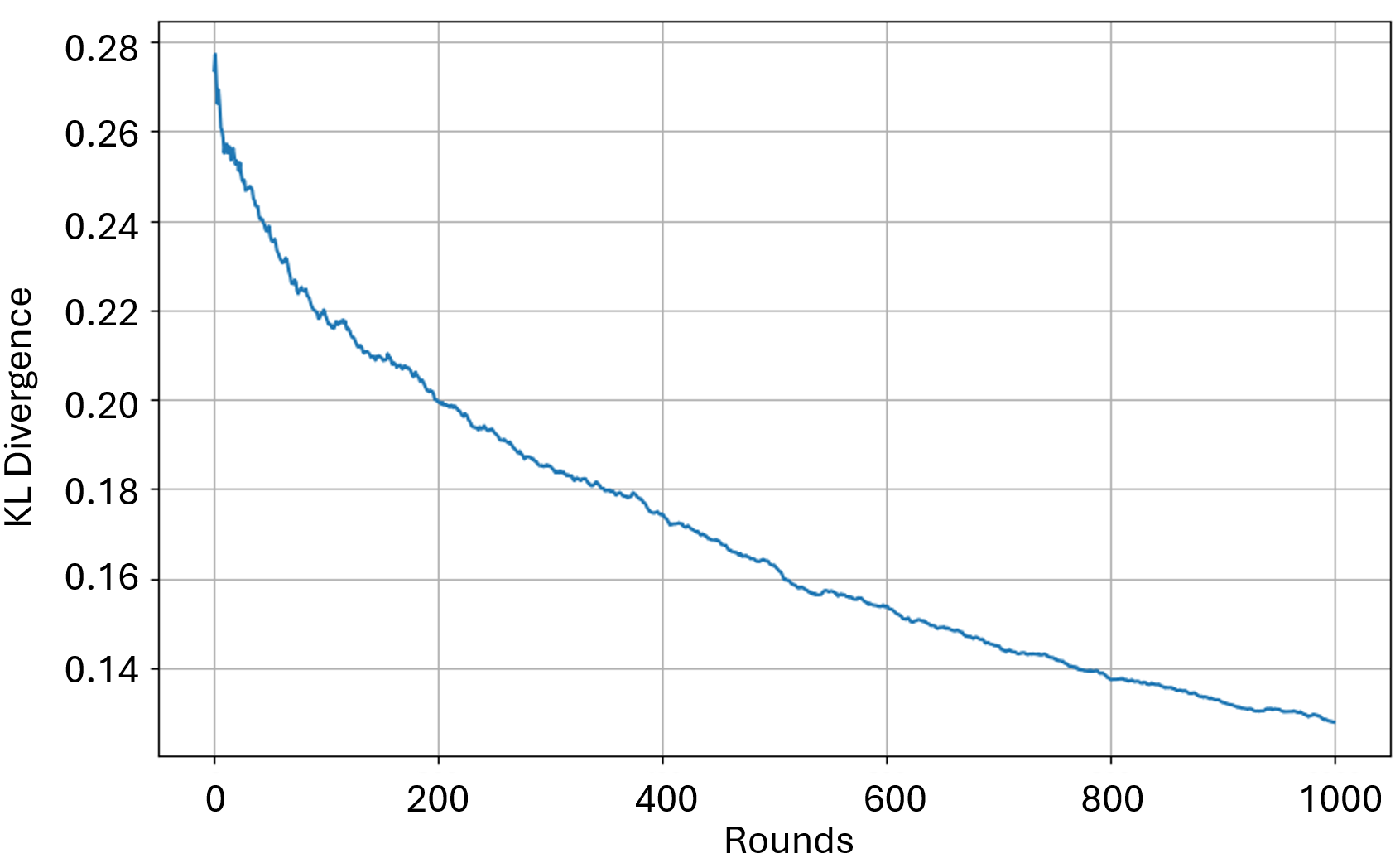}
        \caption{Batchsize is 100, number of rounds is 1000.}
        \label{subfig:main_random_b100_r100_LRsqrt}
    \end{subfigure}
    \hfill
    \begin{subfigure}[b]{0.32\textwidth}
        \includegraphics[width=\textwidth]{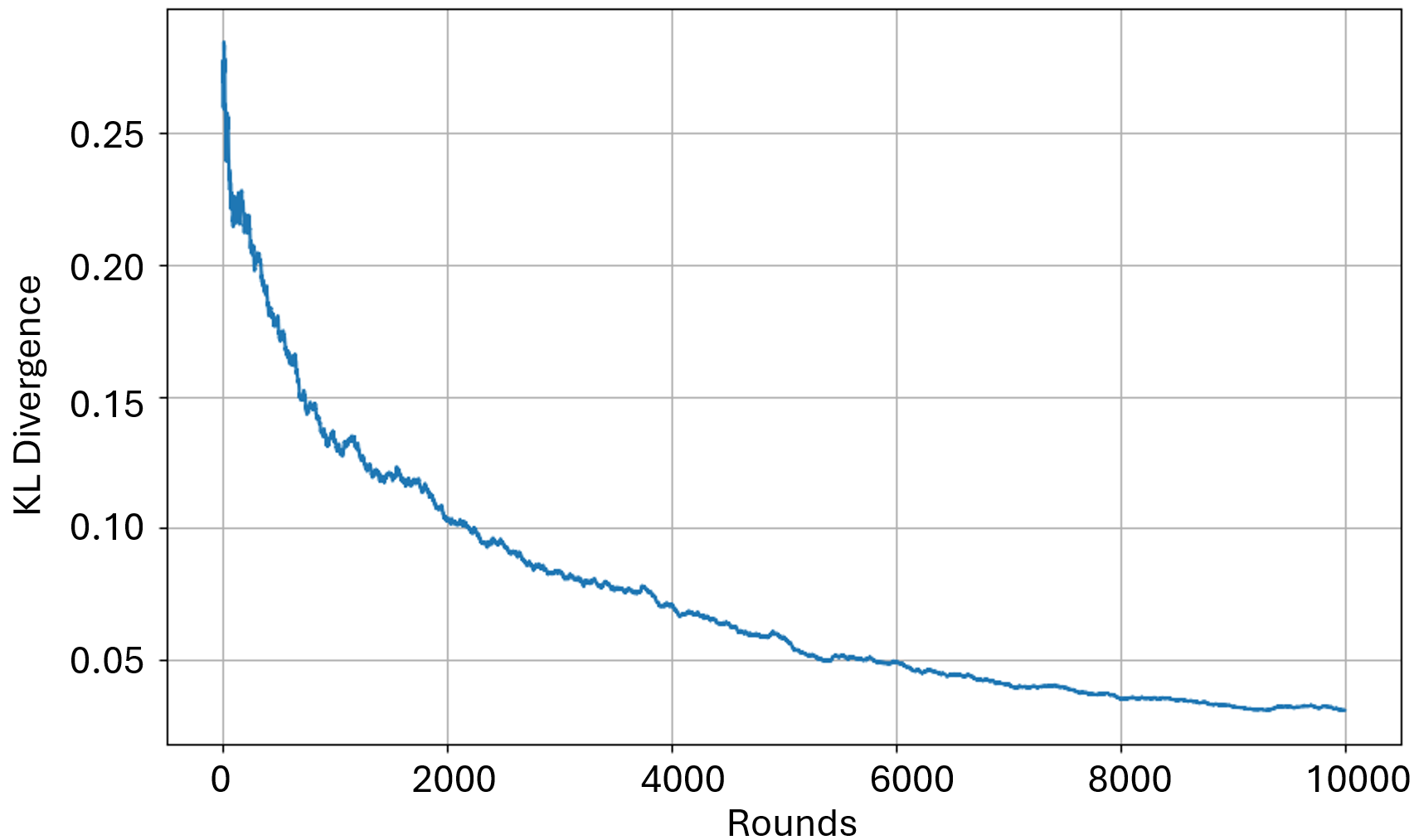}
        \caption{Batchsize is 10, number of rounds is 10000.}
        \label{subfig:main_random_b10_r1000_LRsqrt}
    \end{subfigure}
    \hfill
    \begin{subfigure}[b]{0.32\textwidth}
        \includegraphics[width=\textwidth]{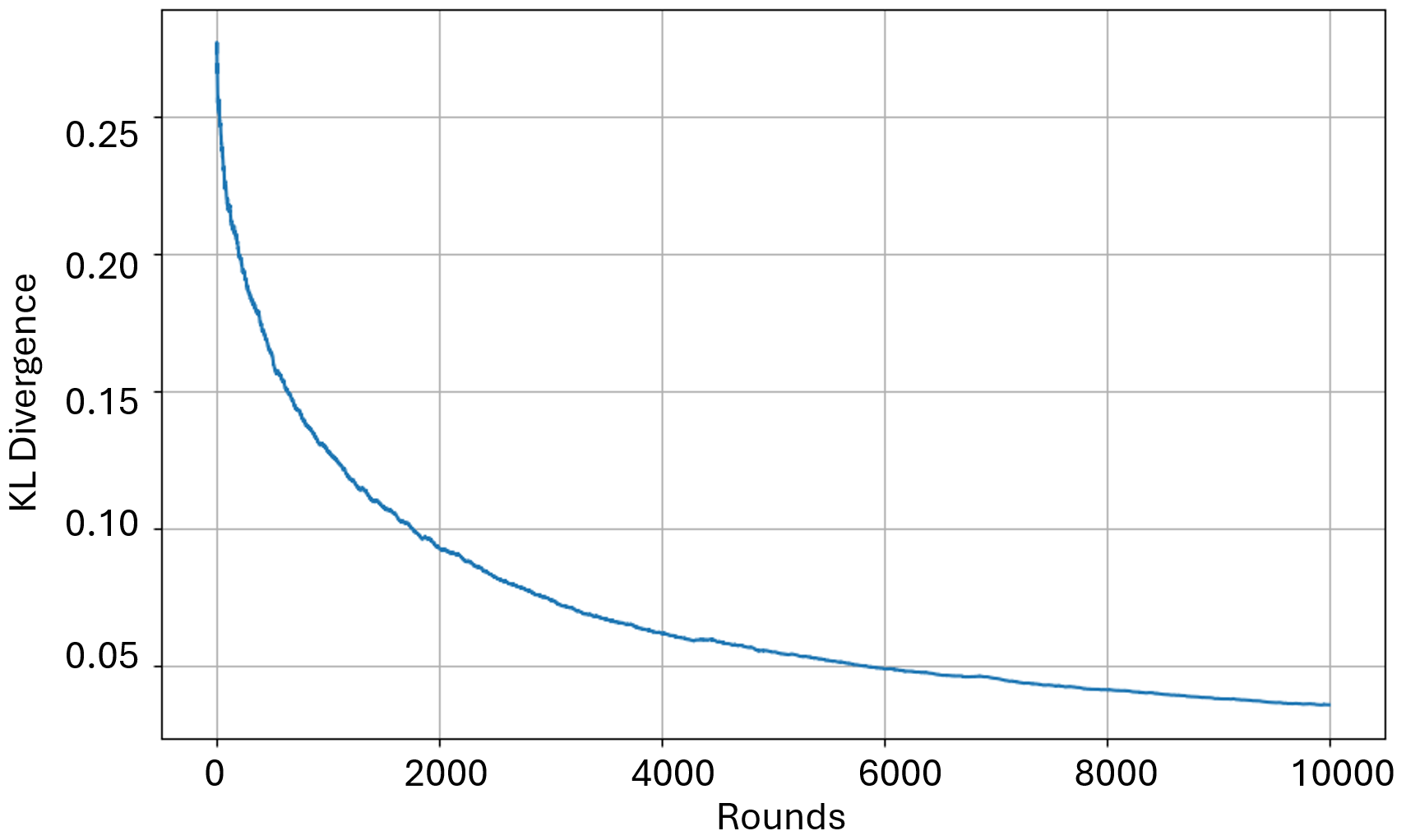}
        \caption{Batchsize is 100, number of rounds is 10000.}
        \label{subfig:main_random_b100_r1000_LRsqrt}
    \end{subfigure}
    \caption{The underlying prior distribution $\mu^* \in \Delta^n$ is a randomly generated distribution. Size of performance metric $|Q|= 10$, e.g., $Q = \{100\%, 99\%, \cdots, 91\%\}$, number of buyer types is $n=5$, number of time steps is $T=15$, learning rate $\eta = 1/\sqrt{t}$. $x$-axis represents the learning rounds, and $y$-axis represents the KL divergence between the learned distribution and the underlying true distribution.}
    \label{fig:three_images_main}
    %%\vspace{-5mm}
\end{figure}

From the experimental results, we can see that our learning method indeed converges to learning the underlying distribution, as the KL divergence is less than $0.025$ and will continue to converge with more learning rounds or a more aggressive learning rate, e.g., $0.5$, see figure \ref{subfig:random_b100_r1000_LRconstant} in Appendix \ref{app:experiments}. In addition, a more aggressive learning rate tends to make the learning process not so stable, which can be smoothed by increasing the batch size, as demonstrated by Figures \ref{subfig:main_random_b10_r1000_LRsqrt} and \ref{subfig:main_random_b100_r1000_LRsqrt}. Results on more distributions and more learning rates can be found in Appendix \ref{app:experiments}.

\section{Conclusion}

In this paper, we study novel techniques to address core challenges in designing a data-augmented AutoML marketplace. By automating data and model discovery and implementing a novel pricing mechanism for data-augmented ML models, our market offers higher quality models for ML users compared to existing platforms, generates high revenue to compensate participants, all while being API-compatible and thus practical to use. Recognizing that designing a fully-functional, end-to-end online AutoML marketplace still requires solving important challenges, we see this work as a step towards the bigger vision of building AutoML marketplaces.

%\paragraph{Acknowledgment. } This work is supported in part by an NSF Award CCF-2303372 and an Army Research Office Award W911NF-23-1-0030. 
\clearpage
\bibliographystyle{ACM-Reference-Format}
\bibliography{ref}

%%%%%%%%%%%%%%%%%%%%%%%%%%%%%%%%%%%%%%%%%%%%%%%%%%%%%%%%%%%%

\appendix

%\newpage
\section{Details on the Augmentation and Model Discovery Algorithm}\label{appd:discovery}
%\vspace{-2mm}
\subsection{Choosing Candidate Augmentation Throughout Iterations}
%\vspace{-2mm}
Our data and model discovery approach uses the following mechanisms to choose the best candidate augmentation in each iteration. This approach has two components.
i) \emph{candidate generation and likelihood estimation}; ii) \emph{adaptive querying strategy}. The first component identifies the candidate augmentations and computes a feature vector of their properties. The second component ranks the different candidate augmentations and iteratively chooses the best augmentation. This augmentation is given to our model discovery algorithm (line 4 in Algorithm~\ref{alg:discovery}).
%\vspace{-2mm}

\mypar{Candidate Generation and likelihood estimation} This component identifies features that can augment $D_{in}$ using database joins. Each candidate feature is processed to compute its vector of data profiles\footnote{Data profiles refer to properties of these features, e.g., fraction of missing values, correlation with the target column.}. These profiles are used to cluster candidate augmentations based on their similarity. Intuitively, augmentations in the same cluster are expected to have a similar model performance. Thus, the discovery approach chooses representatives from each cluster for subsequent stages. The profile-based feature vector for each augmentation is also used to calculate a likelihood score, denoting how likely an augmentation would improve model performance.
%\vspace{-2mm} 

\mypar{Adaptive Querying strategy} This component uses the identified clusters and their likelihood scores to choose an augmentation for training an ML model. This component interleaves between two complementary strategies (sequential and group querying), which are shown to have varied advantages. 
The sequential approach estimates the quality of each candidate augmentation and greedily chooses the best option. The group querying strategy considers augmentation subsets of different sizes to train the ML model. Group selection internally relies on the Thompson sampling method to sample an augmentation for each iteration.

\subsection{Performance Results}

To prove the approximation ratio of our discovery algorithm, we assume that any augmentation $T$ that helps to improve $M^*\leftarrow D\Join T$'s performance metric the most remains consistent across different ML models. 

Note that each model training considers a different ML model for each augmentation. Due to this, a suboptimal augmentation (say $T_1$) may sometimes achieve higher metric than the optimal augmentation $T_2$ because $T_2$ was evaluated with a low-quality ML model. To address this challenge, we consider the top-$t$ tested augmentations again and evaluate their metric with respect to the same model (best model based on the score $u$). In our experiments, choosing $t=\log n$ sufficed to identify the best augmentation, and increasing $t$ helps whenever the set $\mathbf{M}$ has very diverse models with very different quality. Note that our complexity analysis is based on the highest value of $t = n$. We prove the following result about the effectiveness of our approach.

\begin{proposition}
    Given an initial dataset $D_{in}$, set of augmentations $\mathbf{P}$ and ML models $\mathbf{M}$, our algorithm identifies a solution $T\subseteq \mathbf{P}$ and $\hat{M}\in \mathbf{M}$ in $O(\log (|\mathbf{P}|) + |\mathbf{M}| )$ ML model trainings such that 
    $$m(\hat{M}\leftarrow D_{in}\Join T) \geq m(M^*\leftarrow D_{in}\Join T^*) \times \left(\frac{1}{\alpha}(1-e^{-\alpha \eta}) - k\epsilon \right)$$ in cases where the best augmentation for the most powerful model is no worse than other augmentations for other models. $T^*, M^*$ denote the optimal solution and
    $k,\epsilon$ are constants, $\alpha$ and $\eta$ are curvature and submodularity ratio of $m$.\label{thm:discovery}
\end{proposition}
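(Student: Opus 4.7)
The plan is to decompose the guarantee into two largely independent pieces: (i) the quality of the augmentation $T$ that the algorithm returns when the performance metric $m$ is treated as a set function of joined features, and (ii) the quality of the model $\hat{M}$ chosen from $\mathbf{M}$ by the Exp3-style multi-armed bandit step. The assumption in the proposition---that the best augmentation for the most powerful model is no worse than other augmentations for other models---is exactly what is needed to glue (i) and (ii) together, because it lets us compare the pair $(\hat{M}, T)$ returned by the algorithm against the joint optimum $(M^\ast, T^\ast)$ without worrying about cross-model interactions on augmentation quality.

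First I would analyze the augmentation side. The performance metric $m(\cdot \leftarrow D_{in} \Join \cdot)$ is treated as a set function over $\mathbf{P}$ with curvature $\alpha$ and submodularity ratio $\eta$. The augmentation selection subroutine used in line~3 of Algorithm~\ref{alg:discovery} is the clustering-plus-greedy sampling strategy inherited from Metam~\cite{galhotra2023metam}; under that work's approximation result, a greedy scheme over a function with curvature $\alpha$ and submodularity ratio $\eta$ achieves a multiplicative guarantee of $\tfrac{1}{\alpha}(1-e^{-\alpha\eta})$ against the best attainable set of augmentations. So conditional on being evaluated with the ``correct'' model, the augmentation $T$ returned by the algorithm satisfies $m(M \leftarrow D_{in}\Join T) \geq \tfrac{1}{\alpha}(1 - e^{-\alpha \eta}) \cdot m(M \leftarrow D_{in}\Join T^\ast)$ for the reference model $M$.

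Next I would handle the model-selection side. The Exp3 update in lines~4--5 treats each $M \in \mathbf{M}$ as an arm with reward given by the realized metric. Standard Exp3 regret analysis~\cite{auer2002nonstochastic} yields an expected regret of $O(\sqrt{|\mathbf{M}|\,N \log |\mathbf{M}|})$ after $N$ rounds, so the average per-round gap between the sampled model's reward and the best model's reward is at most some $\epsilon = O(\sqrt{|\mathbf{M}|\log|\mathbf{M}|/N})$. After the exploration phase, we retrain the top-$t$ candidate augmentations with the best-scoring model $\hat M$ to remove the noise introduced by evaluating different augmentations under different arms; this is precisely where the $k\epsilon$ additive loss originates, with $k$ accounting for the number of retraining comparisons and the normalization of $m$. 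Combined with the consistency assumption, $\hat{M}$ on the chosen $T$ is within $k\epsilon$ of $M^\ast$ on $T$, and the augmentation bound above lifts this to the claimed $\tfrac{1}{\alpha}(1 - e^{-\alpha\eta}) - k\epsilon$ ratio against $m(M^\ast \leftarrow D_{in}\Join T^\ast)$.

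Finally I would account for complexity. The Metam-style adaptive querying prunes the candidate set by roughly a constant factor per round and converges in $O(\log |\mathbf{P}|)$ iterations; each such iteration trains a single model, contributing $O(\log |\mathbf{P}|)$ trainings. The bandit warm-up and the top-$t$ retraining with $t = O(|\mathbf{M}|)$ in the worst case contribute an additional $O(|\mathbf{M}|)$ trainings, giving the stated $O(\log |\mathbf{P}| + |\mathbf{M}|)$ bound. The hard part will be the clean bookkeeping in the middle step: making precise how the bandit regret, stated as an expected per-round gap, translates into a deterministic-looking additive loss $k\epsilon$ on the final pair $(\hat{M},T)$, since the augmentation selected by Metam and the model chosen by Exp3 are correlated through the shared reward signal. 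The consistency assumption is what lets us sidestep a joint analysis, by guaranteeing that optimizing augmentations against the eventually-best model cannot hurt us relative to the joint optimum.
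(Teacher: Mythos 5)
Your decomposition into an augmentation-quality step and a model-selection step matches the paper's structure, and your use of the consistency assumption and the complexity accounting are both on target. However, there is a genuine misattribution at the heart of your middle step: the additive loss $k\epsilon$ in the statement does \emph{not} come from Exp3 regret. In the paper's proof it comes from the Metam clustering of candidate augmentations --- restricting the greedy search to cluster representatives loses at most $k\epsilon$ relative to the unclustered optimum --- and this is combined with the $\frac{1}{\alpha}(1-e^{-\alpha\eta})$ greedy guarantee to give the full factor, all evaluated with respect to $M^*$ on the returned set $T$. The bandit machinery never enters the approximation bound at all.

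This matters because the step you flag as ``the hard part'' --- converting an expected per-round Exp3 regret into a deterministic additive loss on the final pair $(\hat M, T)$ --- would not go through as you sketch it: adversarial-bandit regret bounds control cumulative reward during exploration, not the suboptimality of the single arm you commit to at the end (that would require a best-arm-identification argument). The paper sidesteps this entirely: the final step of Algorithm~\ref{alg:discovery} re-evaluates the top candidates and returns $\arg\max_{M} m(M\leftarrow D_{in}\Join \hat T)$, so the model choice is an exact maximization over $\mathbf{M}$ on the fixed augmentation $T$, giving $m(\hat M\leftarrow D_{in}\Join T)\ge m(M^*\leftarrow D_{in}\Join T)$ with no additional loss, at a cost of $O(|\mathbf{M}|)$ extra trainings. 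Chaining this inequality with the Metam bound $m(M^*\leftarrow D_{in}\Join T)\ge m(M^*\leftarrow D_{in}\Join T^*)\left(\frac{1}{\alpha}(1-e^{-\alpha\eta})-k\epsilon\right)$ finishes the proof. If you reassign $k\epsilon$ to the clustering step and replace your regret-translation argument with this exact final maximization, your outline becomes the paper's proof.
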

\begin{proof}
    Using the result from \cite{galhotra2023metam}, the optimal solution after clustering the candidate augmentations is $(1-k\epsilon)$ an approximation of the overall optimal solution. Further, the sequential querying strategy chooses the best augmentation in each iteration (this holds for any model because the best augmentation according to $M^*$ is also the best according to any other model $M$). Combining these results, we get the following~\cite{galhotra2023metam}.
    \begin{small}
    \begin{equation}\label{eq:metam}
    %\begin{split}
        m(M^*\leftarrow D_{in}\Join T) 
        \geq m(M^*\leftarrow D_{in}\Join T^*) \times \left(\frac{1}{\alpha}(1-e^{-\alpha \eta})- k\epsilon\right )
    %\end{split}
    \end{equation}
    \end{small}
    As a last step, Algorithm~\ref{alg:discovery} chooses the best ML model for the returned augmentation $T$, implying $m(\hat{M}\leftarrow D_{in}\Join T)\geq m(M^*\leftarrow D_{in}\Join T)$.
    Combining this with Equation~\ref{eq:metam} results, we get the desired result.
\end{proof}

\section{Omitted Proofs from Section \ref{sec:economic}} \label{appendix:proof:econ}
\begin{proposition} The optimal buyer policy can be computed efficiently via dynamic programming in $O(Q^2T)$ time. 
\end{proposition}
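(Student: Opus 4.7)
The plan is to formulate the buyer's problem as a finite-horizon Markov decision process (MDP) with state $(q_1, q_2, t) \in Q \times Q \times [T]$ --- where $q_1$ is the past observed metric with highest surplus $s(q) := v^\theta(q) - x(q)$, $q_2$ is the latest realized metric, and $t$ is the current time --- and then solve it by backward induction on $t$ using the Bellman recursion displayed in the proposition.

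First I would verify that $(q_1, q_2, t)$ is a sufficient statistic. Two observations do the work: (i) by the Markov property of $\{\boldsymbol{P}_t\}$, the law of the remaining trajectory conditional on the entire history depends only on $(q_2, t)$; and (ii) upon stopping, the buyer's optimal purchase is whichever past metric maximizes $s(\cdot)$, since the already-paid time cost $ct$ is sunk and thus does not affect the argmax. A short induction on $T - t$ from the terminal step then shows that the $\Phi(q_1, q_2, t)$ defined by the recursion equals the true optimal expected payoff given the history. The Bellman equation itself drops out: at $(q_1, q_2, t)$ the buyer picks between stopping (payoff $s(q_1) - ct$) and continuing one step, which draws $q \sim \boldsymbol{P}_t(\cdot \mid q_2)$ and transitions to $(q_1', q, t+1)$ with $q_1'$ updated to the argmax of $s(q_1)$ and $s(q)$. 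The boundary is $\Phi(q_1, q_2, T) = s(q_1) - cT$.

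For the runtime I would fill $\Phi$ in reverse order of $t$. The table has $Q \cdot Q \cdot T = O(Q^2 T)$ entries; the stopping value at each entry is $O(1)$, while the continuation term is naively an $O(Q)$ expectation. To reach the claimed $O(Q^2 T)$ total, the key structural observation is that $\Phi$ depends on $q_1$ only through the scalar $s(q_1)$, which takes at most $Q$ distinct values. After sorting the metrics by $s(\cdot)$ once, for each $(q_2, t)$ I would precompute cumulative sums of $\boldsymbol{P}_t(q \mid q_2)\,\Phi(q, q, t+1)$ along the sorted order; this supplies the contribution from $\{q : s(q) > s(q_1)\}$ in $O(1)$ per $q_1$, and the complementary contribution is maintained as a running dot product that updates as $q_1$ sweeps through the sorted surplus values.

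I expect the main obstacle to be the amortization for the complementary contribution $\sum_{q : s(q) \le s(q_1)} \boldsymbol{P}_t(q \mid q_2)\,\Phi(q_1, q, t+1)$, which depends on $q_1$ both through the threshold and through the evaluation of $\Phi$ itself. I would resolve this by proving inductively that, for fixed $(q_2, t)$, the map $q_1 \mapsto \Phi(q_1, q_2, t)$ is piecewise linear in $s(q_1)$ with breakpoints at the sorted surplus values, and verifying that this form is preserved under the Bellman update so that the running dot product can be amortized across the sweep. Everything else --- setting up the MDP, deriving the recursion, and checking the boundary --- is standard; this invariant and its preservation is where the technical work concentrates.
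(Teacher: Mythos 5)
Your core construction is exactly the paper's: the same state space $(q_1,q_2,t)$ with $q_1$ the maximal-surplus metric seen so far and $q_2$ the latest realization, the same Bellman recursion with terminal condition $\Phi(q_1,q_2,T)=v^\theta(q_1)-x(q_1)-cT$, and backward induction over a $Q\times Q\times T$ table. Where you diverge is on the runtime accounting, and here you are actually more careful than the paper. The paper's proof simply asserts that ``filling each state takes constant time,'' but the continuation term $\mathbb{E}_{\boldsymbol{P}(q\mid q_2)}[\Phi(\cdot,q,t+1)]$ is a sum over $Q$ successor metrics, so the naive cost is $O(Q)$ per entry and $O(Q^3T)$ total; you correctly flag this and propose an amortization (sort by surplus, prefix sums for the $\{q: s(q)>s(q_1)\}$ part, a running dot product for the complement). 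That amortization is the one place your argument is not yet closed: for the complementary sum $\sum_{q:s(q)\le s(q_1)}\boldsymbol{P}_t(q\mid q_2)\,\Phi(q_1,q,t+1)$, when $q_1$ sweeps past a surplus breakpoint, the piecewise-linear representation of \emph{every} term $\Phi(q_1,q,t+1)$ in the running sum may switch pieces simultaneously, so it is not evident that the dot product can be maintained in $O(1)$ amortized time per step rather than $O(Q)$; your invariant needs to additionally control how many terms change coefficients per breakpoint. In short: same proof as the paper for the DP itself, a legitimate extra concern about the claimed complexity that the paper glosses over, and a proposed fix whose key amortization step still needs to be verified.
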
 
% We defer the proof to 
% % \begin{proposition}\label{prop:markovian_response}
% % Given a Markov leader policy $\pi: [T] \times [n] \rightarrow \Delta^m$, every follower's corresponding optimal response strategy within the interaction period $T$ can be computed efficiently via dynamic programming.
% % \end{proposition}

%\jon{maybe we should add some more explanation/intuition for the proof here. for ex. it would be good to remark that $q_1$ is the previous best. I added the following explanation paragraph -- maybe someone could check that it is correct}

% \jon{I assume that computation costs are already factored in}

%Our dynamic programming algorithm constructs one DP table $\Phi$ with shape $Q \times Q \times T$. Each entry $\Phi(q_1, q_2, t)$ in the table records the buyer's optimal expected future reward the buyer can achieve at state $(q_1, q_2, t)$, i.e., at time step $t$, the maximal performance  metric the buyer has seen is $q_1$, while the realized performance metric at time step $t$ is $q_2$. Since the underlying process is a known MDP, the buyer's future expectations depend only on the current metric level $q_2$. Using this table, we can then compute the buyer's optimal stopping policy $\tau^*$ by comparing the difference between the buyer's utility if they stop right now and the expected future utility of continuing. Next, we present the proof formally as follows: 

\begin{proof}
    We propose the following Algorithm \ref{alg:dp_buyer} to compute the optimal buyer decisions with respect to a given price curve $x \in \mathbb{R}^Q_{+}$.
    \begin{algorithm}[tbh]
    \small
    \caption{\textsc{OptimalBuyerDecisions}}\label{alg:dp_buyer}
    \textbf{Input: } Price curve $x \in \mathbb{R}^Q_{+}; \,\,\,\,$ \,\,
    \textbf{Output}: Buyer's optimal policy $\tau^*$
    %\textbf{Output}: Your algorithm's output
    \begin{algorithmic}[1] %[1] enables line numbers
        \STATE Initialize a three-dimensional DP table $\Phi(Q, Q, T)$, where each element \[\textstyle\Phi(q_1, q_2, t) = \max_\tau \phi^\tau(q_1, q_2, t), \text{ as defined in Section \ref{sec:economic}.}\] \\
        \STATE Initialize policy table $\tau^*$ of size $Q\times T$ where $\tau^*(q, t)=0/1$ represent buyer stops/continues at $q$ at time $t$.
        \FOR{$q_1, q_2 \in Q$} 
        \STATE Compute the base-case utility at $t=T$, i.e., 
        \[\textstyle
        \Phi(q_1,q_2,T) = v^\theta(q_1) - x(q_1) - c\cdot T,
        \] 
        since $T$ is the last round the buyer can only stop.\\
        \STATE Set $\tau^*(q_2, T) = 0$.
        \ENDFOR
        \FOR{$t = T-1, \cdots, 1$}
        \STATE Compute $\Phi(q_1, q_2, t)$ by a simple dynamic program solved in decreasing order of $t$, i.e.,
        \begin{small}\begin{equation}\label{eq:algo}\begin{split}\Phi(q_1, q_2, t) =  &\max\Big(v^\theta(q_1) - x(q_1) - c \cdot t, \\ &\mathbb{E}_{\boldsymbol{P}(q|q_2)} \big[ \Phi(\argmax_{\{q_1, q\}}(v^\theta(q_1) - x(q_1), v^\theta(q) - x(q)), q, t+1)\big] \Big).\end{split}\end{equation} \end{small}\\
        \IF{$v^\theta(q_1) - x(q_1) - c\cdot t$ is larger in \eqref{eq:algo}}
        \STATE Set $\tau^*(q_2, T) = 0$ which means the buyer should stop.
        \ELSE
        \STATE Set $\tau^*(q_2, T) = 1$ which means the buyer should continue searching.
        \ENDIF
        %Compute $U[t][j_{t-1}][j_t] = u_f(\x^t_{j_{t-1}}, j_t) + \max \big(U[t+1][j_t][\cdot] \big)$ 
        \ENDFOR
        % \FOR{$t \in [1, \cdots, T]$}
        % \STATE $j^*_t = \argmax_{j_t} U[t][j^*_{t-1}][j_t]$ \COMMENT{Let $j^*_0 = 1$, it does not influence $\x^1$ (i.e. initialized strategy).}
        % \ENDFOR
        \STATE \textbf{return} $\tau^*$
    \end{algorithmic}
    \end{algorithm}
    The dynamic program algorithm \ref{alg:dp_buyer} has a DP table with $Q\times Q\times T$ states, where filling each state takes constant time. As a result, the total running time of the algorithm is $O(Q^2T)$, proving the proposition.
\end{proof}

\subsection{Proof of Theorem \ref{thm:sample_approx_main}}\label{appendix_sec:proof_Section_pricing}

With a common prior distribution $\mu$ over the buyer population, we propose to estimate the Markov chain process $\{\boldsymbol{P}_t\}_{t \in [T]}$ by sampling metric trajectories from the empirical data discovery process. Specifically, we sample a set of metric trajectories $S$, where $s\in S$ represents a sequence of model metrics discovered in a prefixed period (i.e., $s = [q_1, \cdots, q_T]$). This approach is often more practical in real-world settings where accurate information may be limited or costly to obtain. By utilizing sampled metric trajectories, the approach can provide useful insights and make decisions based on available information, enabling practical and efficient implementations. In addition, by sampling from metric trajectories, the method is inherently robust to uncertainties and noise present in the data. As we shall see in the evaluation section, this robustness allows our algorithm to handle noisy or incomplete data more effectively.

Given a set of metric trajectories $S$, then the market's optimal pricing problem can be computed by the following optimization program.
\begin{equation}\label{eq:original}
\begin{split}
    \max \sum_\theta \mu(\theta) \frac{1}{m}\sum_{s_i \in S} x\big(q^*(\theta, x, s_i)\big)
\end{split}
\end{equation}
where $q^*(\theta, x, s_i) =\arg \max_{q \in s_i} v^\theta(q) - x(q) $ denotes the buyer type $\theta$'s optimal choice of the performance metric under the contract $x$ and sampled metric trajectory $s_i$. As a result, solving the optimal pricing curve involves solving the above complicated bi-level optimization program. Next, we show that this optimization problem can be formulated as Mixed Integer Linear Programming (MILP) which can be solved efficiently by industry-standard solvers such as Gurobi \cite{gurobi} and Cplex \cite{cplex2009v12}.

We divided the proof of our Theorem \ref{thm:sample_approx_main} into two parts. Lemma \ref{prop:milp} showed that the MILP \eqref{eq:opt_markovian} computes the optimal pricing curve $\hat{\x}$ on the sample $\hat{S} \subseteq S$. Then lemma \ref{thm:sample_approx} showed that using the estimated pricing curve $\hat{\x}$ is an additively $\epsilon$-approximation to the optimal curve $\x^*$ on the population $S$. Hence, combining our two lemmas gives us our theorem. 
%\hf{provide 2 or 3 more sentences to explain what these two parst are about. } 
\begin{lemma}\label{prop:milp}
    Given sampled metric trajectories set $S$, the optimal price curve that maximizes the expected payment \eqref{eq:original} can be computed by a MILP.
    \begin{equation} \label{eq:opt_markovian}
    \begin{split}
        \max \quad  &\sum_\theta \mu(\theta) \sum_{s\in S} \frac{1}{m} \sum_{q \in s}  y(\theta, s, q)  \\
        \text{s.t.} \quad &0 \le a_{\theta, s} - \big[ v^\theta(q) - x(q)\big] \le M\big(1-z
        (\theta, s, q)\big), \,\, \forall \theta, s, q \\
        &\textstyle \sum_{q \in s} z(\theta, s, q) = 1, \,\, \forall \theta, s, \\
        &y(\theta, s, q) \le x(q); \quad y(\theta, s, q) \le Mz(\theta, s, q), \,\, \forall \theta, s, q \\
        & y(\theta, s, q) \ge x(q) - \big(1-z(\theta, s, q)\big)M, \,\, \forall \theta, s, q \\
        &\boldsymbol{x} \ge 0; \quad \boldsymbol{z} \in \{0, 1\}; \quad \boldsymbol{y} \ge 0.
    \end{split}
\end{equation}
\end{lemma}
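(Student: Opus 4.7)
The plan is to show that the MILP's optimal value equals the optimal value of the bi-level program \eqref{eq:original}, with the optimal pricing curve extracted directly from the MILP solution. I will prove this by showing that the MILP's feasible region, projected onto the $x$-coordinates, coincides with the set of all pricing curves, and that for each such $x$ the MILP's objective value agrees with the original objective. The argument proceeds by analyzing the three blocks of auxiliary variables ($a$, $z$, $y$) in order.

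First, I would show that the constraints involving $a_{\theta,s}$ and $z(\theta,s,q)$ jointly encode the buyer's best response. The inequality $a_{\theta,s} \geq v^\theta(q) - x(q)$ (the left part of the first constraint) forces $a_{\theta,s}$ to be an upper bound on every surplus along trajectory $s$. The right part, $a_{\theta,s} - [v^\theta(q) - x(q)] \leq M(1 - z(\theta,s,q))$, is vacuous when $z(\theta,s,q) = 0$ (provided $M$ is large enough), but collapses to equality $a_{\theta,s} = v^\theta(q) - x(q)$ when $z(\theta,s,q) = 1$. Combined with the simplex constraint $\sum_{q \in s} z(\theta,s,q) = 1$, exactly one $q$ per $(\theta,s)$ is selected, and this $q$ must attain the maximum surplus $\max_{q \in s}[v^\theta(q) - x(q)]$. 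Hence $z(\theta,s,\cdot)$ indicates precisely an argmax, which matches the buyer's best response $q^*(\theta, x, s)$ from \eqref{eq:original}.

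Second, I would verify that the four constraints on $y(\theta,s,q)$ linearize the product $x(q)\cdot z(\theta,s,q)$ via a standard big-$M$ argument: if $z(\theta,s,q) = 0$, the constraint $y \leq M z$ together with $y \geq 0$ forces $y = 0$; if $z(\theta,s,q) = 1$, the constraints $y \leq x(q)$ and $y \geq x(q) - (1 - z)M = x(q)$ pin $y$ to $x(q)$. These equalities hold at every feasible MILP point, not merely at optimal ones, so the MILP objective $\sum_\theta \mu(\theta) \frac{1}{m} \sum_{s \in S} \sum_{q \in s} y(\theta, s, q)$ reduces to $\sum_\theta \mu(\theta) \frac{1}{m} \sum_{s \in S} x(q^*(\theta, x, s))$, which is exactly \eqref{eq:original}. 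For the converse direction of the equivalence, any pricing curve $x$ can be extended to a MILP feasible point by setting $z$ to the indicator of its induced buyer response and $a$, $y$ to the values dictated by the linearization, so the images of the two feasible regions in $x$ coincide and their objective values agree pointwise.

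The main obstacle is the tie-breaking subtlety when several metrics attain the argmax of $v^\theta(q) - x(q)$. In that case the MILP, being a maximization problem over $z$, will pick the tied $q$ with the largest $x(q)$, whereas \eqref{eq:original} leaves $q^*$ ambiguous. I would resolve this by adopting the standard mechanism-design convention of seller-favorable tie-breaking, under which the MILP's implicit choice agrees with the intended definition of $q^*$. A secondary technical point is the choice of $M$: it must dominate both the maximum achievable surplus (to slacken the $a$-constraint when $z = 0$) and any candidate optimal price (to slacken the $y$-constraint when $z = 0$); both are bounded by the valuation bound $b$, so taking $M$ of order $b$ suffices.
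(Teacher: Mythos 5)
Your proposal is correct and follows essentially the same route as the paper's proof: encoding the buyer's best response via the big-$M$ constraints on $a_{\theta,s}$ and $z$, then linearizing the bilinear product $x(q)z(\theta,s,q)$ with the auxiliary variable $y$. Your additional remarks on seller-favorable tie-breaking and the explicit lower bound on $M$ are refinements the paper leaves implicit, but they do not change the structure of the argument.
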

\begin{proof}
First of all, we represent the buyer's choice as a binary decision variable $z(\theta, s, q) \in \{0, 1\}$ where $s \in S$ and $q \in s$. $z(\theta, s, q) = 1$ means $q$ is the buyer's optimal choice among all performance metrics in $s$. To model this choice, we propose the following constraint
\begin{equation}\label{eq:optimal_choice}
    0 \le a_{\theta, s} - \big[ v^\theta(q) - x(q)\big] \le M\big(1-z
        (\theta, s, q)\big)
\end{equation}
where $a_{\theta, s}$ is a decision variable. Note that under constraint \eqref{eq:optimal_choice}, buyer $\theta$'s choice of $q$ where $z(\theta, s, q) = 0$ satisfies $v^\theta(q) - x(q) \le a_{\theta, s}$ while $z(\theta, s, q) = 1$ satisfies $v^\theta(q) - x(q) = a_{\theta, s}$. As a result, \eqref{eq:optimal_choice} correctly models the buyer's optimal choice and we can rewrite the optimization program as follows
\begin{equation}\label{eq:mip}
    \begin{split}
        \max \quad  &\sum_\theta \mu(\theta) \sum_{s\in S} \frac{1}{m} \sum_{q \in s}  x(q) z(\theta, s, q)  \\
        \text{s.t.} \quad &0 \le a_{\theta, s} - \big[ v^\theta(q) - x(q)\big] \le M\big(1-z
        (\theta, s, q)\big), \,\, \forall \theta, s, q; \\
        &\textstyle \sum_{q} z(\theta, s, q) = 1, \,\, \forall \theta, s; \\
        %&y(\theta, s, q) \le x(q); \quad y(\theta, s, q) \le Mz(\theta, s, q), \,\, \forall \theta, s, q \\
        %& y(\theta, s, q) \ge x(q) - \big(1-z(\theta, s, q)\big)M, \,\, \forall \theta, s, q \\
        &\boldsymbol{x} \ge 0; \quad \boldsymbol{z} \in \{0, 1\}.%; \quad \boldsymbol{y} \ge 0.
    \end{split}
\end{equation}
where $x(q) z(\theta, s, q)$ contributes to the expected payment only when $z(\theta, s, q) = 1$. Thus, we have \[\sum_\theta \mu(\theta) \sum_{s\in S} \frac{1}{m} \sum_{q \in s}  x(q) z(\theta, s, q) = \sum_\theta \mu(\theta) \frac{1}{m}\sum_{s_i \in S} x\big(q^*(\theta, x, s_i)\big).\] However, the above program involves the multiplication of two decision variables, which still cannot be efficiently solved by the industry-standard optimization solvers. Our final step is to linearize the multiplication of these two decision variables by introducing one more variable
\begin{equation}\label{eq:linearize1}
    y(\theta, s, q) = x(q) z(\theta, s, q).
\end{equation}
In order to linearized the multiplication, we need $y(\theta, s, q) = x(q)$ when $z(\theta, s, q) = 1$, and $y(\theta, s, q) = 0$ otherwise. As a result, we propose the following constraints for linearizing $x(q) z(\theta, s, q)$.
\begin{equation}\label{eq:linearize2}
    \begin{split}
         &y(\theta, s, q) \le x(q); \quad y(\theta, s, q) \le Mz(\theta, s, q), \,\, \forall \theta, s, q \\
        & y(\theta, s, q) \ge x(q) - \big(1-z(\theta, s, q)\big)M, \,\, \forall \theta, s, q \\
        &\boldsymbol{y} \ge 0
    \end{split}
\end{equation}
where $M$ is a super large constant. Combining \eqref{eq:mip} -- \eqref{eq:linearize2} finishes the proof of the proposition, and the MILP \eqref{eq:opt_markovian} has $\boldsymbol{x}$ (of size $|Q|$), $\boldsymbol{y}$ (of size $|\Theta||S||Q|$), $\boldsymbol{z}$ (of size $|\Theta||S||Q|$), and $\boldsymbol{a}$ (of size $|\Theta||S|$) as decision variables.
%We propose the following program to solve the optimal pricing schemes.
%where $\boldsymbol{x}$ (of size $|Q|$), $\boldsymbol{y}$ (of size $|\Theta||\tau||Q|$), $\boldsymbol{z}$ (of size $|\Theta||\tau||Q|$), and $\boldsymbol{a}$ (of size $|\Theta||\tau|$) are decision variables. We remark that both $|\Theta|$ and $Q$ are constants.
\end{proof}

Lemma \ref{prop:milp} shows that given a set of metric trajectories, the price curve that maximizes the expected payment can be computed by MILP. Given a set of finite trajectories $H$ and corresponding valuations $V$,  we denote the theoretically optimal profit the market can achieve as $\bar{g}(\x^*, S)$ \footnote{The objective value also depends on $\y$ and $\z$, which is implicitly decided through the constraints of \eqref{eq:opt_markovian} once $\x$ is given. For the sake of simplicity in notation, we omit to mention these two variables since $x$ is the only decision variable for the market}, where $S = (H,V)$ and the profit is averaged over all tasks $s \in S$, by solving the program \eqref{eq:opt_markovian}. For some buyer and corresponding trajectory $s_i \in S$, we will let $g_i(\x, s_i)$ denote the profit the market earns from this buyer when using the pricing scheme.  We will assume that the buyer's valuations $v_i$ are bounded by some constant $b$, where $0 \le v_i \le b$. Hence, the profit is also bounded, where $0 \le g_i(\x, s_i) \le b$. Next, we show that by sampling a subset of valuations and trajectories from the set $S$, we can approximate the optimal pricing scheme $\x^*$ with high probability when the number of samples is large enough. We first introduce a useful lemma for bounding the deviation of a random variable from its expected value:
\begin{lemma}[Hoeffding's inequality]\label{lem:hoeffding_eq}
    Let $X_1, \cdots, X_m$ be $m$ identical independently distributed  samples of a random variable $X$ distributed by $S$, and $a \le x_i \le b$ for every $x_i$, then for a small positive value $\epsilon$:
    \[\textstyle
    \mathbb{P}\big[\mathbb{E}[X]- \frac{1}{m}\sum_i X_i \ge \epsilon\big] \le \textnormal{exp}\Big(\frac{-2m\epsilon^2}{(b-a)^2}\Big)
    \]
    and
    \[\textstyle
    \mathbb{P}\big[\mathbb{E}[X]- \frac{1}{m}\sum_i X_i \le -\epsilon\big] \le \textnormal{exp}\Big(\frac{-2m\epsilon^2}{(b-a)^2}\Big)
    \]
\end{lemma}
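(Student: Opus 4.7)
The plan is to prove both tail bounds by the standard Chernoff-moment-generating-function (MGF) method, since the two directions are symmetric (apply the first to $-X$ to get the second). First I would center the variables: let $Y_i = X_i - \mathbb{E}[X]$, so each $Y_i$ is bounded in $[a - \mathbb{E}[X], b - \mathbb{E}[X]]$, an interval of length $(b-a)$, and $\mathbb{E}[Y_i] = 0$. Write $S_m = \sum_{i=1}^m Y_i$, so the event in question is $\{S_m \le -m\epsilon\}$ for the first inequality (equivalently $\{-S_m \ge m\epsilon\}$). By Markov's inequality applied to the exponential, for any $t > 0$,
\[
\mathbb{P}\bigl[-S_m \ge m\epsilon\bigr] \;=\; \mathbb{P}\bigl[e^{-tS_m} \ge e^{tm\epsilon}\bigr] \;\le\; e^{-tm\epsilon}\,\mathbb{E}\bigl[e^{-tS_m}\bigr].
\]
By independence, $\mathbb{E}[e^{-tS_m}] = \prod_{i=1}^m \mathbb{E}[e^{-tY_i}]$, so the whole task reduces to bounding the MGF of a single bounded, mean-zero random variable.

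The core technical step is Hoeffding's lemma: for any random variable $Z$ with $\mathbb{E}[Z]=0$ and $Z \in [\alpha,\beta]$ almost surely, one has $\mathbb{E}[e^{sZ}] \le \exp\bigl(s^2(\beta-\alpha)^2/8\bigr)$ for every real $s$. I would prove this by convexity: since $e^{sz}$ is convex in $z$, write $z = \tfrac{\beta - z}{\beta - \alpha}\alpha + \tfrac{z - \alpha}{\beta - \alpha}\beta$ to get $e^{sz} \le \tfrac{\beta-z}{\beta-\alpha}e^{s\alpha} + \tfrac{z-\alpha}{\beta-\alpha}e^{s\beta}$, then take expectations and use $\mathbb{E}[Z] = 0$ to obtain an explicit expression $\varphi(s)$ depending only on $s, \alpha, \beta$. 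Setting $u = s(\beta-\alpha)$ and $p = -\alpha/(\beta-\alpha)$, one writes $\log \varphi(s) = -pu + \log(1 - p + pe^u)$, then shows via a Taylor expansion with remainder that $\log\varphi(s) \le u^2/8 = s^2(\beta-\alpha)^2/8$; the standard derivation bounds the second derivative of this log-MGF by $1/4$ uniformly. This is the one step that needs actual computation, and it is the main obstacle in the sense that the rest of the proof is purely algebraic.

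Armed with Hoeffding's lemma, each factor satisfies $\mathbb{E}[e^{-tY_i}] \le \exp\bigl(t^2(b-a)^2/8\bigr)$, so
\[
\mathbb{P}\bigl[-S_m \ge m\epsilon\bigr] \;\le\; \exp\!\Bigl(-tm\epsilon + \tfrac{m t^2 (b-a)^2}{8}\Bigr).
\]
The right-hand side is a quadratic in $t$ inside the exponent, minimized at $t^* = 4\epsilon/(b-a)^2 > 0$, which yields the bound $\exp\bigl(-2m\epsilon^2/(b-a)^2\bigr)$ after substitution. Dividing through by $m$ gives exactly $\mathbb{P}\bigl[\mathbb{E}[X] - \tfrac{1}{m}\sum_i X_i \ge \epsilon\bigr] \le \exp\bigl(-2m\epsilon^2/(b-a)^2\bigr)$, the first stated inequality.

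Finally, I would obtain the second inequality by applying the identical argument to the random variables $X_i' = -X_i$, which satisfy $-b \le X_i' \le -a$ (an interval of the same length $b-a$) and have mean $-\mathbb{E}[X]$. The first tail bound applied to the $X_i'$ is precisely $\mathbb{P}\bigl[\mathbb{E}[X] - \tfrac{1}{m}\sum_i X_i \le -\epsilon\bigr] \le \exp\bigl(-2m\epsilon^2/(b-a)^2\bigr)$, completing the proof. No further results from the paper are needed; the argument is self-contained modulo Markov's inequality and the convexity-based proof of Hoeffding's lemma.
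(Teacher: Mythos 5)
Your proof is correct: it is the canonical Chernoff--MGF derivation of Hoeffding's inequality, with the centering, Hoeffding's lemma via convexity, the optimization $t^* = 4\epsilon/(b-a)^2$, and the symmetry argument for the second tail all carried out properly. The paper itself states this lemma as a classical result and provides no proof, so there is nothing to compare against; your self-contained argument is exactly the standard one and fills that (intentional) gap correctly.
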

Next, we present the last step of the proof our theorem \ref{thm:sample_approx_main}:
\begin{lemma}\label{thm:sample_approx}
    Let $S$ be the total set of possible tasks (trajectories and valuations) that the market might encounter. With probability $1-2\delta$ over the draw of samples $\hat{S}$ of  $m = |\hat{S}|$ samples from $S$, we can compute the pricing scheme $\hat{\x}$ such that 
    \[
    \bar{g}(\hat{\boldsymbol{x}}, S) \ge \bar{g}(\boldsymbol{x}^*, S) - \epsilon.
    \] 
    where $ \delta = \exp(\frac{-2m\epsilon^2}{b^2})$, $b$ is the maximum valuation from any buyer, and $\epsilon > 0$ is the error term. 
\end{lemma}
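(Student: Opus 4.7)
The plan is a standard PAC-style generalization argument: the MILP from Lemma~\ref{prop:milp} is by construction an empirical revenue maximizer on the sample $\hat{S}$, and Hoeffding's inequality (Lemma~\ref{lem:hoeffding_eq}) lets us transfer its guarantees from the sample to the population. The proof will proceed by two applications of Hoeffding (each contributing a failure probability of $\delta$) sandwiched around the empirical-optimality chain $\bar{g}(\hat{\x}, \hat{S}) \ge \bar{g}(\x^*, \hat{S})$.

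Concretely, I would first fix the population-level optimum $\x^*$ and apply Hoeffding to the i.i.d.\ per-trajectory profits $\{g_i(\x^*, s_i)\}_{i=1}^{m}$, each bounded in $[0,b]$. This gives $\bar{g}(\x^*, \hat{S}) \ge \bar{g}(\x^*, S) - \epsilon$ with probability at least $1-\delta$ when $m \ge \frac{b^2 \ln(1/\delta)}{2\epsilon^2}$. Next, invoke the empirical optimality of $\hat{\x}$ on the sample from Lemma~\ref{prop:milp}, which directly yields $\bar{g}(\hat{\x}, \hat{S}) \ge \bar{g}(\x^*, \hat{S})$. Finally, apply Hoeffding in the opposite direction to the profits realized by $\hat{\x}$ to conclude $\bar{g}(\hat{\x}, S) \ge \bar{g}(\hat{\x}, \hat{S}) - \epsilon$. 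A union bound over the two failure events followed by chaining the three inequalities gives $\bar{g}(\hat{\x}, S) \ge \bar{g}(\x^*, S) - O(\epsilon)$ with probability at least $1-2\delta$, matching the claimed bound (the factor-of-two slack can be folded into $\epsilon$ by applying Hoeffding with $\epsilon/2$).

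The main technical obstacle is the third step: since $\hat{\x}$ is a random function of $\hat{S}$, the summands $g_i(\hat{\x}, s_i)$ are not independent and Hoeffding cannot be applied to them verbatim. The cleanest remedy is a uniform-convergence argument that exploits the finite resolution already imposed by the design: each coordinate $x(q)$ lies in a bounded interval, and discretizing it to a grid of width $\epsilon/|Q|$ yields a pricing-curve class of cardinality at most $(b|Q|/\epsilon)^{|Q|}$. Applying Hoeffding to each curve in this finite class and union-bounding only contributes a $|Q|\log(b|Q|/\epsilon)$ term inside the logarithm in the sample-complexity expression, so the leading $b^2/\epsilon^2$ rate is preserved. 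An equivalent route is to bound the Rademacher complexity of the induced profit-function class, which avoids explicit discretization but arrives at the same order.

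Combining the resulting PAC bound with the exact empirical optimality of the MILP in Lemma~\ref{prop:milp} then delivers Theorem~\ref{thm:sample_approx_main}: setting $m = \frac{b^2 \ln(1/\delta)}{2\epsilon^2}$ pushes each Hoeffding tail below $\delta$, so the pricing curve returned by the MILP is additively $\epsilon$-optimal with confidence $1-2\delta$, as required.
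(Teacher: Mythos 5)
Your proposal follows the same skeleton as the paper's proof: two applications of Hoeffding's inequality sandwiched around the empirical-optimality inequality $\bar{g}(\hat{\x},\hat{S}) \ge \bar{g}(\x^*,\hat{S})$, a union bound giving failure probability $2\delta$, and a final chain yielding $\bar{g}(\hat{\x},S) \ge \bar{g}(\x^*,S) - 2\epsilon$ (which, as you note, must be rescaled to match the stated $\epsilon$; the paper's own proof also ends at $2\epsilon$ without performing this rescaling).

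The one substantive difference is in your third step, and it is to your credit. The paper applies Hoeffding directly to the summands $g(\hat{\x}, s_i)$ to conclude $\bar{g}(\hat{\x},S) \ge \bar{g}(\hat{\x},\hat{S}) - \epsilon$, treating them as i.i.d. draws even though $\hat{\x}$ is itself a function of the entire sample $\hat{S}$. You correctly identify that Hoeffding does not apply verbatim to a data-dependent hypothesis and patch this with a uniform-convergence argument: discretize each coordinate $x(q)$ to a grid of width $\epsilon/|Q|$, union-bound over the resulting finite class of at most $(b|Q|/\epsilon)^{|Q|}$ curves, and absorb the extra $|Q|\log(b|Q|/\epsilon)$ term into the logarithm of the sample complexity. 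This is the standard and correct repair; it changes the constant in $m$ but preserves the $b^2/\epsilon^2$ rate, and it makes the argument rigorous where the paper's version has a genuine (if conventional and easily fixable) gap. The only caveat is that your sample-complexity expression then no longer exactly matches the clean $m = \frac{b^2\ln(1/\delta)}{2\epsilon^2}$ claimed in Theorem~\ref{thm:sample_approx_main}, a discrepancy you acknowledge implicitly but should state explicitly if the covering argument is retained.
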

\begin{proof}
    Given any independently sampled set $\hat{S}$ from the total set $S$ of trajectories, we let $\hat{\x}$ denote the \textit{optimal} solution to \eqref{eq:opt_markovian} when we are maximizing $\bar{g}(\x, \hat{S})$ over the sample set. Hence, both $g(\hat{\x}, s)$ and $g(\x^*, s)$ are random variables for some sample $s \in S$, and $\bar{g}(\hat{\x}, \hat{S})$ and $\bar{g}(\x^*, \hat{S})$ are sample means. Moreover, since we are random sampling, the expected value of $g(\x, s)$ is just the population mean, where $\mathbb{E}[g(\x, s)] = \bar{g}(\x, S)$.
    
    By instantiating Lemma \ref{lem:hoeffding_eq}, we have that with probability $1- \text{exp}\Big(\frac{-2m\epsilon^2}{b^2}\Big)$,  
    \begin{equation}\label{eq:first}
        \bar{g}(\hat{\x}, S) = \mathbb{E}[g(\x, s)] \ge \frac{1}{m} \sum_s g(\hat{\x}, s) = \bar{g}(\hat{\x}, \hat{S})- \epsilon
    \end{equation}
    Moreover, by definition that $\hat{\x}$ is the optimal solution to \eqref{eq:opt_markovian} over the sample set $\hat{S}$ (ie. it maximizes $\bar{g}(\x, \hat{S})$), we get 
    %\begin{equation}\label{eq:second}
        $\bar{g}(\hat{\x}, \hat{S}) \ge \bar{g}(\x^*, \hat{S})$.
    %\end{equation} 
    Furthermore, by instantiating Lemma \ref{lem:hoeffding_eq} again, we know that with probability $1- \text{exp}\Big(\frac{-2m\epsilon^2}{b^2}\Big)$, 
    \begin{equation}\label{eq:third}
        \bar{g}(\x^*, \hat{S}) = \frac{1}{m}\sum_s g(\x^*, s) \ge \mathbb{E}[g(\x^*, s)] -\epsilon = \bar{g}(\x^*, S) - \epsilon
    \end{equation}
    Then, using an union bound on the probability of failure $\delta = \exp(\frac{-2m\epsilon^2}{b^2})$ for equations \eqref{eq:first} and \eqref{eq:third}, we know the probability of both of them not holding is at max $2\delta$. 
    
    Hence, combining equations \eqref{eq:first} - \eqref{eq:third}, we have 
    % We remark that the random variables $g(\x^*, s)$ and $g(\hat{\x}, s)$ are independent, since $s$ is independently sampled from $S$ each time. Since any functions of separately independent random variables are also independent, this means that $\bar{g}(\hat{\x}, \hat{S})$ is independent of $\bar{g}(\x^*, \hat{S})$. Similarly, the events $\bar{g}(\hat{\x}, S) \ge \bar{g}(\hat{\x}, \hat{S})- \epsilon$ and $ \bar{g}(\x^*, \hat{S}) \ge \bar{g}(\x^*, S) - \epsilon$ must also be independent. 
    %$\hat{\x}$ and $\x^*$ are independent. 
    \begin{equation*}
        \bar{g}(\hat{\x}, S) > \bar{g}(\hat{\x}, \hat{S}) - \epsilon \ge \bar{g}(\x^*, \hat{S}) - \epsilon >  \bar{g}(\x^*, S) -2\epsilon
    \end{equation*}
    with probability $1-2\delta = 1-2\exp(\frac{-2m\epsilon^2}{b^2})$.
\end{proof}

\begin{remark}
    Given any $\delta$, we can achieve any approximation error $\epsilon$ by sampling $m = -\frac{1}{2}\log(\delta)\frac{b^2}{\epsilon^2}$ samples from $S$ to form $\hat{S}$.
\end{remark}

\section{Omitted Proofs from Section \ref{sec:learning}}\label{app_subsec:bound_revenue_with_prior}

The proof of Theorem \ref{thm:cost_prior_error} is divided into two parts.

\paragraph{Bounding the revenue loss under approximately optimal prior}

%With Hoeffding's inequality Lemma \ref{lem:hoeffding_eq}, we have for any $\theta \in \Theta$, $\mathbb{P}\big(|\mu(\theta) - \hat{\mu}(\theta)|\ge \epsilon\big) \le 2 e^{-2m\epsilon^2}$, where $m$ is the number of sample. Therefore, suppose $|\Theta| = n$, by taking the union bound, we have when $m \ge \frac{1}{2\epsilon^2} \log(\frac{2n}{\delta})$ samples, then with probability at least $1-\delta$, 

%Suppose we have $|\mu(\theta) - \hat{\mu}(\theta)| \le \epsilon$ for any $\theta$. 

With $||\mu - \mu^*||_{2} \leq \epsilon$,  we get $\sum_\theta\Big|\mu(\theta) - \mu^*(\theta)\Big| \le \sqrt{n} \|\mu - \mu^*\|_2 = \sqrt{n}\epsilon$.We show that given an optimal solution to \eqref{eq:opt_markovian} with respect to true $\mu^*$, it approximates the optimal solution under $\mu$ as well. Suppose the price curve $x(q) \in [\, \underline{p}, \, \overline{p} \, ], \forall q$, then we can bound the objective difference by
\begin{equation}\textstyle
    \Big|\sum_\theta \mu(\theta) x(\mu, \theta) - \mu^*(\theta)  x(\mu^*, \theta)\Big| \le \overline{p} \sum_\theta\Big|\mu(\theta) - \mu^*(\theta)\Big| \le \overline{p}\sqrt{n}\epsilon
\end{equation}
%with probability at least $ 1- \delta$ by sampling $m \ge \frac{n^2 }{2\epsilon^2} \log(\frac{2n}{\delta})$.  
 
%Therefore, by taking a union bound with Lemma \ref{thm:sample_approx}, we have that with $O\Big(\frac{n^2 b^2 \overline{p}^2 \log(n/\delta)}{\epsilon^2}\Big)$ samples, we get an additively $\epsilon$-approximation  solution with probability $1-3\delta$. 

\paragraph{Bounding the revenue loss under approximate transition matrix $\boldsymbol{P}$.}%\label{appendix_subsec:approximate_transition_matrix}
% With Hoeffding's inequality Lemma \ref{lem:hoeffding_eq}, we have for any $q, q' \in Q$, $\mathbb{P}\big(|\boldsymbol{P}(q'|q) - \hat{\boldsymbol{P}}(q'|q)|\ge \epsilon\big) \le 2 e^{-2m\epsilon^2}$, where $m$ is the number of sample. Therefore, suppose $|Q| = k$, taking union bound, we have when $m \ge \frac{1}{2\epsilon^2} \log(\frac{2k^2}{\delta})$ samples, then with probability at least $1-\delta$, we have $|\boldsymbol{P}(q'|q) - \hat{\boldsymbol{P}}(q'|q)| \le \epsilon$ for any $q, q' \in Q$. 
We define $\tau^*(\boldsymbol{P})$ as the buyer's optimal stopping policy computed by Algorithm \ref{alg:dp_buyer} with Markov transition matrix $\boldsymbol{P}$. In addition, we define $\Phi^{\tau^*(\boldsymbol{P})}(\cdot, \cdot, \cdot)$ as the buyer's expected utility, i.e., the corresponding $3$-dimensional DP table from Algorithm \ref{alg:dp_buyer}. In addition, let $\overline{V} = \max_{\theta, q} v^\theta(q)$, which is the maximal achievable utility for any agent type and performance metric. We have $\Phi^{\tau^*(\boldsymbol{P})}(q_1, q_2, t) \le \overline{V}$ for any $q_1$, $q_2$, $t$, $\boldsymbol{P}$.
%$\Phi^{\tau^*(\boldsymbol{P})}(\cdot, \cdot, \cdot)$ as the buyer's expected utility when the real Markov transition matrix is $\boldsymbol{P}_1$ while the buyer adapts the stopping policy computed with Markov transition matrix $\boldsymbol{P}_2$. In addition, let $\overline{V} = \max_{\theta, q} v^\theta(q)$. We have $\Phi_{\boldsymbol{P}_1}^{\tau^*(\boldsymbol{P}_2)}(q_1, q_2, t) \le \overline{V}$ for any $q_1$, $q_2$, $t$, $\boldsymbol{P}_1$, and $\boldsymbol{P}_2$.
First of all, we compare $\Phi^{\tau^*(\boldsymbol{P})}(0, 0, 0)$ and $\Phi^{\tau^*(\hat{\boldsymbol{P}})}(0, 0, 0)$. 

\begin{itemize}[leftmargin=*]
    \item When $t = T$, we have $\Phi^{\tau^*(\boldsymbol{P})}(q_1, q_2, T) = \Phi^{\tau^*(\hat{\boldsymbol{P}})}(q_1, q_2, T)$
    \item When $t = T-1$, we have 
    \begin{small}
    \begin{align*}
        \Phi^{\tau^*(\boldsymbol{P})}(q_1, q_2, t)  = &\max \Big(v^\theta(q_1) - x(q_1) - t c, \\
        &\mathbb{E}_{\boldsymbol{P}(q|q_2)} \big[ \Phi(\argmax_{\{q_1, q\}}(v^\theta(q_1) - x(q_1) - c\cdot t, v^\theta(q) - x(q) -(t+1)\cdot c), q, t+1)\big] \Big)
        %\mathbb{E}_{\boldsymbol{P}(q|q_2)} \big[ \Phi^{\tau^*(\boldsymbol{P})}(\max(q_1,q), q, T)\big] - c\Big)
    \end{align*}
    \end{small}
    \begin{small}
    \begin{align*}
        \Phi^{\tau^*(\hat{\boldsymbol{P}})}(q_1, q_2, t)  = &\max \Big(v^\theta(q_1) - x(q_1) - t c, \\
        &\mathbb{E}_{\hat{\boldsymbol{P}}(q|q_2)} \big[ \Phi(\argmax_{\{q_1, q\}}(v^\theta(q_1) - x(q_1) - c\cdot t, v^\theta(q) - x(q) -(t+1)\cdot c), q, t+1)\big] \Big)
        %\mathbb{E}_{\hat{\boldsymbol{P}}(q|q_2)} \big[ \Phi^{\tau^*(\hat{\boldsymbol{P}})}(\max(q_1,q), q, T)\big] - c\Big)
    \end{align*}
    \end{small}
    To begin with, we provide a bound for \begin{small} \begin{align*} &\Big|\mathbb{E}_{\boldsymbol{P}(q|q_2)} \big[ \Phi^{\tau^*(\boldsymbol{P})}(\argmax_{\{q_1, q\}}(v^\theta(q_1) - x(q_1) - c\cdot t, v^\theta(q) - x(q) -(t+1)\cdot c), q, T)\big] \\ &- \mathbb{E}_{\hat{\boldsymbol{P}}(q|q_2)} \big[ \Phi^{\tau^*(\hat{\boldsymbol{P}})}(\argmax_{\{q_1, q\}}(v^\theta(q_1) - x(q_1) - c\cdot t, v^\theta(q) - x(q) -(t+1)\cdot c), q, T)\big] \Big|  \le \\
    & \sum_q |\boldsymbol{P}(q|q_2) - \hat{\boldsymbol{P}}(q|q_2)| \cdot \Phi^{\tau^*(\boldsymbol{P})}(\argmax_{\{q_1, q\}}(v^\theta(q_1) - x(q_1) - c\cdot t, v^\theta(q) - x(q) -(t+1)\cdot c), q, T)\\ &\le \epsilon \sqrt{k} \overline{V}
    \end{align*}\end{small}
    Next, note that $\max(\cdot, \cdot)$ is $1$-Lipschitz, we have 
    \begin{small}\begin{align*}
    &|\Phi^{\tau^*(\boldsymbol{P})}(q_1, q_2, t) - \Phi^{\tau^*(\hat{\boldsymbol{P}})}(q_1, q_2, t) |\le \\
    &\Big|\mathbb{E}_{\boldsymbol{P}(q|q_2)} \big[ \Phi^{\tau^*(\boldsymbol{P})}(\argmax_{\{q_1, q\}}(v^\theta(q_1) - x(q_1) - c\cdot t, v^\theta(q) - x(q) -(t+1)\cdot c), q, T)\big] - \\ 
    &\mathbb{E}_{\hat{\boldsymbol{P}}(q|q_2)} \big[ \Phi^{\tau^*(\hat{\boldsymbol{P}})}(\argmax_{\{q_1, q\}}(v^\theta(q_1) - x(q_1) - c\cdot t, v^\theta(q) - x(q) -(t+1)\cdot c), q, T)\big] \Big|  \le \epsilon \sqrt{k}  \overline{V}
    \end{align*}\end{small}
    \item When $t = T-2$, we have 
    \begin{small}\begin{align*}
        &\Phi^{\tau^*(\boldsymbol{P})}(q_1, q_2, t)  = \max \Big(v^\theta(q_1) - x(q_1) - t c, \\ 
        &\mathbb{E}_{\boldsymbol{P}(q|q_2)} \big[ \Phi^{\tau^*(\boldsymbol{P})}(\argmax_{\{q_1, q\}}(v^\theta(q_1) - x(q_1) - c\cdot t, v^\theta(q) - x(q) -(t+1)\cdot c), q, T-1)\big]\Big)
    \end{align*}
    \begin{align*}
        &\Phi^{\tau^*(\hat{\boldsymbol{P}})}(q_1, q_2, t)  = \max \Big(v^\theta(q_1) - x(q_1) - t c, \\ &\mathbb{E}_{\hat{\boldsymbol{P}}(q|q_2)} \big[ \Phi^{\tau^*(\hat{\boldsymbol{P}})}(\argmax_{\{q_1, q\}}(v^\theta(q_1) - x(q_1) - c\cdot t, v^\theta(q) - x(q) -(t+1)\cdot c), q, T-1)\big]\Big)
    \end{align*}\end{small}
     Similarly, we provide a bound for 
     \begin{small}\begin{align*}
         &\Big|\mathbb{E}_{\boldsymbol{P}(q|q_2)} \big[ \Phi^{\tau^*(\boldsymbol{P})}(\argmax_{\{q_1, q\}}(v^\theta(q_1) - x(q_1) - c\cdot t, v^\theta(q) - x(q) -(t+1)\cdot c), q, T-1)\big]  - \\
         &\mathbb{E}_{\hat{\boldsymbol{P}}(q|q_2)} \big[ \Phi^{\tau^*(\hat{\boldsymbol{P}})}(\argmax_{\{q_1, q\}}(v^\theta(q_1) - x(q_1) - c\cdot t, v^\theta(q) - x(q) -(t+1)\cdot c), q, T-1)\big] \Big| = \\
         & \Big|\sum_q \boldsymbol{P}(q|q_2) \cdot \Phi^{\tau^*(\boldsymbol{P})}(\argmax_{\{q_1, q\}}(v^\theta(q_1) - x(q_1) - c\cdot t, v^\theta(q) - x(q) -(t+1)\cdot c), q, T) - \\
         &\hat{\boldsymbol{P}}(q|q_2) \cdot \Phi^{\tau^*(\hat{\boldsymbol{P}})}(\argmax_{\{q_1, q\}}(v^\theta(q_1) - x(q_1) - c\cdot t, v^\theta(q) - x(q) -(t+1)\cdot c), q, T) \Big| \le 2 \epsilon \sqrt{k}  \overline{V}
     \end{align*}\end{small}
\end{itemize}
%\end{small}
By induction, we have $|\Phi^{\tau^*(\boldsymbol{P})}(0, 0, 0) - \Phi^{\tau^*(\hat{\boldsymbol{P}})}(0, 0, 0)| \le T \epsilon \sqrt{k}  \overline{V}$.

Next, we provide a bound for the performance of a specific policy $\pi^*(\hat{\boldsymbol{P}})$ under different Markov transition matrices $\hat{\boldsymbol{P}}$ and $\boldsymbol{P}$.  We denote $f\big(\pi^*(\hat{\boldsymbol{P}}), \boldsymbol{P}, 0, 0, \big)$ as the buyer's expected utility under $\boldsymbol{P}$ while they are adapting the policy $\pi^*(\hat{\boldsymbol{P}})$ computed from $\hat{\boldsymbol{P}}$. Note that we have $\Phi^{\tau^*(\boldsymbol{P})}(0, 0, 0) = f\big(\pi^*(\boldsymbol{P}), \boldsymbol{P}, 0, 0, 0\big)$.

At each state $(q_1, q_2, t)$, if the policy decides to stop, then the policy has the same expected utility under two different Markov transition matrices, i.e., $f\big(\pi^*(\hat{\boldsymbol{P}}), \boldsymbol{P}, q_1, q_2, t\big) = f\big(\pi^*(\hat{\boldsymbol{P}}), \hat{\boldsymbol{P}}, q_1, q_2, t\big)$; Otherwise, we have $|f\big(\pi^*(\hat{\boldsymbol{P}}), \boldsymbol{P}, q_1, q_2, t\big) - f\big(\pi^*(\hat{\boldsymbol{P}}), \hat{\boldsymbol{P}}, q_1, q_2, t\big)| \le |\Phi^{\tau^*(\hat{\boldsymbol{P}})}(q_1, q_2, t) - \Phi^{\tau^*(\boldsymbol{P})}(q_1, q_2, t)|$. Therefore, we have $|f\big(\pi^*(\hat{\boldsymbol{P}}), \boldsymbol{P}, 0,0,0\big) - f\big(\pi^*(\hat{\boldsymbol{P}}), \hat{\boldsymbol{P}}, 0,0,0\big)| \le |\Phi^{\tau^*(\boldsymbol{P})}(0, 0, 0) - \Phi^{\tau^*(\hat{\boldsymbol{P}})}(0, 0, 0)| \le T \epsilon \sqrt{k}  \overline{V}$ as well.

Finally, combing the above two processes, we have
\begin{small}
\begin{align*}
    &|f\big(\pi^*(\boldsymbol{P}), \boldsymbol{P}, 0, 0, 0\big) - f\big(\pi^*(\hat{\boldsymbol{P}}), \boldsymbol{P}, 0, 0, \big)| \le \\
    &|f\big(\pi^*(\boldsymbol{P}), \boldsymbol{P}, 0, 0, 0\big) - f\big(\pi^*(\hat{\boldsymbol{P}}), \hat{\boldsymbol{P}}, 0, 0, \big)| + |f\big(\pi^*(\hat{\boldsymbol{P}}), \hat{\boldsymbol{P}}, 0, 0, 0\big) - f\big(\pi^*(\hat{\boldsymbol{P}}), \boldsymbol{P}, 0, 0, \big)| = \\
    &|\Phi^{\tau^*(\boldsymbol{P})}(0, 0, 0) - \Phi^{\tau^*(\hat{\boldsymbol{P}})}(0, 0, 0)| + |f\big(\pi^*(\hat{\boldsymbol{P}}), \hat{\boldsymbol{P}}, 0, 0, 0\big) - f\big(\pi^*(\hat{\boldsymbol{P}}), \boldsymbol{P}, 0, 0, \big)|  \le  2T \sqrt{k}  \overline{V} \epsilon
\end{align*}
\end{small}

\section{Omitted Results from Section \ref{sec:evaluation}}\label{app:experiments}

\subsection{Benchmark Pricing Schemes from \textbf{RQ2}}\label{appendix_sec:exp}
\mypar{Independent pricing scheme} Given a prior distribution $\mu \in \Delta^{|\Theta|}$ over the buyer types, where each buyer type $\theta$ has a value $v_q[\theta]$ for performance metric $q$, we can consider the valuation $V_q$ as a random variable with $\mathbb{P}(V_q = v_q[\theta]) = \mu(\theta)$. The independent pricing scheme assumes that $V_q$ is independent of $V_{q'}$ for any $q \ne q'$, and computes the price for each performance metric that maximizes the expected payment with respect to the prior distribution $\mu$:
% buyer's value over different levels of the metric is independent and computes the price of each performance metric that can maximize the expected payment with respect to the prior distribution.  %Suppose the buyer's value over different levels of the metric (i.e., $q_i$'s) is independent. 
%\begin{equation}\label{eq:indepent_pricing}
    $x(q) = \arg \max_{x} \mathbb{E}_{\theta}[x \mathbbm{1}_{v_q[\theta] \ge x}],$
    % \sum_{\theta \in \Theta}  x \mu(\theta) \mathbbm{1}\big(v^\theta(q) \ge x \big)
%\end{equation}
where $\mathbbm{1}(\cdot)$ is the indicator function. Since $V_q$ is discrete, this scheme will always lead to a selection of $x(q)$ such that $x(q) = v_q[\theta_q]$ for some optimal $\theta_q$. 
%\vspace{-2mm}

\mypar{Shift pricing scheme} 
The shift pricing shifts the independent pricing scheme up or down by some discrete shift parameter $k \in \{-|\Theta|, -|\Theta| -1, ..., 0, ..., |\Theta|-1, |\Theta|\}$. Specifically, let $\theta_q^0$ be the $\theta$ chosen by the independent pricing scheme. Let $\theta_q^k$ be $\theta$ such that $v_q[\theta_q^k]$ is $k$-ranks higher or lower than $v_q[\theta_q^0]$ if we sort $\{v_q[\theta_q] | \theta \in \Theta\}$ from smallest to largest. Then the shift-$k$ pricing scheme sets the price 
%\begin{equation}\label{eq:shift_pricing}
    $x(q) = v_q[\theta_q^k].$
%\end{equation}
The optimal linear pricing scheme can be found by a grid search for the optimal shift parameter $k^*$. Intuitively, the shift pricing scheme `shifts' the independent pricing scheme up or down, with increments corresponding to the differences in valuations between buyers. Hence, shift pricing always performs better than independent pricing on sample data $\hat{S}$.
%\vspace{-2mm}

\mypar{Jiggle pricing scheme}
The jiggle pricing algorithm takes an initial selection of $\theta_q$ values and tests out small changes to them to see if it improves the pricing scheme. Specifically, at each step, \textsc{Jiggle} tests out two possible modifications to $\theta_q$: (1) find a $q$ that has a high probability of being chosen and shift the price up to $x(q) = v_q[\theta_q^1]$ so that we charge more for popular performance metrics (2) find a $q$ that has a low probability of being chosen and shift the price down to $x(q) = v(q)[\theta_q^{-1}]$ so that we charge less for undesired performance metrics. \textsc{Jiggle} then continues to make such modifications on previous modifications that were successful, similar to the tree search. In our experiments, we set the maximum number of tested modifications to $\O(|\Theta| |Q|)$, and the initial $\theta_q$ to be the $\theta_q^{k^*}$, the optimal solution for the shift pricing scheme. Hence, jiggle pricing always performs better than shift pricing on the sample data $\hat{S}$. 
%%\vspace{-2mm}
\subsection{Additional Experimental Results From \textbf{RQ3}}

\begin{figure}[htbp]
    \centering
    \begin{subfigure}[b]{0.32\textwidth}
        \includegraphics[width=\textwidth]{graphics/random_b100_r100_LRt.png}
        \caption{Batchsize is 100, number of rounds is 1000.}
        \label{subfig:random_b100_r100_LRt}
    \end{subfigure}
    \hfill
    \begin{subfigure}[b]{0.32\textwidth}
        \includegraphics[width=\textwidth]{graphics/random_b10_r1000_LRt.png}
        \caption{Batchsize is 10, number of rounds is 10000.}
        \label{subfig:random_b10_r1000_LRt}
    \end{subfigure}
    \hfill
    \begin{subfigure}[b]{0.32\textwidth}
        \includegraphics[width=\textwidth]{graphics/random_b100_r1000_LRt.png}
        \caption{Batchsize is 100, number of rounds is 10000.}
        \label{subfig:random_b100_r1000_LRt}
    \end{subfigure}
    \caption{The underlying prior distribution $\mu^* \in \Delta^n$ is a randomly generated distribution, learning rate $\eta = 1/(t+1)$.}
    \label{fig:three_images_main}
\end{figure}

\begin{figure}[htbp]
    \centering
    \begin{subfigure}[b]{0.32\textwidth}
        \includegraphics[width=\textwidth]{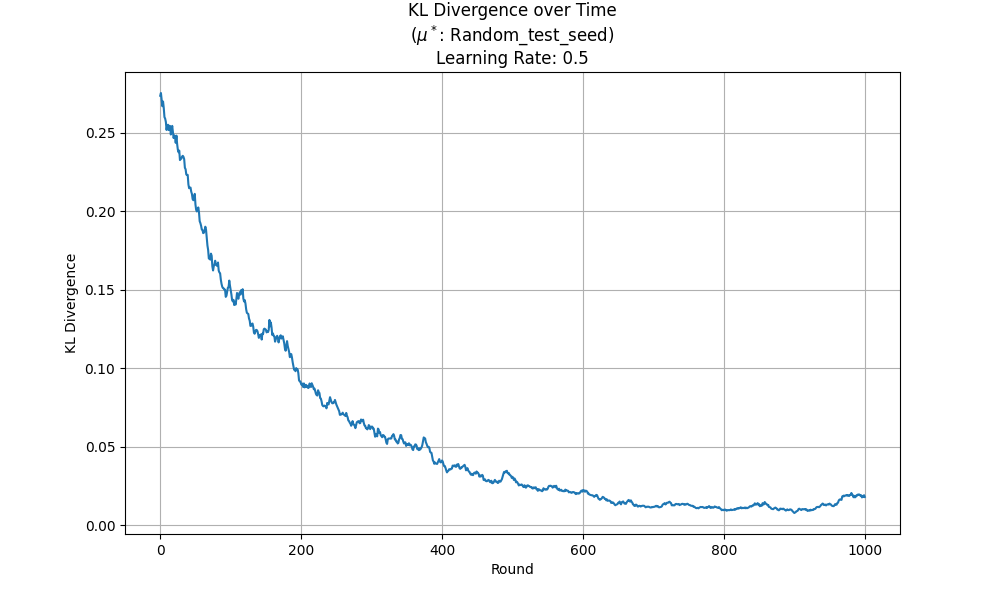}
        \caption{Batchsize is 100, number of rounds is 1000.}
        \label{subfig:random_b100_r100_LRconstant}
    \end{subfigure}
    \hfill
    \begin{subfigure}[b]{0.32\textwidth}
        \includegraphics[width=\textwidth]{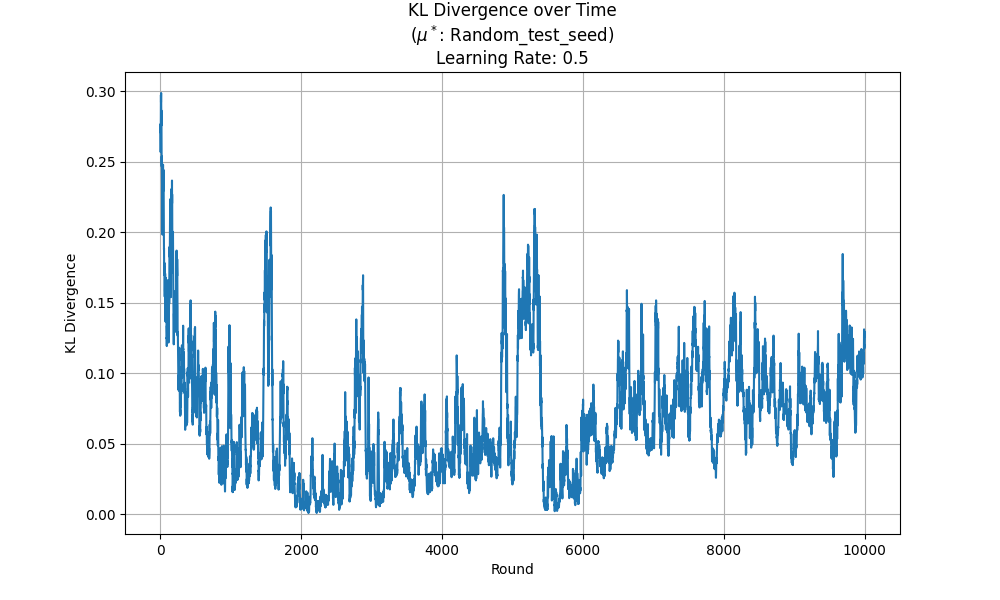}
        \caption{Batchsize is 10, number of rounds is 10000.}
        \label{subfig:random_b10_r1000_LRconstant}
    \end{subfigure}
    \hfill
    \begin{subfigure}[b]{0.32\textwidth}
        \includegraphics[width=\textwidth]{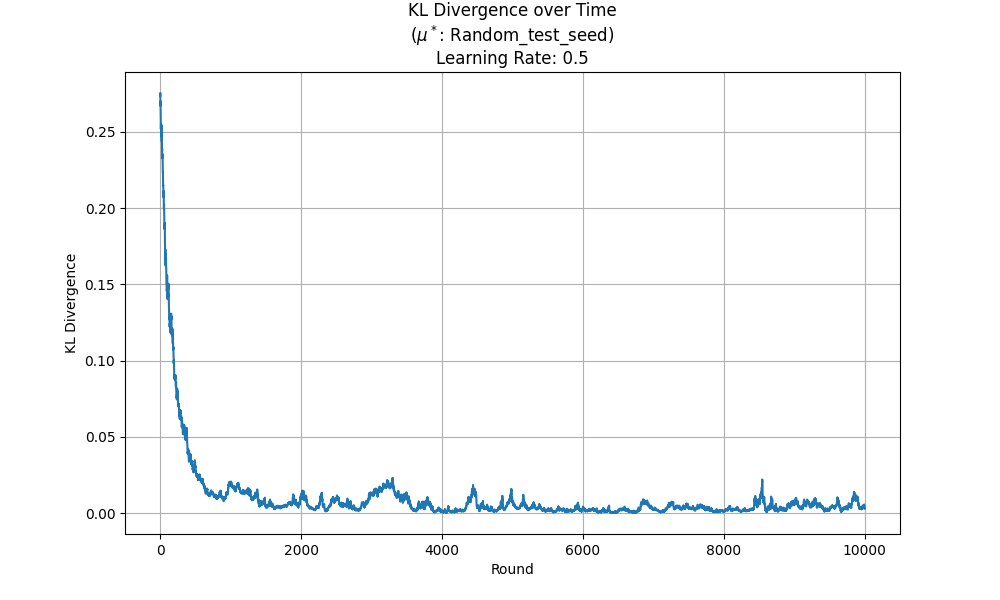}
        \caption{Batchsize is 100, number of rounds is 10000.}
        \label{subfig:random_b100_r1000_LRconstant}
    \end{subfigure}
    \caption{The underlying prior distribution $\mu^* \in \Delta^n$ is a randomly generated distribution, learning rate $\eta = \frac{1}{2}$.}
    \label{fig:three_images}
\end{figure}

%\newpage

\begin{figure}[htbp]
    \centering
    \begin{subfigure}[b]{0.32\textwidth}
        \includegraphics[width=\textwidth]{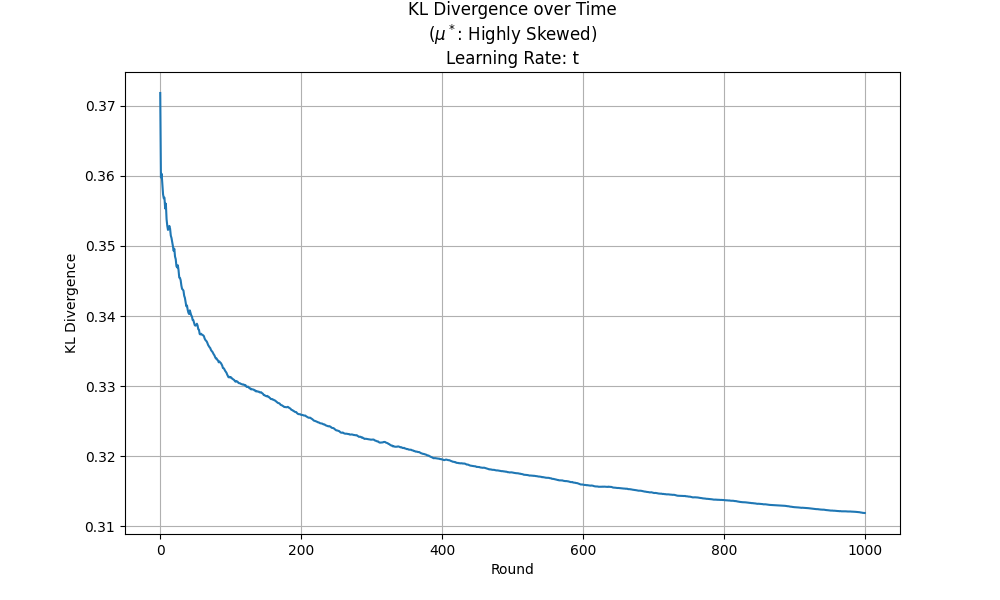}
        \caption{Batchsize is 100, number of rounds is 1000.}
        \label{subfig:random_b100_r100_LRt}
    \end{subfigure}
    \hfill
    \begin{subfigure}[b]{0.32\textwidth}
        \includegraphics[width=\textwidth]{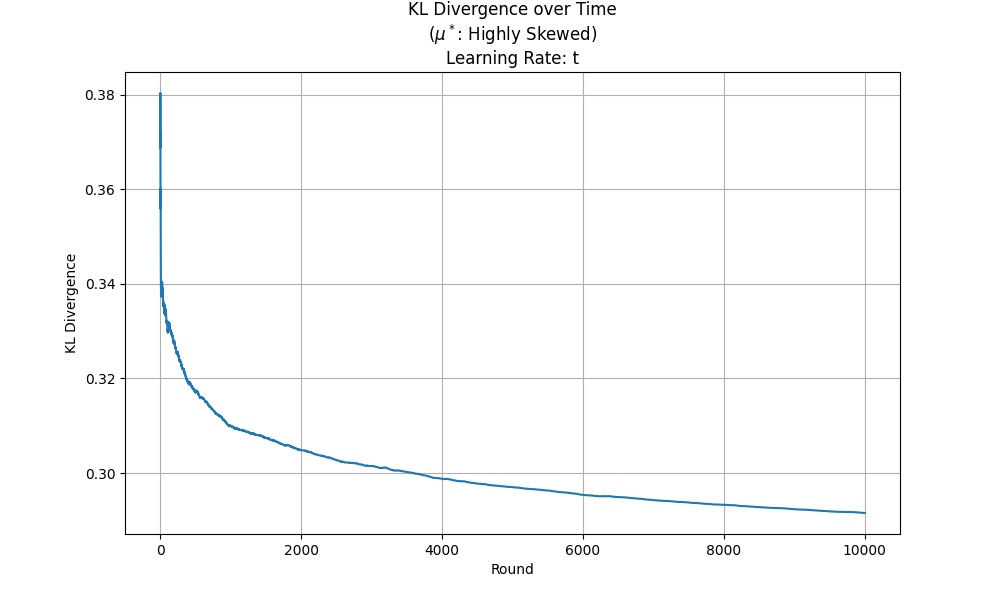}
        \caption{Batchsize is 10, number of rounds is 10000.}
        \label{subfig:random_b10_r1000_LRt}
    \end{subfigure}
    \hfill
    \begin{subfigure}[b]{0.32\textwidth}
        \includegraphics[width=\textwidth]{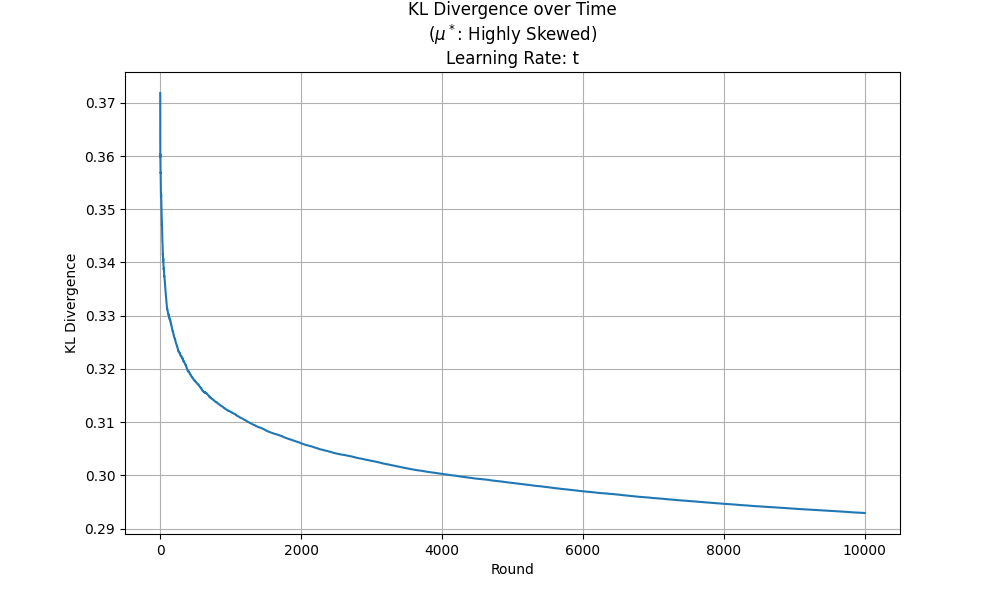}
        \caption{Batchsize is 100, number of rounds is 10000.}
        \label{subfig:random_b100_r1000_LRt}
    \end{subfigure}
    \caption{The underlying prior distribution $\mu^* \in \Delta^n$ is a randomly generated \textit{highly skewed} (i.e., $\alpha=1$, $\beta=5$) beta distribution, learning rate $\eta = 1/(t+1)$.}
    \label{fig:three_images_main}
\end{figure}
%%\vspace{-5em}

\begin{figure}[htbp]
    \centering
    \begin{subfigure}[b]{0.32\textwidth}
        \includegraphics[width=\textwidth]{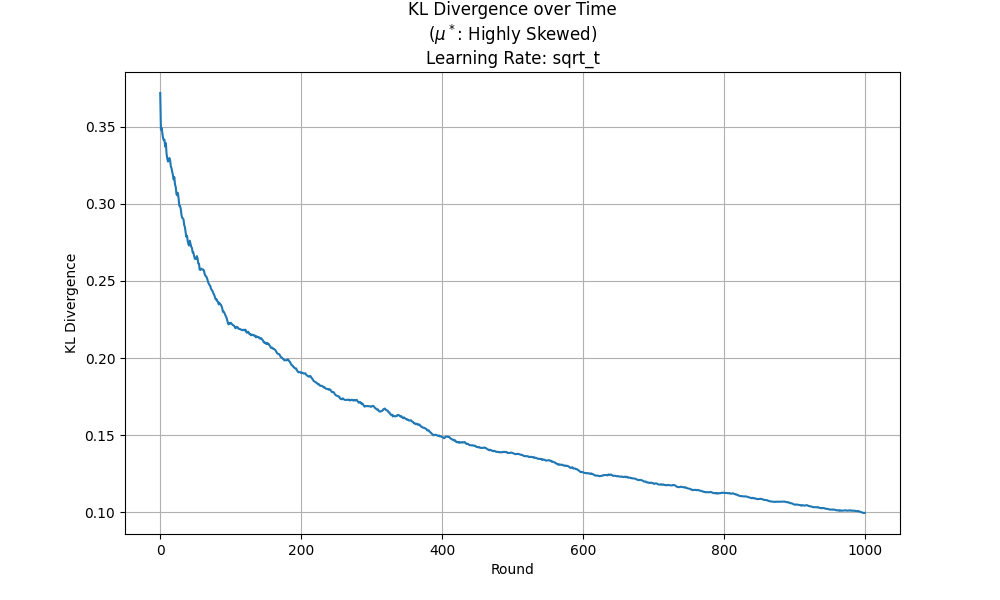}
        \caption{Batchsize is 100, number of rounds is 1000.}
        \label{subfig:random_b100_r100_LRt}
    \end{subfigure}
    \hfill
    \begin{subfigure}[b]{0.32\textwidth}
        \includegraphics[width=\textwidth]{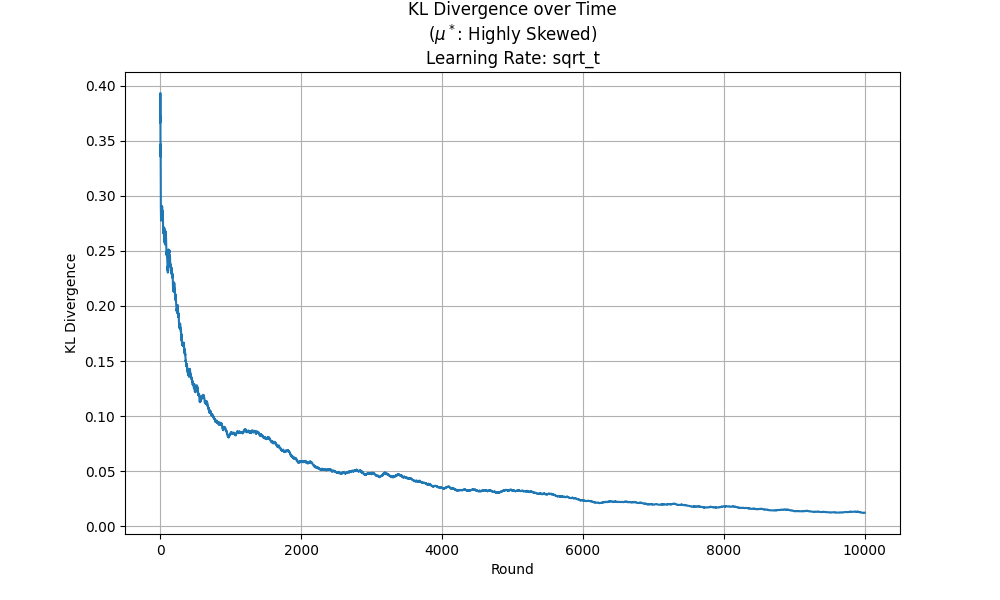}
        \caption{Batchsize is 10, number of rounds is 10000.}
        \label{subfig:random_b10_r1000_LRt}
    \end{subfigure}
    \hfill
    \begin{subfigure}[b]{0.32\textwidth}
        \includegraphics[width=\textwidth]{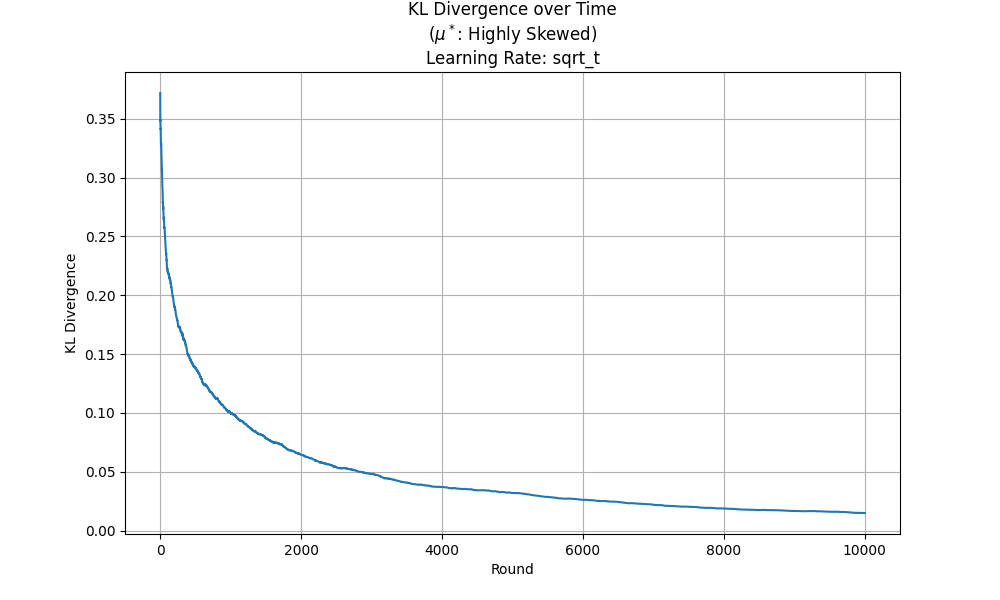}
        \caption{Batchsize is 100, number of rounds is 10000.}
        \label{subfig:random_b100_r1000_LRt}
    \end{subfigure}
    \caption{The underlying prior distribution $\mu^* \in \Delta^n$ is a randomly generated \textit{highly skewed} (i.e., $\alpha=1$, $\beta=5$) beta distribution, learning rate $\eta = 1/\sqrt{t}$.}
    \label{fig:three_images_main}
\end{figure}
%%\vspace{-5em}

\begin{figure}[htbp]
    \centering
    \begin{subfigure}[b]{0.32\textwidth}
        \includegraphics[width=\textwidth]{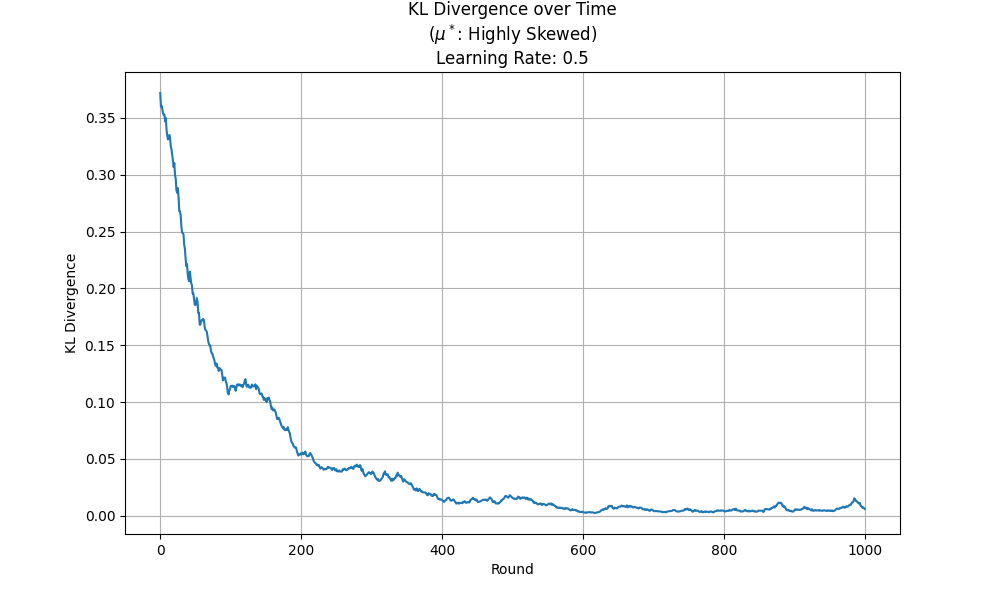}
        \caption{Batchsize is 100, number of rounds is 1000.}
        \label{subfig:random_b100_r100_LRconstant}
    \end{subfigure}
    \hfill
    \begin{subfigure}[b]{0.32\textwidth}
        \includegraphics[width=\textwidth]{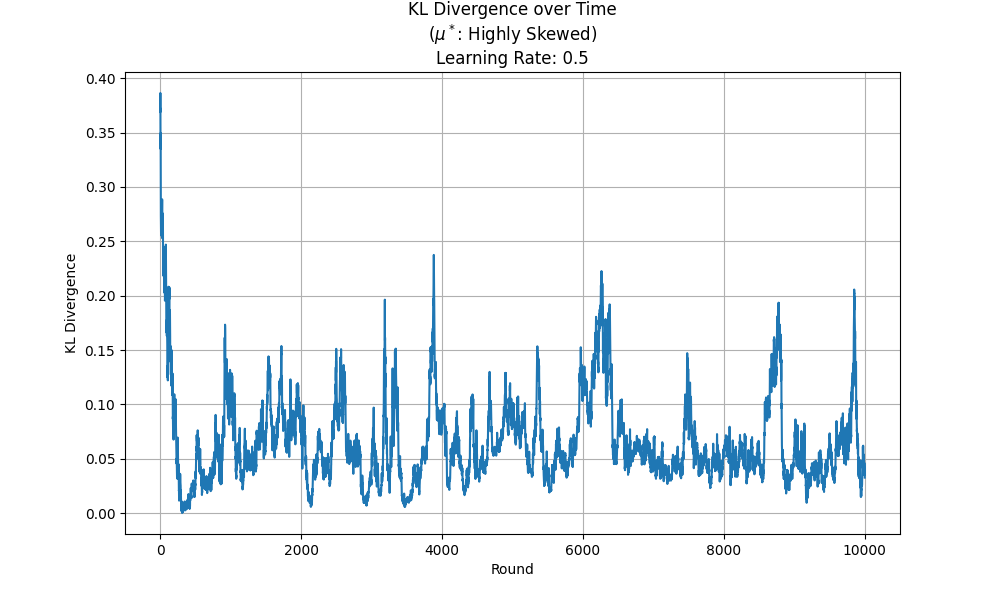}
        \caption{Batchsize is 10, number of rounds is 10000.}
        \label{subfig:random_b10_r1000_LRconstant}
    \end{subfigure}
    \hfill
    \begin{subfigure}[b]{0.32\textwidth}
        \includegraphics[width=\textwidth]{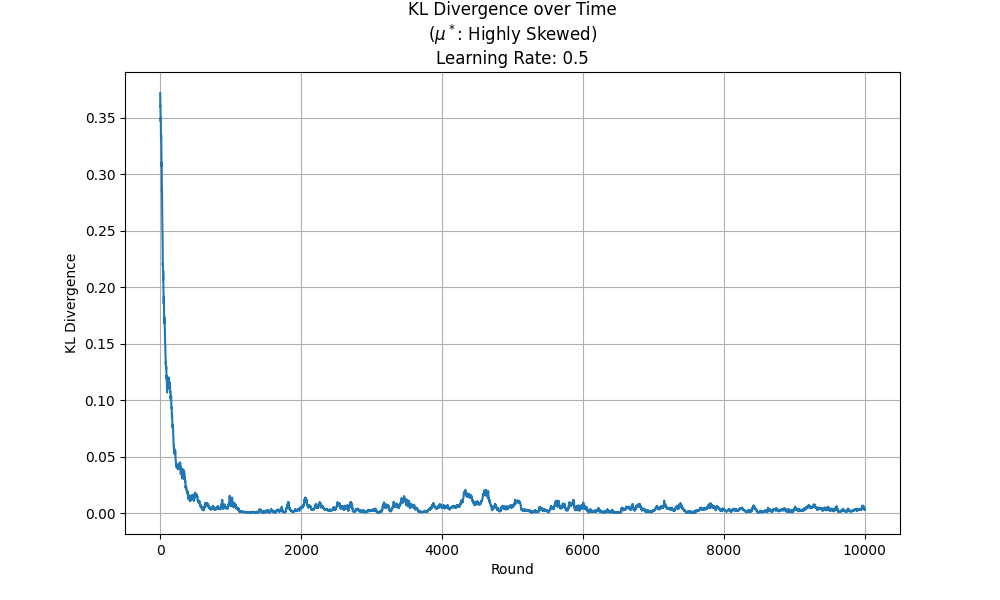}
        \caption{Batchsize is 100, number of rounds is 10000.}
        \label{subfig:random_b100_r1000_LRconstant}
    \end{subfigure}
    \caption{The underlying prior distribution $\mu^* \in \Delta^n$ is a randomly generated \textit{highly skewed} (i.e., $\alpha=1$, $\beta=5$) beta distribution, learning rate $\eta = \frac{1}{2}$.}
    \label{fig:three_images}
\end{figure}

\begin{figure}[htbp]
    \centering
    \begin{subfigure}[b]{0.32\textwidth}
        \includegraphics[width=\textwidth]{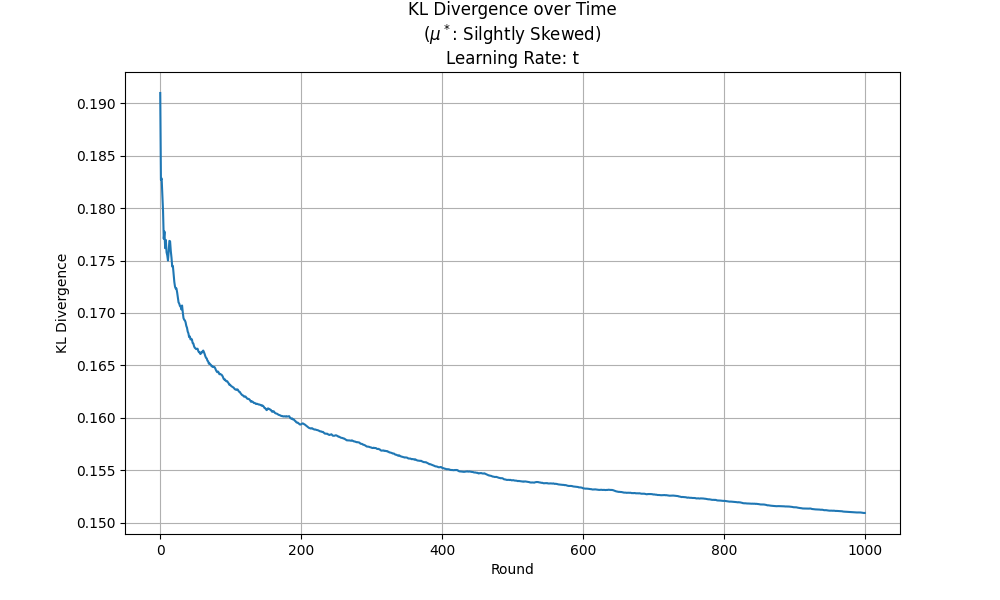}
        \caption{Batchsize is 100, number of rounds is 1000.}
        \label{subfig:random_b100_r100_LRt}
    \end{subfigure}
    \hfill
    \begin{subfigure}[b]{0.32\textwidth}
        \includegraphics[width=\textwidth]{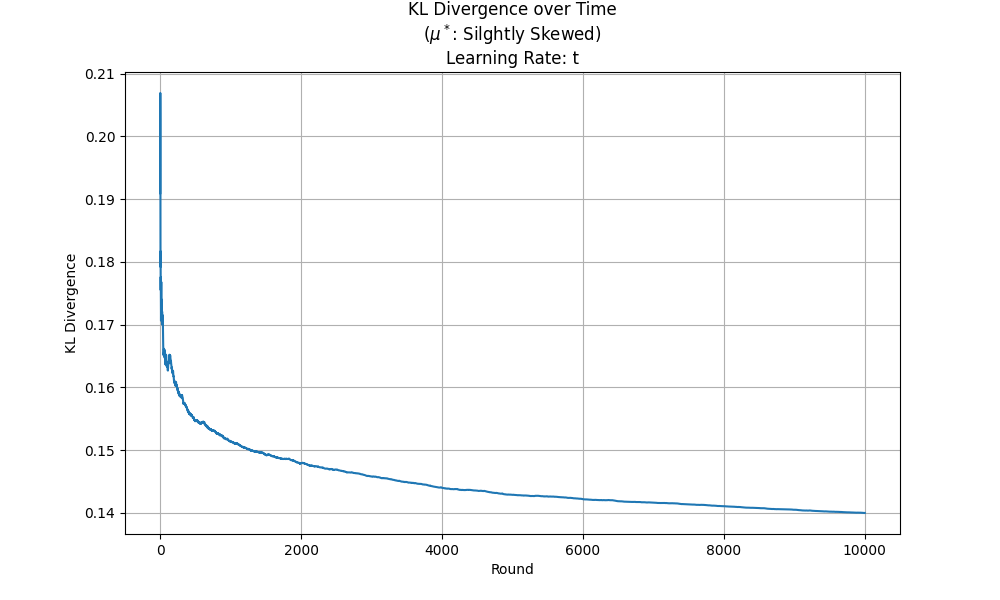}
        \caption{Batchsize is 10, number of rounds is 10000.}
        \label{subfig:random_b10_r1000_LRt}
    \end{subfigure}
    \hfill
    \begin{subfigure}[b]{0.32\textwidth}
        \includegraphics[width=\textwidth]{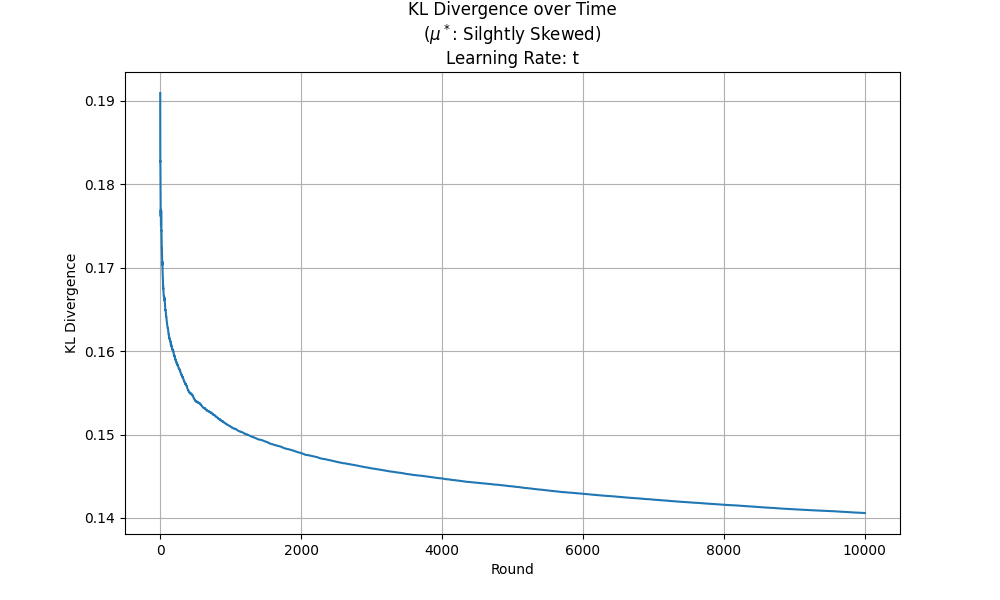}
        \caption{Batchsize is 100, number of rounds is 10000.}
        \label{subfig:random_b100_r1000_LRt}
    \end{subfigure}
    \caption{The underlying prior distribution $\mu^* \in \Delta^n$ is a randomly generated \textit{slightly skewed} (i.e., $\alpha=2$, $\beta=3$) beta distribution, learning rate $\eta = 1/(t+1)$.}
    \label{fig:three_images_main}
\end{figure}

\begin{figure}[htbp]
    \centering
    \begin{subfigure}[b]{0.32\textwidth}
        \includegraphics[width=\textwidth]{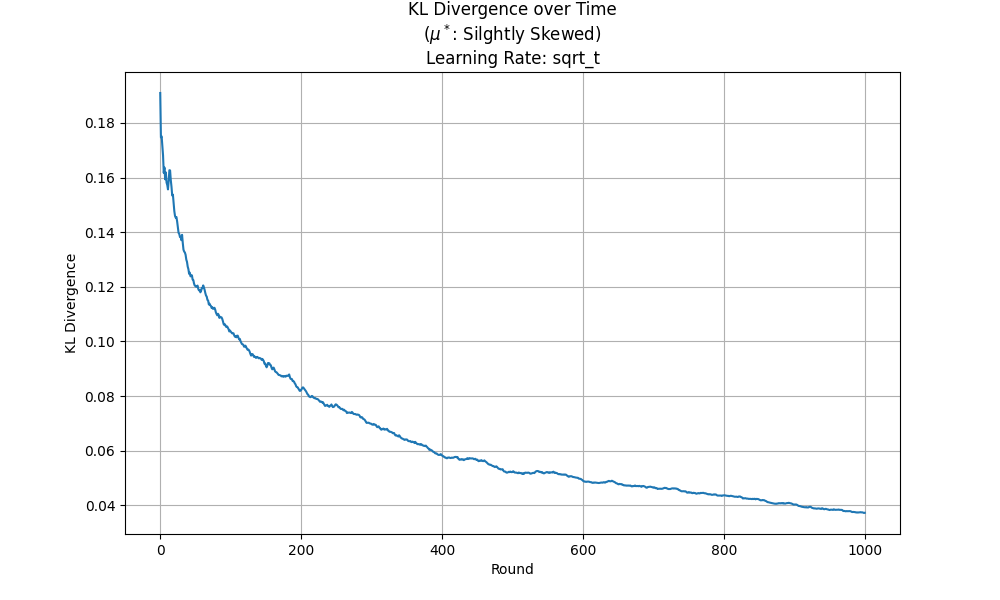}
        \caption{Batchsize is 100, number of rounds is 1000.}
        \label{subfig:random_b100_r100_LRt}
    \end{subfigure}
    \hfill
    \begin{subfigure}[b]{0.32\textwidth}
        \includegraphics[width=\textwidth]{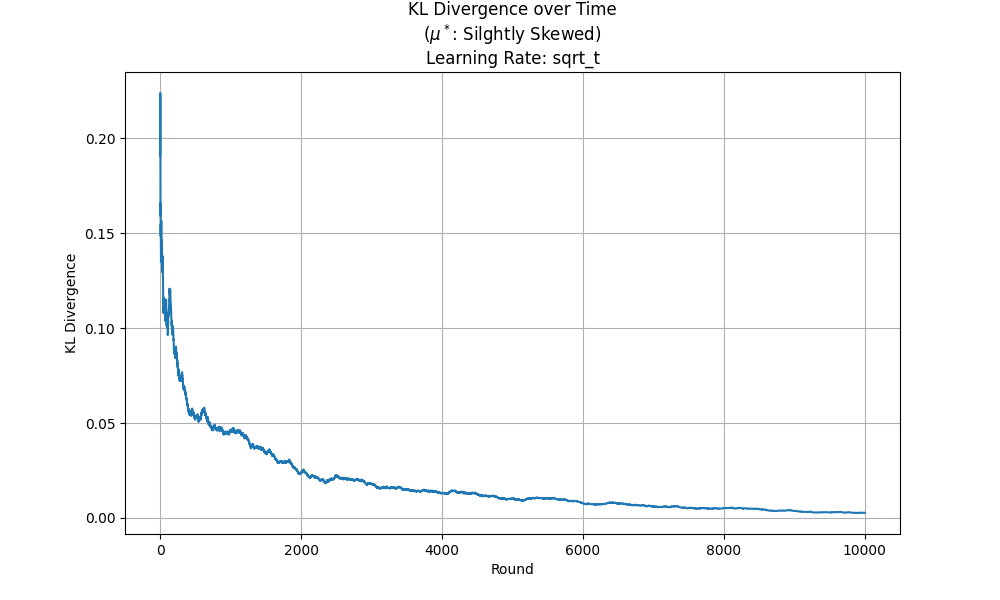}
        \caption{Batchsize is 10, number of rounds is 10000.}
        \label{subfig:random_b10_r1000_LRt}
    \end{subfigure}
    \hfill
    \begin{subfigure}[b]{0.32\textwidth}
        \includegraphics[width=\textwidth]{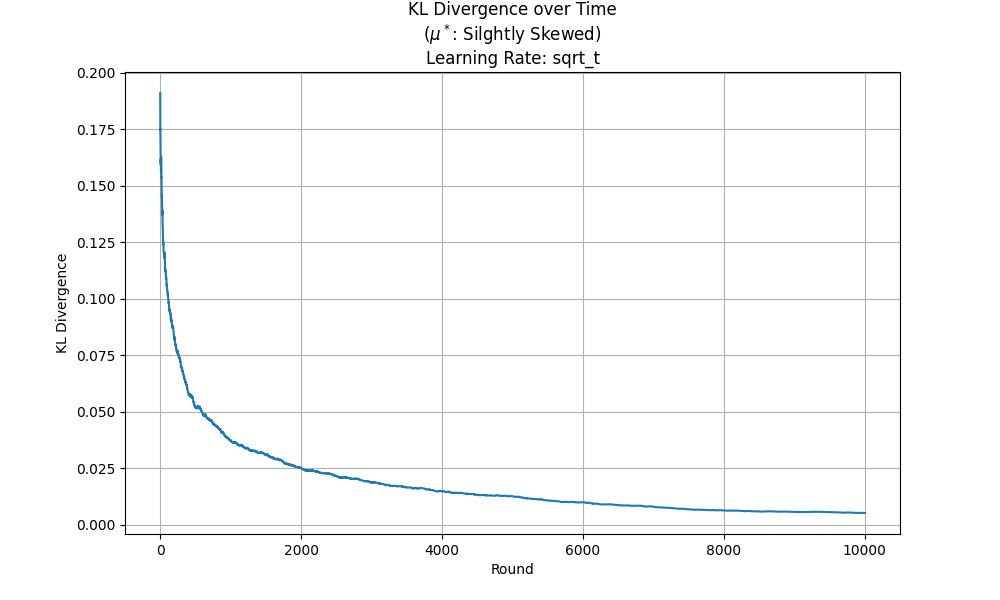}
        \caption{Batchsize is 100, number of rounds is 10000.}
        \label{subfig:random_b100_r1000_LRt}
    \end{subfigure}
    \caption{The underlying prior distribution $\mu^* \in \Delta^n$ is a randomly generated \textit{slightly skewed} (i.e., $\alpha=2$, $\beta=3$) beta distribution, learning rate $\eta = 1/\sqrt{t}$.}
    \label{fig:three_images_main}
\end{figure}

\begin{figure}[htbp]
    \centering
    \begin{subfigure}[b]{0.32\textwidth}
        \includegraphics[width=\textwidth]{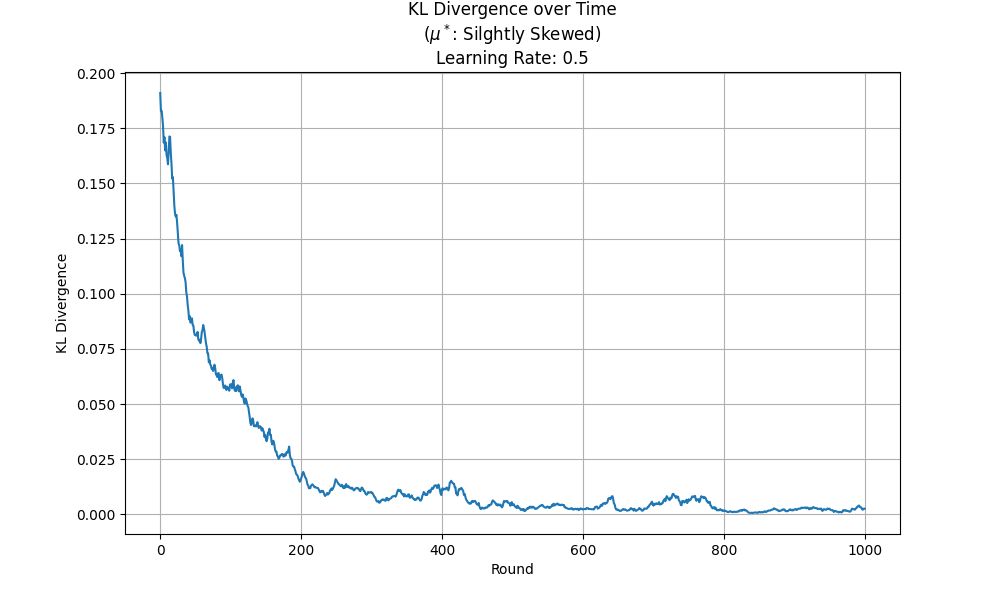}
        \caption{Batchsize is 100, number of rounds is 1000.}
        \label{subfig:random_b100_r100_LRconstant}
    \end{subfigure}
    \hfill
    \begin{subfigure}[b]{0.32\textwidth}
        \includegraphics[width=\textwidth]{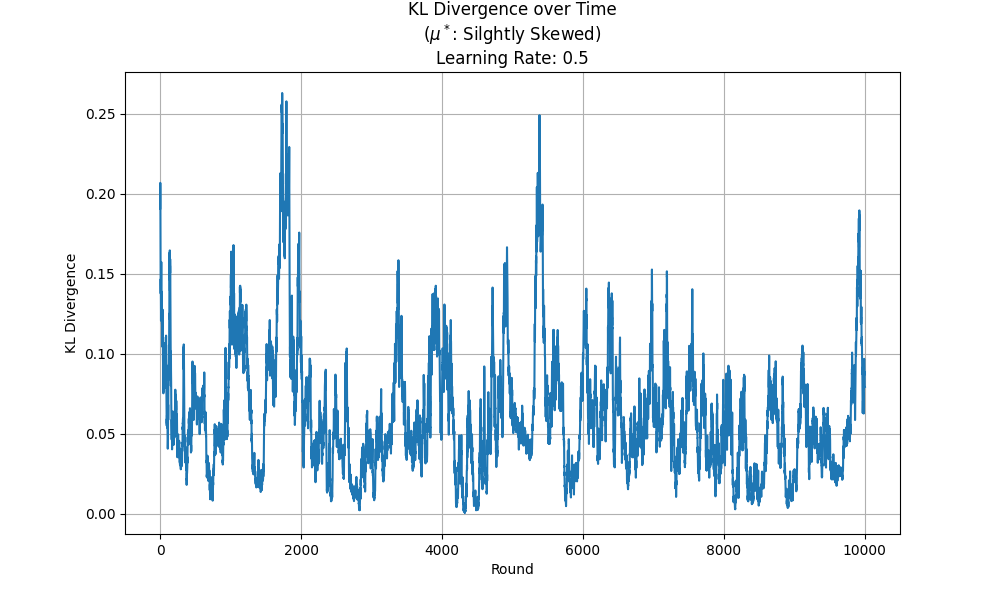}
        \caption{Batchsize is 10, number of rounds is 10000.}
        \label{subfig:random_b10_r1000_LRconstant}
    \end{subfigure}
    \hfill
    \begin{subfigure}[b]{0.32\textwidth}
        \includegraphics[width=\textwidth]{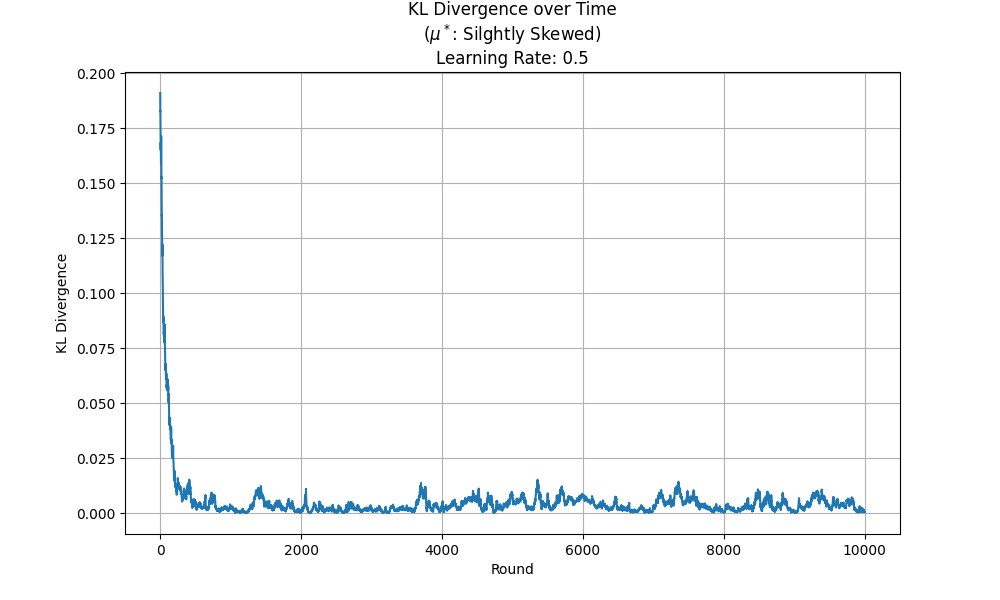}
        \caption{Batchsize is 100, number of rounds is 10000.}
        \label{subfig:random_b100_r1000_LRconstant}
    \end{subfigure}
    \caption{The underlying prior distribution $\mu^* \in \Delta^n$ is a randomly generated \textit{slightly skewed} (i.e., $\alpha=2$, $\beta=3$) beta distribution, learning rate $\eta = \frac{1}{2}$.}
    \label{fig:three_images}
\end{figure}

\end{document}